\author{
 Zirui Yan \qquad Arpan Mukherjee \qquad Burak Var\i c\i \qquad  Ali Tajer
 \thanks{The authors are with the Department of Electrical, Computer, and Systems Engineering, Rensselaer Polytechnic Institute, Troy, NY 12180.}
 }
\date{}
\theoremstyle{plain}
\newtheorem{theorem}{Theorem}
\newtheorem{lemma}[theorem]{Lemma}
\newtheorem{corollary}[theorem]{Corollary}
\theoremstyle{definition}
\theoremstyle{remark}
\newtheorem{remark}[theorem]{Remark}
\newcommand{\algonameLinSEMUCB}{{\rm LinSEM-UCB}}
\newcommand{\algonameRWLinSEMUCB}{{\rm Robust-LCB}}
\title{\bf \LARGE Robust Causal Bandits for Linear Models}
\begin{document}
\maketitle
\allowdisplaybreaks
\begin{abstract}
Sequential design of experiments for optimizing a reward function in causal systems can be effectively modeled by the sequential design of interventions in causal bandits (CBs). In the existing literature on CBs, a critical assumption is that the causal models remain constant over time. However, this assumption does not necessarily hold in complex systems, which constantly undergo temporal model fluctuations. This paper addresses the \emph{robustness} of CBs to such model fluctuations. The focus is on causal systems with linear structural equation models (SEMs). The SEMs and the \emph{time-varying} pre- and post-interventional statistical models are all \emph{unknown}. Cumulative regret is adopted as the design criteria, based on which the objective is to design a sequence of interventions that incur the smallest cumulative regret with respect to an oracle aware of the entire causal model and its fluctuations. First, it is established that the existing approaches fail to maintain regret sub-linearity with even a few instances of model deviation. Specifically, when the number of instances with model deviation is as few as $T^\frac{1}{2L}$, where $T$ is the time horizon and $L$ is the length of longest causal path in the graph, the existing algorithms will have linear regret in $T$. For instance, when $T=10^5$ and $L=3$, model deviations in $6$ out of $10^5$ instances result in a linear regret. Next, a robust CB algorithm is designed, and its regret is analyzed, where upper and information-theoretic lower bounds on the regret are established. Specifically, in a graph with $N$ nodes and maximum degree $d$, under a general measure of model deviation $C$, the cumulative regret is upper bounded by $\tilde{\mathcal{O}}(d^{L-\frac{1}{2}}(\sqrt{NT} + NC))$ and lower bounded by  $\Omega(d^{\frac{L}{2}-2}\max\{\sqrt{T}\; , \; d^2C\})$. Comparing these bounds establishes that the proposed algorithm achieves nearly optimal $\tilde\mcO(\sqrt{T})$ regret when $C$ is $o(\sqrt{T})$ and maintains sub-linear regret for a broader range of $C$.
\end{abstract}

%%%%%%%%%%%%%%%%%%%%%%%%%%%%%%%%%%%
% Motivation and Overview
%%%%%%%%%%%%%%%%%%%%%%%%%%%%%%%%%%%
\section{Motivation and Overview}
Causal bandits provide a rich framework to formalize and analyze the sequential experimental design in causal networks. Such design problems appear in applications that involve a network of interacting components that can causally influence one another. Examples include design of experiments in robotics~\cite{ahmed2021causalworld}, gene expression networks~\cite{badsha2019learning}, drug discovery~\cite{liu2020reinforcement}, and recommendation systems~\cite{zhao2022mitigating}. In causal systems, 
\emph{interventions} are experimental mechanisms that facilitate uncovering the cause-effect relationships in causal networks and distinguishing them from the conventional association measures \cite{pearl2009causality}. Sequential design of interventions has the key advantage of bringing data-adaptivity in designing the interventions, resulting in an overall reduced experiment cost and a faster process for forming inferential decisions. For instance, reprogramming the cell via gene perturbation experiments needs a careful design of sequential interventions~\cite{zhang2021matching} such that the outcome of one experiment guides the design of the subsequent ones. Causal bandits (CBs) provide a theoretically principled way of sequentially designing interventions to identify one that maximizes a utility for the causal network. Specifically, the canonical model for this utility is a function of the observations obtained from the DAG. The utility is chosen as the average value of a leaf node, which we denote by the \textit{reward node}. This is a model that is widely used in the causal bandit literature. 
Accordingly, each intervention mechanism is modeled by an arm, the value of the reward node under an intervention is called the reward, and the sequential selection of the interventions is abstracted as arm selection decisions.

The extent of information available about the causal model critically influences the design of CB algorithms. Broadly, there are two central pieces of information: the causal structure (topology) and the data's pre- and post-intervention statistical models. Depending on the availability of each of these two pieces, there are four possible CB settings. Designing CB algorithms and their improvement over standard bandit algorithms was first demonstrated in~\cite{lu2021causal, lattimore2016causal} for the settings where both the structure and interventional distributions are fully specified. Subsequently, the studies transitioned to more realistic settings and explored how to accommodate unknown structure and distribution information~\cite{lu2021causal, de2022causal,feng2023combinatorialwioutgraph}, (approximately) known distributions but unknown structure~\cite{bilodeau2022adaptively, konobeev2023causal}, and known structure and unknown distributions~\cite{maiti2022causal,feng2023combinatorial,varici2022causal}. 

\vspace{.1 in}
\noindent\textbf{Motivation.} We investigate CBs from a new perspective. The existing studies all focus on having a \emph{fixed causal model over time}, which applies to both directions in which the models are known or unknown. In reality, however, large complex causal systems undergo model fluctuations caused by a wide range of reasons such as non-stationarity in the system or heterogeneous data~\cite{huang2017behind, zhang2017causalA}, 
measurement errors~\cite{zhang2017causalB}, selection bias~\cite{zhang2016identifiability}, and missing data~\cite{tu2019causal}. Temporal model fluctuations can change the causal structure or statistical models. For instance, in drug discovery, there are multiple observable variables or representation nodes~\cite{li2017learning}, and the model fluctuations due to measurement errors can occur in both the observable variables or their weights to the representation nodes. However, the algorithm for time-invariant settings can be highly susceptible to model fluctuations. For instance, the CB algorithm in \cite{varici2022causal} enjoys a nearly optimal growth in the horizon $T$, i.e., $\mcO(\sqrt{T})$. Nevertheless, it will lose the optimal rate with even minuscule instances of model deviation. More specifically, as we will show in Section~\ref{sec:comparison}, the regret becomes linear in $T$ if the system experiences model deviations in $T^{\frac{1}{2L}}$ instances, where $L$ is the longest causal path in the graph. For even small values of $L$, $T^{\frac{1}{2L}}$ will be an extremely small fraction of the instances. For instance, when $T=10^5$ and $L=3$, model deviations in $6$ out of $10^5$ instances result in a linear regret.

\vspace{.1 in}
\noindent\textbf{Objectives.} We pursue four objectives. (1) Under relevant measures of model fluctuations, we design a robust CB algorithm to model deviations over time. (2) We characterize almost matching upper and lower bounds on the regret as a function of model deviation level, time horizon, graph parameters, and the size of the cardinality of the intervention space. (3) We analytically assess the robustness of the relevant existing algorithms, establishing their lack of robustness to model fluctuations. (4) We consider a general intervention setting in which each subset of nodes in the graph is intervenable, and each intervention induces a distinct reward distribution, resulting in an intervention space that grows exponentially with the size of the graph. As the final objective, we show that our algorithm circumvents the exponential growth of the achievable regret with the cardinality of the intervention space and breaks it down to linear growth. Our focus is on the causal graphs that are specified by a linear structural equation model (SEM). We assume that the structure (topology) is fixed but the statistical models undergo temporal fluctuations.

%%%%%%%%%%%%%%%% Contributions
\vspace{.1 in}
\noindent\textbf{Contribution \& key observations.} 
We design a novel weighting methodology for linear regression that takes advantage of the \emph{weighted} exploration bounce. This approach enables us to accurately accommodate the impact of model deviations in our regret analysis. Based on that, we characterize novel time-uniform confidence ellipsoid models for robust linear regression, which may be of broader interest in robust linear bandits. Furthermore, we propose a robust CB algorithm and analyze the compounding effect under the novel confidence ellipsoids, which offers insights into the behavior of our regret bound. When considering a \emph{known} budget of $C$ that captures the level of model deviations over time, the achievable regret is $\tilde{\mathcal{O}}(d^{L-\frac{1}{2}}(\sqrt{NT} + NC))$, where $N$ and $d$ are the number of nodes and maximum degree in the graph, respectively, and $L$ is the length of the largest causal path. Compared to the established lower bound $\Omega(d^{\frac{L}{2}-2}\max\{\sqrt{T}\; , \; d^2C\})$, we observe that both bounds scale polynomially in $d$ and exponentially in $L$. Furthermore, both bounds scale with $\sqrt{T}$ when the model faces $C=o(\sqrt{T})$ and linear in $C$ when $C=\Omega(\sqrt{T})$. This indicates that our algorithm enjoys \emph{nearly} optimal regret when $C=o(\sqrt{T})$, and it maintains a \emph{sub-linear} regret when the aggregate model deviation is sub-linear, which is the best possible regret order that any algorithm can achieve. The cost incurred to maintain such robustness is that the regret grows linearly with the deviation amount.

%%%%%%%%%%%%%%%%%%%%%%%%%%%%%%%%
%%%%%%%%%%%%%%%% Related work
%%%%%%%%%%%%%%%%%%%%%%%%%%%%%%%%
\vspace{.1 in}
\noindent\textbf{Related literature. } The earlier studies on CB algorithms assume that both the graph structure and the interventional distributions are known (fully or partially)~\cite{lu2020regret,nair2021budgeted}. More recent studies dispense with one or both of the assumptions. Despite their discrepancies in model and objectives, the common theme in all these studies is that they assumed a fixed causal model. Among the related work that does not make either assumption, ~\cite{de2022causal} incorporates causal learning algorithms to CBs but does not improve upon regret of non-causal bandit algorithms; ~\cite{lu2021causal} focuses on atomic interventions; ~\cite{bilodeau2022adaptively} achieves a regret bound that scales with the cardinality of the intervention space; and ~\cite{feng2023combinatorial} focuses on binary random variables. 

In a different direction, there exist studies that assume that the graph structure is known while the distributions are unknown. The relevant literature includes \cite{maiti2022causal,feng2023combinatorial}, which focuses on binary random variables. More recently, \cite{varici2022causal} focuses on linear systems and generalizes the results to the soft intervention settings, continuous random variables, and arbitrarily large intervention spaces. In parallel, \cite{sussex2022model} uses soft interventions and generalizes to non-linear models but limits to the Gaussian process SEMs in reproducing kernel Hilbert space (RKHS) and intervention space on controllable action variables. Finally, we note that even though we focus on linear SEMs, we observe that our reward is a non-linear function of the unknown parameters. Hence, our CB model fundamentally differs from linear bandits. This is the case even in the CB settings with a fixed model~\cite{varici2022causal}. Nevertheless, we briefly comment on the literature on linear bandits with model misspecification or contamination. These studies assume fixed (permanent) deviation, whereas, in our setting, the deviations can vary over time~\cite{ghosh2017misspecified,lattimore2020learning,foster2020adapting,krishnamurthy2021adapting}. Furthermore, in linear bandits with contamination, the contamination is imposed on the observed rewards~\cite{li2019stochastic,bogunovic2021stochastic,lee2021achieving,zhao2021linear,wei2022model,he2022nearly}, whereas we are focusing on model deviation. 

%%%%%%%%%%%%%%%% Notations
\vspace{.1 in}
\noindent\textbf{Notations.}
For $N\in \Z_{+}$, we define the set $[N]\triangleq\{1,\cdots,N\}$. The Euclidean norm of a vector $\bX\in\R^N$ is denoted by $\norm{\bX}$. For a subset $\mcS \subseteq [N]$, we define $\bX_{\mcS} \triangleq \bX\odot\mathbf{1}({\mcS})$, where $\odot$ denotes the Hadamard product and the vector $\mathbf{1}({\mcS})\in \{0,1\}^N$ has $1$s at the indices corresponding to $\mcS$. We denote the $i$-th column of matrix $\bA\in\R^{m\times n}$ by $[\bA]_i$, and the entry at $i$-th row and $j$-th column by $[\bA]_{i,j}$. The spectral norm of a matrix is denoted as $\norm{\bA}$. We further define the $\bA$-norm for positive semidefinite matrix $\bA$ as $\|\bX\|_{\bA}=\sqrt{\bX^{\top}\bA\bX}$.

%%%%%%%%%%%%%%%% Section Problem setting
\section{Causal Bandit Model}
\label{sec:intro}

\noindent\textbf{Causal graphical model.} Consider a directed acyclic graph (DAG) denoted by $\mathcal{G}(\mathcal{V},\mathcal{E})$, where $\mathcal{V}=[N]$ denotes the set of nodes, and $\mathcal{E}$ denotes the set of edges, where the ordered tuple $(i,j)\in\mathcal{E}$, indicates that there is a directed edge from $i$ to $j$. Each node $i\in[N]$ is associated with a random variable $X_i$. Accordingly, we define the vector $\mathbf{X}\triangleq [X_1,\cdots,X_N]^\top$. We consider a {\em linear} SEM, according to  which
\begin{equation}
\label{eq:linear model}
    \bX=\bB^{\top} \bX+\bepsilon \ ,
\end{equation}
where $\bB\in\R^{N\times N}$ is a strictly upper triangular edge weight matrix, and $\bepsilon\triangleq (\epsilon_1,\cdots,\epsilon_N)^\top$ denotes the exogenous noise variables, with a known mean $\bnu \triangleq \E[\bepsilon]$. The noise vector $\bepsilon$ is $1$-sub-Gaussian, and its Euclidean norm is upper bounded by $\norm{\bepsilon}\leq m_{\bepsilon}$. The graph's structure is assumed to be \emph{known}, while the weight matrix $\bB$ associated with the graph is \emph{unknown}. For any node $i\in[N]$, we denote the set of parents of~$i$ by ${\rm pa}(i)$. We denote the maximum in-degree of the graph by $d \triangleq \max_i\{|{\rm pa}(i)|\}$ and the length of the longest directed path in the graph by $L$.

\vspace{.1 in}
\noindent\textbf{Intervention model.} We consider \emph{soft interventions} on the graph nodes. A soft intervention on node $i\in[N]$ alters the conditional distribution of $X_i$ given its parents $\bX_{\rm pa}(i)$, i.e., $\P(X_i|\bX_{\rm pa}(i))$. An intervention can be applied to a subset of nodes simultaneously. If node $i\in\mcV$ is intervened, the impact of the intervention is a change in the weights of the edges incident on node $i$. These weights are embedded in $[\bB]_i$, i.e., the $i$-th column of $\bB$. We denote the post-intervention weight values by $[\bB^*]_i\neq [\bB]_i$. Accordingly, corresponding to the interventional weights, we define the interventional weight matrix $\bB^*$, which is composed of the columns $\{[\bB^*]_i:i\in[N]\}$. Note that soft interventions subsume commonly used stochastic \emph{hard} interventions in which a hard intervention on node $i$ sets $[\bB^*]_i=\boldsymbol{0}$.

Since we allow any arbitrary combination of nodes to be selected for concurrent intervention, there exist~$2^N$ interventional actions to choose from. We define $\mcA\;\triangleq\; 2^{\mcV}$ as the set of all possible interventions, i.e., all possible subsets of $[N]$. For any intervention $a\in\mcA$, we define the post-intervention weight matrix $\bB_a$ such that columns corresponding to the non-intervened nodes retain their observational values from $\bB$, and the columns corresponding to the intervened nodes change to their new interventional values from $\bB^*$. The columns of $\bB_a$ are specified by
\begin{equation}
\label{eq:Ba_construct}
    [\bB_a]_i\;\triangleq\;[\bB]_i\cdot\mathbbm{1}{\{i\notin a\}} + [\bB^*]_i\cdot\mathbbm{1}{\{i\in a\}}\ , 
\end{equation}
where $\mathbbm{1}$ denotes the indicator function. The interventions change the probability models of $\bX$. We define~$\P_a$ as the probability measure of $\bX$ under intervention $a\in\mcA$. For any given $\bB$ and $\bB^*$ we assume that $\norm{[\bB_a]_i} \leq m_B$. Without loss of generality, we assume $m_B = 1$. Due to the boundedness of noise $\bepsilon$ and column of $\bB_a$ matrices, there exists $m\in \R^+$ such that $\norm{\bX}\leq m$.

\vspace{.1 in}
\noindent\textbf{Causal bandit model.} Our objective is the sequential design of interventions. The set of possible interventions can be modeled as a multi-armed bandit setting with $2^N$ arms, one arm corresponding to each possible intervention. Following the canonical CB model~\cite{lattimore2016causal,lu2020regret}, we designate node $N$ (i.e., the node without a descendant) as the \emph{reward} node. Accordingly, $X_N$ specifies the reward value. We denote the expected reward collected under intervention $a\in\mcA$ by
\begin{equation}
\label{equ:mua}
    \mu_a \;\triangleq\; \E_a[X_N]\ ,
\end{equation}
where $\E_a$ denotes expectation under $\P_a$. We denote the intervention that yields the highest average reward by $a^* \triangleq \argmax_{a\in\mcA} \mu_a$; denote the sequence of interventions by $\{a(t)\in\mcA: t\in\N\}$; and denote the data generated at time $t$ and under intervention $a(t)$ by $\bX(t) = [X_1(t),\cdots,X_N(t)]^\top$. The learner's goal is to minimize the average cumulative regret over the time horizon $T$  with respect to the reward accumulated by an oracle aware of the systems model, interventional distributions, and model fluctuations. We define the expected accumulated regret as
\begin{equation}
    \E \left[R(T)\right]\;\triangleq\; T\mu_{a^*} - \sum\limits_{t=1}^T \E[X_N(t)]\ .
    \label{equ:regret}
\end{equation}
% \vspace{-0.5 in}
%%%%%%%%%Model Variations
\section{Temporal Model Fluctuations}
Due to the size and complexity of the graphical models that represent complex systems, assuming that the observational and interventional models $\bB$ and $\bB^*$ remain unchanged over time is a strong assumption. These models can undergo temporal variations due to various reasons, such as model misspecifications, stochastic behavior of the system, and adversarial influences. To account for such variations, we refer to $\bB_{a(t)}$ as the nominal model of the graph at time $t$ and denote the actual time-varying \emph{unknown} model by $\bD_{a(t)}$. Accordingly, we define the deviation of the actual model from the nominal model by
\begin{equation}
 \Delta_{a(t)}  \;\triangleq\; \bD_{a(t)} - \mathbf{B}_{a(t)} \ . 
\end{equation}
To quantify the impact of model deviations on the regret $R(T)$, we specify two measures that capture the extent of deviations. The first measure captures the maximum number of times each node deviates from the nominal model. The second measure provides a budget for the maximum deviation in the linear model that model deviations can inflict over time. Clearly, if the model of node $i$ undergoes deviation at time $t$ under intervention $a$, we have $\big\|\left[\Delta_{a(t)}\right]_i\big\|\neq 0$.

\vspace{.1 in}
\noindent\textbf{Measure 1: Deviation Frequency (DF).} This measure accounts for how frequently each node's model can deviate from its nominal model, and over a horizon $T$ it is defined as
\begin{equation}
C_{\rm DF}  \;\triangleq\; \max_{i\in[N]} \sum_{t=1}^T \max_{a(t)\in\mcA} \mathbbm{1}\left\{\big\|\left[\Delta_{a(t)}\right]_i\right\|\neq 0\big\}\ .
\end{equation}
This model is adopted from misspecified bandit literature~ \cite{lattimore2020learning}. To avoid unbounded deviations, we assume that the deviation inflicted on each node at any given time is bounded by a constant $m_c\in\R_+$, i.e., 
\begin{equation}
    \label{equ:maxdeviation}
    \max\limits_{i\in[N]} \max\limits_{t\in[T]} \max\limits_{a(t)\in\mcA}\big\|\left[\Delta_{a(t)}\right]_i\big\| \leq m_{\rm c}\ .
\end{equation}

\noindent\textbf{Measure 2: Aggregate Deviation (AD).} This measure quantifies the aggregate deviation over time. Specifically, we define the maximum aggregate deviation as
\begin{equation}
C_{\rm AD} \;\triangleq\; \max_{i\in[N]} \sum_{t=1}^T\max_{a(t)\in\mathcal{A}}\left\|\left[\Delta(t)\right]_i\right\|\ .
\end{equation} 
This measure of deviation is also standard in stochastic bandits~\cite{he2022nearly}, where the deviation budget is defined as the maximum deviation in the reward that the adversary can inflict over time. We will observe that $C_{\rm DF}$ and $C_{\rm AD}$ impact the regret results similarly. Hence, to unify the results and present them in a way that applies to both measures, we use $C$ to represent the level of model deviation. For measure 1, we define $C$ as the product of a constant factor $m_{\rm c}$ and $C_{\rm DF}$, while for measure 2, we define $C$ as $C_{\rm AD}$. We assume that the model deviation budgets specified by $C$ are known to the learner, allowing the CB algorithm to adapt to the varying levels of model deviation.

%%%%%%%%%%%%%% Algorithms
\section{\algonameRWLinSEMUCB{}~Algorithm}
\label{sec:algo}

In this section, we present the details of our algorithm and provide the performance guarantee (regret analysis) in Section~\ref{sec:regret}. We also provide theoretical comparisons to the existing algorithms designed for CBs with fixed models, establishing their lack of robustness against model variations.

%%%%%%%%%%%%%% RW-LinSEMUCB
\vspace{.1 in}
\noindent\textbf{Algorithm overview.} Identifying the best intervention hinges on determining which of the distributions $\{\P_a:a\in\mcA\}$ maximizes the expected reward. Nevertheless, these $2^N$ distributions are unknown. Therefore, a direct approach entails estimating these probability distributions, the complexity of which grows exponentially with $N$. To circumvent this, we leverage the fact that specifying these distributions has redundancies since all depend on the observational and interventional matrices $\bB$ and $\bB^*$. These matrices can be fully specified by $2Nd$ scalars, where $d$ is the maximum degree of the causal graph. Hence, at its core, our proposed approach aims to estimate these two matrices. 

We design an algorithm that has two intertwined key objectives. One pertains to the \emph{robust estimation} of matrices $\bB$ and $\bB^*$ when the observations are generated by the non-nominal models. For this purpose, we design a weighted ordinary least squares (W-OLS) estimator. The structure of the estimator and the associated confidence ellipsoids for the estimates are designed to circumvent model deviations effectively. The second objective is designing a decision rule for the sequential selection of the interventions over time. This sequential selection, naturally, is modeled as a multi-armed bandit problem. Therefore, we design an upper confidence bound (UCB)--based algorithm for the sequential selection of the interventions over time. Next, we present the details of the {\bf Robust} {\bf L}inear {\bf C}ausal {\bf B}andit (\algonameRWLinSEMUCB). The steps involved in this algorithm are summarized in Algorithm~\ref{alg:weighted_ucb_algorithm}.

\begin{algorithm}[ht]
\caption{\algonameRWLinSEMUCB}
\label{alg:weighted_ucb_algorithm}
\begin{algorithmic}[1]
   \STATE {\bfseries Input:} Horizon $T$, causal graph $\mcG$, action set $\mcA$, mean noise vector $\bnu$, deviation budget  $C$
   \STATE {\bfseries Initialization}: Initialize 
   $\bB(0) = \bB^{*}(0) =\mathbf{0}_{N \times N}$ and $\bV_{i}(0)= \bV^{*}_{i}(0) = \bI_N, \ \forall i \in [N] $. 
   \FOR{$t = 1, 2,\ldots,T$}
   \STATE Compute ${\rm UCB}_a(t)$ according to \eqref{eq:ucb_definition} for $a\in \mcA$.
   \STATE Pull $a(t) = \argmax_{a \in \mcA} {\rm UCB}_a(t)$ and observe $\bX(t)\!=\!(X_1(t),\dots, X_N(t))^\top$. 
   \FOR{$i \in \{1,\dots,N\}$}
        \STATE Set $w_{i}(t)$ as \eqref{equ:weightsAC}, update $[\bB(t)]_{i}$ according to \eqref{eq:estimate_obs_weighted} and update $[\bB^{*}(t)]_{i}$ according to \eqref{eq:estimate_int_weighted}.
   \ENDFOR
   \ENDFOR
\end{algorithmic}
\end{algorithm}

\vspace{.1 in}
\noindent\textbf{Countering model deviations.} Our approach to circumventing model deviations is to identify and filter out the samples generated by the non-nominal models. We refer to these samples as \emph{outlier samples}. This facilitates forming estimates for $\bB$ and $\bB^*$ based on the samples generated by the nominal models. Since the model deviations may happen on multiple nodes simultaneously, the \algonameRWLinSEMUCB\ is designed to identify the nodes undergoing deviation over time and discard the outlier samples generated by these nodes. Such filtration is implemented via assigning time-varying and data-adaptive weights to different nodes such that the weight assigned to node $i\in[N]$ at time $t\in \N$ balance two factors: the probability of node $i\in[N]$ undergoing deviation at $t$ and the contribution of that sample to the estimator. These weights, subsequently, control how the samples from different nodes contribute to estimating $\bB$ and $\bB^*$.

\vspace{.1 in}
\noindent\textbf{Robust estimation.} We design the {\em weighted} ordinary least squares (OLS) estimators for $\bB$ and $\bB^*$, which at time $t\in\N$ are denoted by $\bB(t)$ and $\bB^*(t)$, respectively. To estimate the observational weights $[\bB]_i$, we use the samples from instances at which node $i$ is not intervened. Conversely, to estimate the interventional weights $[\bB^*]_i$, we use the samples from the instances at which node $i$ is intervened. By defining $\{w_i(t)\in\R_+:i\in[N]\}$ as the set of weights assigned to the nodes at time $t\in\N$, $i$-th columns of these estimates are specified as follows. 
\begin{align}
[\bB(t)]_{i} &  \;\triangleq\;  [\bV_{i}(t)]^{-1}   \sum_{s \in [t] : i \notin a(s)} w_i(s)\bX_{\Pa(i)}(s) (X_i(s)-\nu_i) \ , \label{eq:estimate_obs_weighted} \\
\hspace{-0.15in}[\bB^{*}(t)]_i & \; \triangleq\; [\bV^{*}_{i}(t)]^{-1}  \sum_{s \in [t] : i \in a(s)}  w_i(s)\bX_{\Pa(i)}(s) (X_i(s)-\nu_i) \ , \label{eq:estimate_int_weighted}
\end{align}
where we have defined the \emph{weighted Gram matrices} as
\begin{align}
\bV_{i}(t) & \;\triangleq \sum_{s \in [t] : i \notin a(s)} w_i(s) \bX_{\Pa(i)}(s) \bX_{\Pa(i)}^\top(s) + \bI_N \ , \label{eq:define_V_obs_weighted} \\
\bV^{*}_{i}(t) & \;\triangleq\sum_{s \in [t] : i \in a(s)} w_i(s) \bX_{\Pa(i)}(s) \bX_{\Pa(i)}^\top(s)+\bI_N \  . \label{eq:define_V_int_weighted}
\end{align}
Furthermore, we define the matrices associated with the squared weights as
\begin{align}
\widetilde{\bV}_{i}(t) & \;\triangleq \sum_{s \in [t] : i \notin a(s)} w_i^2(s) \bX_{\Pa(i)}(s) \bX_{\Pa(i)}^\top(s) + \bI_N \ , \label{eq:define_V_obs_weighted_tilde} \\
\widetilde{\bV}^{*}_{i}(t) & \;\triangleq\sum_{s \in [t] : i \in a(s)} w_i^2 (s)\bX_{\Pa(i)}(s) \bX_{\Pa(i)}^\top(s)+\bI_N \  . \label{eq:define_V_int_weighted_tilde}
\end{align}
Similarly to \eqref{eq:Ba_construct}, we denote the relevant and Gram matrices for node $i$ under intervention $a \in \mcA$ by
\begin{align}
     \widetilde{\bV}_{i,a}(t) &\;\triangleq \;\mathbbm{1}{\{i \in a\}} \widetilde{\bV}^{*}_{i}(t) +  \mathbbm{1}{\{i \notin a\}} \widetilde{\bV}_{i}(t) \ . \label{eq:V_ita_tilde}
\end{align}
\textbf{Confidence ellipsoids.} After performing estimation in each round, we construct the confidence ellipsoids for the OLS estimators $\{\mcC_{i}(t):i\in[N]\}$ for the observational weights and $\{\mcC^*_{i}(t):i\in[N]\}$ for the interventional weights 
\begin{align}
    \mcC_{i}(t) & \; \triangleq \; \left\{\theta \in \mcB_1  :  \LARGE\|\theta - [\bB(t-1)]_{i}\LARGE\|_{\bV_i(t-1) [\widetilde{\bV}_i(t-1)]^{^{-1}} \bV_i(t-1)}  \leq \beta_{t} \right\}  , \label{eq:conf_obs} \\
    \mcC^*_{i}(t)  & \;\triangleq\; \left\{ \theta \in \mcB_1: \LARGE\|\theta - [\bB^{*}(t-1)]_{i}\LARGE\|_{\bV^*_i(t-1) [\widetilde{\bV}^*_i(t-1)]^{^{-1}} \bV^*_i(t-1)} \leq \beta_{t} \right\} \label{eq:conf_int} ,
\end{align}
where $\mcB_1$ is the unit ball in $\R^N$ and $\{\beta_{t}\in\R_+, t\in\N\}$ is a sequence of non-decreasing confidence radii that control the size of the confidence ellipsoids, which we will specify. Accordingly, we define the relevant confidence ellipsoid for node $i$ under intervention $a\in\mcA$ as 
\begin{equation}
    \mcC_{i,a}(t) \;\triangleq\;  \mathbbm{1}{\{i \in a\}}\  \mcC^{*}_{i}(t) + \mathbbm{1}{\{i \notin a\}}\  \mcC_{i}(t) \ . \label{eq:conf_C_iat}
\end{equation}
\textbf{Weight designs.} Designing the weights $\{w_i(t):i\in [N] \}$ at time $
t$ is instrumental in effectively winnowing out the outlier samples. We select the weights that bring the confidence radius $\beta_t$ down to nearly constant
\begin{equation}
    w_i(t)\;\triangleq\; \min\bigg\{\frac{1}{C} \; , \; \frac{1}{C\norm{\bX_{\Pa(i)}(t)}_{[\widetilde{\bV}_{i,a(t)}(t)]^{^{-1}}}}\bigg\}\ ,\label{equ:weightsAC}
\end{equation}
where the weights are inversely proportional to the norm $\|\bX_{\Pa(i)}(t)\|_{[\widetilde{\bV}_{i,a(t)}(t)]^{^{-1}}}$ and deviation budget $C$ , and they are truncated at $1/C$, which ensures that the weights are not arbitrarily large. We refer the term $\|\bX_{\Pa(i)}(t)\|_{[\widetilde{\bV}_{i,a(t)}(t)]^{^{-1}}}$ as {\em weighted exploration bonus}.
A higher exploration bonus means lower confidence in the sample. Setting the weights as the inverse of the exploration bonus avoids potentially significant regret caused by both the stochastic noise and model deviations. We scale the weights proportional to $1/C$ to use smaller weights when the model deviation level is higher.

\vspace{.1 in}
\noindent\textbf{Intervention selection.} We adopt a UCB-based rule for sequentially selecting the interventions. 
Specifically, at each time $t$, our algorithm selects the intervention that maximizes a UCB, defined as the maximum value of expected reward when the edge weights are in the confidence ellipsoids $\{\mcC_{i,a}(t),i\in[N]\}$, under that intervention. 
Recall the expected reward $\mu_a$ for $a\in\mcA$ defined in \eqref{equ:mua} is a function of the edge weights $\bB_a$, which can be decomposed according to the following lemma. 
\begin{lemma}{~\cite[Lemma 1]{varici2022causal}}\label{lem:expectedvalue}\label{lm:expected_reward}
 Consider a linear SEM $\mcG(\mcV,\mcE)$ with intervention $a\in2^\mcV$, whose weight matrix is denoted by $\bB_a\in\R^{N\times N}$. Furthermore, consider the function $f(\bA)\triangleq \sum_{\ell = 0}^L [\bA^\ell]_N$ for $\bA\in\R^{N\times N}$, where we denote $\bA^{\ell}$ as the $\ell$-th power of matrix $\bA$. We have
 \begin{equation} 
     \mu_a\;=\; \inner{f(\bB_a)}{\bnu}\ ,
\end{equation}
 where $\bnu= (\nu_1,\cdots,\nu_N)$, and $\nu_i= \E[\epsilon_i]$ denotes the mean of the noise vector corresponding to node $i\in [N]$.
\end{lemma}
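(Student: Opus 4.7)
The plan is to solve the linear SEM under intervention $a$ in closed form, exploit the nilpotency of $\bB_a$ induced by the DAG structure to truncate the resulting Neumann series at $\ell=L$, and then read off the $N$-th coordinate after taking expectations.

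First, I would observe that under intervention $a\in\mcA$, the model in \eqref{eq:linear model} becomes $\bX=\bB_a^\top\bX+\bepsilon$. Since soft interventions modify only the weights on existing edges and preserve the DAG topology, $\bB_a$ inherits the strictly upper-triangular structure of $\bB$, so $\bI-\bB_a^\top$ is invertible and
\begin{equation*}
\bX \;=\; (\bI-\bB_a^\top)^{-1}\bepsilon\ .
\end{equation*}

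Next, I would establish the key structural fact: $[\bB_a^\ell]_{i,j}$ admits a weighted-path interpretation, equal to the sum over all directed length-$\ell$ paths from $i$ to $j$ of the products of the edge weights along the path. Since $\mcG$ has no directed path longer than $L$, the matrix $\bB_a$ is nilpotent with $\bB_a^\ell=\mathbf{0}$ for every $\ell\geq L+1$, which truncates the Neumann series
\begin{equation*}
(\bI-\bB_a^\top)^{-1} \;=\; \sum_{\ell=0}^{\infty}(\bB_a^\top)^\ell \;=\; \sum_{\ell=0}^{L}(\bB_a^\ell)^\top\ ,
\end{equation*}
where I have used that transpose commutes with matrix powers.

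Combining these with $X_N=\mathbf{e}_N^\top\bX$ and linearity of expectation yields
\begin{equation*}
\mu_a \;=\; \mathbf{e}_N^\top\sum_{\ell=0}^{L}(\bB_a^\ell)^\top\bnu \;=\; \sum_{\ell=0}^{L}\inner{[\bB_a^\ell]_N}{\bnu} \;=\; \inner{f(\bB_a)}{\bnu}\ ,
\end{equation*}
using that $\mathbf{e}_N^\top(\bB_a^\ell)^\top=([\bB_a^\ell]_N)^\top$ is the transpose of the $N$-th column of $\bB_a^\ell$, and then pulling the finite sum inside the inner product. The main obstacle, though minor, is the path-length truncation argument showing $\bB_a^{L+1}=\mathbf{0}$; the remainder is bookkeeping to keep the transposes and column conventions consistent with the definition $f(\bA)=\sum_{\ell=0}^{L}[\bA^\ell]_N$. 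Once the DAG nilpotency is in hand, the identity falls out directly.
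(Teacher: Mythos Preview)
Your argument is correct and is the standard way to establish this identity. Note that the present paper does not supply its own proof of this lemma; it is imported verbatim from \cite[Lemma~1]{varici2022causal}, so there is no in-paper argument to compare against. Your inversion-plus-nilpotency route is exactly the natural derivation: solve the linear SEM, truncate the Neumann series using that $\bB_a^{L+1}=\mathbf{0}$ because $[\bB_a^\ell]_{i,j}$ sums weighted length-$\ell$ paths and $\mcG$ has none beyond length $L$, and project onto $\mathbf{e}_N$.
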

Thus, for any intervention $a\in\mcA$, the UCB is naturally defined as
\begin{equation}
\label{eq:ucb_definition}
    {\rm UCB}_a(t) \;\triangleq \max_{\{\forall i\in[N]: [\Theta]_i\in \mcC_{i,a}(t)\}}\inner{f(\Theta)}{\bnu }\ .
\end{equation}
Based on the UCB in~\eqref{eq:ucb_definition}, at time $t$, our algorithm selects the intervention that maximizes the UCB, 
\begin{equation}
    a(t)\;=\;\argmax_{a \in \mcA} {\rm UCB}_a(t)\ .
\end{equation}

%%%%%%%%%%%%%% %%%%%%%%%%%%%% 
%%%%%%%%%%%%%% Regret analysis
%%%%%%%%%%%%%% %%%%%%%%%%%%%% 
\section{Regret Analysis}
\label{sec:regret}
In this section, we present the performance guarantees for the proposed \algonameRWLinSEMUCB~ algorithm. We first provide the upper bound on the average cumulative regret in Section~\ref{sec:upperbound}. We also establish a minimax lower bound in Section~\ref{sec:lowerbound} that shows the tightness of our upper bound. By comparing our regret with that of \algonameLinSEMUCB~in Section~\ref{sec:comparison}, we evaluate the robustness of our algorithm. 

\subsection{Regret Upper Bound}
\label{sec:upperbound}
In order to derive the upper bound, we begin by providing a 
concentration bound for the W-OLS estimator. Notably, we investigate a vector norm that differs from existing work in robust bandits. This norm was first investigated in \cite{russac2019weighted} under the non-stationary setting, and our investigation builds on this to provide novel insights into the robust behavior of the W-OLS.

\begin{lemma}[Estimator concentration]
\label{lem:beta_Tindeviation}
Under a deviation budget $C$, with a probability at least $1-2\delta$, for any node $i\in[N]$ and $t\geq 0$, we have
\begin{align}
        \|[\bB(t)]_i-\bB_i\|_{\bV_i(t) [\widetilde{\bV}_i(t)]^{-1} \bV_i(t)}  \;&\leq\; \beta_t(\delta)\ ,\\
        \mbox{and}\quad\|[\bB^*(t)]_i-\bB^*_i\|_{\bV_i^*(t) [\widetilde{\bV}_i^*(t)]^{-1} \bV_i^*(t)} \;&\leq\; \beta_t(\delta)\ ,
    \end{align}
where we have defined $\beta_t(\delta)\;\triangleq\;\sqrt{2\log\left(1/\delta\right)+d\log\left(1+m^2t/dC^2\right)} + 1 + m$ .
\end{lemma}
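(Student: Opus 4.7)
\textbf{Proof plan for Lemma~\ref{lem:beta_Tindeviation}.}
The plan is to treat $[\bB(t)]_i - [\bB]_i$ (and symmetrically $[\bB^*(t)]_i - [\bB^*]_i$) as a weighted-OLS residual and decompose it into a martingale noise piece, a model-deviation bias piece, and a regularization piece. The crucial identity is
\[
\|v\|_{\bV_i(t)[\widetilde{\bV}_i(t)]^{-1}\bV_i(t)} \;=\; \|\bV_i(t)\,v\|_{[\widetilde{\bV}_i(t)]^{-1}}\ ,
\]
so after multiplying through by $\bV_i(t)$ we can work in the simpler norm $\|\cdot\|_{[\widetilde{\bV}_i(t)]^{-1}}$. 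Using \eqref{eq:linear model} together with the true data-generating matrix $\bD_{a(s)} = \bB_{a(s)} + \Delta_{a(s)}$ and restricting to $s$ with $i\notin a(s)$ (so that $[\bB_{a(s)}]_i=[\bB]_i$), one gets
\[
\bV_i(t)\bigl([\bB(t)]_i - [\bB]_i\bigr) \;=\; S_i(t) + D_i(t) - [\bB]_i\ ,
\]
where $S_i(t) \triangleq \sum_{s:i\notin a(s)} w_i(s)\bX_{\Pa(i)}(s)\tilde\epsilon_i(s)$ is a weighted martingale sum, $D_i(t) \triangleq \sum_{s:i\notin a(s)} w_i(s)\bX_{\Pa(i)}(s)\bX_{\Pa(i)}^\top(s)[\Delta_{a(s)}]_i$ is the deviation bias, and $-[\bB]_i$ is the artifact of the $\bI_N$ regularizer in $\bV_i(t)$. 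I would apply the triangle inequality in $\|\cdot\|_{[\widetilde{\bV}_i(t)]^{-1}}$ and bound the three pieces separately.

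For the noise piece, the weights $w_i(s)$ are predictable with respect to the natural filtration and the noise $\tilde\epsilon_i(s)=\epsilon_i(s)-\nu_i$ is conditionally zero-mean and $1$-sub-Gaussian. I would invoke a self-normalized concentration inequality in the style of Abbasi-Yadkori et al., in the weighted form used by Russac et al.~\cite{russac2019weighted}, with the key feature that the conditional variance proxy of $w_i(s)\tilde\epsilon_i(s)$ is $w_i^2(s)$, so the natural normalizer is $\widetilde{\bV}_i(t)$ rather than $\bV_i(t)$. The method-of-mixtures argument then yields
\[
\|S_i(t)\|_{[\widetilde{\bV}_i(t)]^{-1}} \;\leq\; \sqrt{2\log(1/\delta) + \log\det(\widetilde{\bV}_i(t))}\ ,
\]
with probability at least $1-\delta$. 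The determinant-trace inequality together with $w_i(s)\leq 1/C$, $\|\bX_{\Pa(i)}(s)\|\leq m$, and the fact that $\bX_{\Pa(i)}(s)$ is effectively $d$-dimensional gives $\log\det(\widetilde{\bV}_i(t))\leq d\log(1+m^2t/(dC^2))$, matching the leading term of $\beta_t(\delta)$.

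For the deviation piece I would exploit the weight design in \eqref{equ:weightsAC}: since $w_i(s)\|\bX_{\Pa(i)}(s)\|_{[\widetilde{\bV}_{i,a(s)}(s)]^{-1}}\leq 1/C$, and since the restriction to $i\notin a(s)$ gives $\widetilde{\bV}_{i,a(s)}(s)=\widetilde{\bV}_i(s)\preceq \widetilde{\bV}_i(t)$ (so $\|\bX_{\Pa(i)}(s)\|_{[\widetilde{\bV}_i(t)]^{-1}}\leq\|\bX_{\Pa(i)}(s)\|_{[\widetilde{\bV}_{i,a(s)}(s)]^{-1}}$), the triangle inequality and Cauchy-Schwarz yield
\[
\|D_i(t)\|_{[\widetilde{\bV}_i(t)]^{-1}} \;\leq\; \sum_{s} m\,\|[\Delta_{a(s)}]_i\|\cdot w_i(s)\|\bX_{\Pa(i)}(s)\|_{[\widetilde{\bV}_{i,a(s)}(s)]^{-1}} \;\leq\; \frac{m}{C}\sum_{s}\|[\Delta_{a(s)}]_i\| \;\leq\; m\ ,
\]
by the definition of the deviation budget $C$. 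Finally, the regularization term contributes $\|[\bB]_i\|_{[\widetilde{\bV}_i(t)]^{-1}}\leq\|[\bB]_i\|\leq 1$ since $\widetilde{\bV}_i(t)\succeq \bI_N$. Summing gives $\beta_t(\delta)$. The identical argument applied to $[\bB^*(t)]_i$ with $i\in a(s)$ throughout, and a union bound over the two events, yields the confidence $1-2\delta$.

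The main obstacle is Step 3: obtaining the self-normalized bound in which the normalizer $\widetilde{\bV}_i(t)$ uses the \emph{squared} weights while the Gram matrix appearing in the estimator uses the single-power weights. This mismatch is exactly what forces the non-standard ellipsoid shape $\bV_i(t)[\widetilde{\bV}_i(t)]^{-1}\bV_i(t)$ in \eqref{eq:conf_obs}–\eqref{eq:conf_int} and must be justified by a careful supermartingale/mixture construction; once it is in place, the tracking of the deviation bias via the weight-design inequality is the elegant step that hands us the $O(m)$ bias instead of something growing with $C$.
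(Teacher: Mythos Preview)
Your proposal is correct and follows essentially the same route as the paper's proof: the paper likewise decomposes the error into a stochastic-plus-regularization piece (handled via the weighted self-normalized bound of Russac et al.~\cite{russac2019weighted}, yielding the $1+\sqrt{2\log(1/\delta)+d\log(1+m^2t/(dC^2))}$ term) and a fluctuation piece (bounded by $m$ via exactly your triangle-inequality/monotonicity/weight-design argument). The only cosmetic difference is that the paper packages noise and regularization together through an auxiliary ``deviation-free'' estimator $\widehat{\bB}(t)$ rather than separating them into $S_i(t)$ and $-[\bB]_i$ as you do.
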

\begin{proof}
We will provide the proof corresponding to the observational weights $[\bB(t)]_i$, while the proof for the interventional weights $[\bB^*(t)]_i$ follows similarly.
For any node $i\in[N]$, we decompose the error in estimation $\|[\bB(t)]_i-\bB_i\|_{\bV_i(t) [\widetilde{\bV}_i(t)]^{-1} \bV_i(t)}$ for $t\geq 0$ as follows.
\begin{align}
   \hspace{-0.5 cm}  &\|[\bB(t)]_i-[\bB]_i\|_{\bV_i(t) [\widetilde{\bV}_i(t)]^{-1} \bV_i(t)}\\
     &\hspace{-0.5 cm}=\biggl\|[\bV_{i}(t)]^{-1}  \!\!\!\!\! \sum_{s\in[t],i \notin a(t)}\! \!\!\! w_i(s) \bX_{\Pa(i)}(s) \left[ \bX^{\top}_{\Pa(i)}(s) [\bD(t)]_i+\epsilon_{i}(s)-\nu_i \right]-[\bB]_i\biggl\|_{\bV_i(t) [\widetilde{\bV}_i(t)]^{-1} \bV_i(t)}\\
     &\hspace{-0.5 cm}\leq\underbrace{\biggl\| [\widehat{\bB}(t)]_i-[\bB]_i\biggl\|_{\bV_i(t) [\widetilde{\bV}_i(t)]^{-1} \bV_i(t)}}_{I_1\text{: Stochastic and regularization error}} +\underbrace{\left\|\sum_{s\in[t],i\notin a(t)}\!\!  \bX_{\Pa(i)}(s) \bX^{\top}_{\Pa(i)}(s)[\Delta(s)]_i\right\|_{[\widetilde{\bV}_{i}(t)]^{-1}}}_{I_2\text{: Fluctuation error}}\label{equ:errorDecomposition_RW} \ .
\end{align}
where $\widehat{\bB}(t)$ refers to the auxiliary estimators which correspond to the ridge regression estimator but with the removal of deviation's impact on the output, i.e.,
\begin{equation}
    [\widehat{\bB}(t)]_i = [\bV_{i}(t)]^{-1}  \!\!\!\! \sum_{s\in[t],i \notin a(t)} \!\!\!\! w_i(s) \bX_{\Pa(i)}(s) \left[ \bX^{\top}_{\Pa(i)}(s)[\bB]_i+\epsilon_{i}(s)-\nu_i \right]\ .
\end{equation}
The stochastic and regularization errors can be bounded by the following lemma.

\begin{lemma}
\label{lem:beta_T_timeinvariant}
For all node $i\in[N]$, with probability at least $1-\delta$, $\forall t\geq 0$, we have
\begin{align}
   I_1&=\left\|[\widehat{\bB}(t)]_i-[\bB]_i\right\|_{\bV_i(t) [\widetilde{\bV}_i(t)]^{-1} \bV_i(t)} \\
   &\leq  1+ \sqrt{2 \log (1 / \delta)+d \log \left(1+\frac{m^2 \sum_{s\in[t-1],i \notin a(t)} w_i^2(s)}{d}\right)} \ .
\end{align}
\end{lemma}
\begin{proof}
    Note that the weights $w_i(t)$ are predictable, i.e., $\mcF_i(t-1)$ measurable, if the $\sigma$-algebra is defined as $\mcF_i(t)=\sigma(\bX_{\Pa(i)}(1), \epsilon_i(1), \bX_{\Pa(i)}(2), \epsilon_i(2),\cdots,\bX_{\Pa(i)}(t) , \epsilon_i(t), \bX_{\Pa(i)}(t+1))$ similarly to the one used in~\cite{abbasi2011improved}. This modification of the filtration allows weights to depend on the current value of $\bX_{\Pa(i)}(t)$. Then the lemma results from \cite[Theorem 1]{russac2019weighted} with $\mu_t=\lambda_t=1$. 
\end{proof}
Since $w_i(s)\leq \frac{1}{C}$ for all $s\in[t]$ and $i\in[N]$, we can further upper bound the stochastic and regularization error as follows.
\begin{align}
      I_1  &\leq 1+ \sqrt{2 \log (1 / \delta)+d \log \left(1+\frac{m^2 t}{d C^2}\right)} \ .
\end{align}
Now we need to bound the fluctuation error $I_2$, which is bounded as follows.
\begin{align}
    I_2&=\left\|[\widetilde{\bV}_{i}(t)]^{-1/2} \sum_{s\in[t],i\notin a(t)} w_{i}(s) \bX_{\Pa(i)}(s) \bX^{\top}_{\Pa(i)}(s)[\Delta_\bB(s)]_i\right\|\\
    \label{eq: E1}
    &\leq \sum_{s\in[t],i\notin a(t)} w_{i}(s) \left\|[\widetilde{\bV}_{i}(t)]^{-1/2}  \bX_{\Pa(i)}(s) \bX^{\top}_{\Pa(i)}(s)[\Delta_\bB(s)]_i\right\|\\
    \label{eq: E2}
    &= \sum_{s\in[t],i\notin a(t)}  w_{i}(s) \left\|[\widetilde{\bV}_{i}(t)]^{-1/2}  \bX_{\Pa(i)}(s)\right\| \left| \bX^{\top}_{\Pa(i)}(s)[\Delta_\bB(s)]_i\right|\\
    \label{eq: E3}
    &\leq m  \sum_{s\in[t],i\notin a(t)} w_{i}(s)\norm{\Delta_\bB(s)]_i}  \left\| \bX_{\Pa(i)}(s)\right\|_{[\widetilde{\bV}_{i}(t)]^{-1}}\\
    \label{eq: E4}
    &\leq m \sum_{s\in[t],i\notin a(t)} w_{i}(s) \norm{\Delta_\bB(s)]_i}  \left\| \bX_{\Pa(i)}(s)\right\|_{[\widetilde{\bV}_{i}(s)]^{-1}}\\
    &\leq m \ , \label{UCB:boundI2}
\end{align}
where~\eqref{eq: E1} follows from the triangle inequality, \eqref{eq: E3} follows from the fact that $\norm{\bX_{\Pa(i)}(s)}\leq m$, \eqref{eq: E4} holds since we have $\norm{x}_{[\widetilde{\bV}_{i}(t)]^{-1}}\leq \norm{x}_{[\widetilde{\bV}_{i}(s)]^{-1}}$ for any $s\in[t]$ and $x\in\R^{N}$, and~(\ref{UCB:boundI2}) is obtained using the definition of the sequence of weights $\{w_i(s) : s\in[t]\}$. Finally, substituting the results of Lemma~\ref{lem:beta_T_timeinvariant} and \eqref{UCB:boundI2}, with probability at least $1-\delta$, for all $t\geq 0$, we have
\begin{equation}
\label{equ:obs_ellipoisd}
    \|[\bB(t)]_i-\bB_i\|_{\bV_i(t) [\widetilde{\bV}_i(t)]^{-1} \bV_i(t)}\leq 1 + m+ \sqrt{2\log\left(1/\delta\right)+d\log\left(1+m^2t/dC^2\right)} \ .
\end{equation}
Similarly, for the estimators for interventional weights, with probability at least $1-\delta$, for all $t\geq 0$, we have
\begin{equation}
\label{equ:int_ellipoisd}
    \|[\bB^*(t)]_i-\bB^*_i\|_{\bV^*_i(t) [\widetilde{\bV}^*_i(t)]^{-1} \bV^*_i(t)}\leq 1 + m+ \sqrt{2\log\left(1/\delta\right) +d\log\left(1+m^2t/dC^2\right)} \ .
\end{equation}
Combining the results in \eqref{equ:obs_ellipoisd} and \eqref{equ:int_ellipoisd} we complete the proof.
\end{proof}
The previous lemma offers high probability error bounds for estimators. Due to the causal structure, these errors accumulate and propagate along the causal path, leading to the reward node $N$. Consequently, we analyze the compounding impacts of estimation errors and model deviations. This analysis involves examining the eigenvalues of the weighted Gram matrices $\bV_{i,a(t)}(t)$ and $\widetilde{\bV}_{i,a(t)}(t)$. We introduce the subsequent lemma to show a bound on the accumulated estimation errors on the reward node with proof provided in Section~\ref{proof:lm:bound_l_paths}. 
\begin{lemma} \label{lm:bound_l_paths}
For any given intervention $a\in\mcA$ matrices $\bA \in\R^{N\times N}$ and $\bM_i\in\R^{N\times N}$ for all $i \in [N]$ , define
\begin{equation}
\label{eq:def_delta_lemma}
    \Delta_{\bA} \;\triangleq\; \bA - \bB_a\ , \quad \mbox{and} \quad \Delta_{\bA}^{(\ell)} \;\triangleq\; \bA^{\ell} - \bB_a^{\ell}\ .
\end{equation}
If $\bA$ shares the same support with $\bB_a$, $\bM_i \succeq  \bI$ and $[\bM_i]_j=[\bM_i^{\top}]_j=\be_i$ if $[\bB]_{j,i}=0$, and if the following bound holds
\begin{equation}
\norm{[\Delta_{\bA}]_{i}}_{\bM_{i}}\leq~\beta \ ,\end{equation}
then for all $\ell \in [L]$, we have 
\begin{equation}
\norm{\big[\Delta_{\bA}^{(\ell)}\big]_{N}} < d^{\frac{\ell-1}{2}} (\beta+1)^{\ell} \max_{i \in [N]} \lminn{\bM_{i}}{-\frac{1}{2}}  \ .
\end{equation}
\end{lemma}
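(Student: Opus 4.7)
The plan is to prove the statement by induction on $\ell$, with a strengthened hypothesis that handles every column uniformly: for all $\ell \in [L]$ and all $i \in [N]$,
\begin{equation*}
\|[\Delta_{\bA}^{(\ell)}]_i\| \;<\; d^{(\ell-1)/2}(\beta+1)^{\ell} R, \qquad \text{where } R \;\triangleq\; \max_{j\in[N]} \lambda_{\min}(\bM_j)^{-1/2}.
\end{equation*}
The lemma then follows by specializing to $i=N$. Because $\bM_j \succeq \bI$ forces $\lambda_{\min}(\bM_j)\geq 1$ and hence $R \leq 1$, the base case $\ell=1$ is immediate:
\begin{equation*}
\|[\Delta_{\bA}]_i\| \;\leq\; \|[\Delta_{\bA}]_i\|_{\bM_i}/\sqrt{\lambda_{\min}(\bM_i)} \;\leq\; \beta R \;<\; (\beta+1)R.
\end{equation*}

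For the inductive step I would use the telescoping decomposition
\begin{equation*}
\bA^{\ell} - \bB_a^{\ell} \;=\; \bA^{\ell-1}\Delta_{\bA} + \Delta_{\bA}^{(\ell-1)} \bB_a,
\end{equation*}
so that $[\Delta_{\bA}^{(\ell)}]_i = \bA^{\ell-1}[\Delta_{\bA}]_i + \Delta_{\bA}^{(\ell-1)}[\bB_a]_i$, and bound the two resulting terms separately. A preliminary auxiliary estimate is needed: $\|[\bA^{m}]_k\| \leq d^{(m-1)/2}(\beta+1)^{m}$ for every $k$ and every $m\geq 1$. This is proved by the same kind of induction. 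Since $\bA$ shares the support of $\bB_a$, each column $[\bA]_j$ lives on $\Pa(j)$ with $|\Pa(j)|\leq d$, and $\|[\bA]_j\|\leq \|[\bB_a]_j\|+\|[\Delta_{\bA}]_j\|\leq 1+\beta R\leq \beta+1$. Writing $[\bA^{m}]_k = \bA^{m-1}[\bA]_k = \sum_{j\in\Pa(k)} A_{jk}\,[\bA^{m-1}]_j$ and applying Cauchy--Schwarz across $\Pa(k)$ yields $\|[\bA^{m}]_k\|^{2} \leq d\,\|[\bA]_k\|^{2}\,\max_j\|[\bA^{m-1}]_j\|^{2}$, which iterates from the exact bound $\|[\bA]_j\|\leq \beta+1$ at $m=1$ to the claimed estimate.

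Applying the same triangle-inequality-plus-Cauchy--Schwarz pattern to $\bA^{\ell-1}[\Delta_{\bA}]_i$ and $\Delta_{\bA}^{(\ell-1)}[\bB_a]_i$, and exploiting that both $[\Delta_{\bA}]_i$ and $[\bB_a]_i$ are supported on $\Pa(i)$, gives
\begin{align*}
\|\bA^{\ell-1}[\Delta_{\bA}]_i\| &\;\leq\; \max_k\|[\bA^{\ell-1}]_k\|\cdot\sqrt{d}\cdot\|[\Delta_{\bA}]_i\| \;\leq\; d^{(\ell-1)/2}(\beta+1)^{\ell-1}\beta R, \\
\|\Delta_{\bA}^{(\ell-1)}[\bB_a]_i\| &\;\leq\; \max_k\|[\Delta_{\bA}^{(\ell-1)}]_k\|\cdot\sqrt{d}\cdot\|[\bB_a]_i\| \;<\; d^{(\ell-1)/2}(\beta+1)^{\ell-1} R,
\end{align*}
where the first line combines the auxiliary bound with $\|[\Delta_{\bA}]_i\|\leq \beta R$, and the second combines the strict inductive hypothesis with $\|[\bB_a]_i\|\leq 1$. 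Summing the two and invoking $R\leq 1$ to collapse the factors gives
\begin{equation*}
\|[\Delta_{\bA}^{(\ell)}]_i\| \;<\; d^{(\ell-1)/2}R\bigl[(\beta+1)^{\ell-1}\beta+(\beta+1)^{\ell-1}\bigr] \;=\; d^{(\ell-1)/2}(\beta+1)^{\ell}R,
\end{equation*}
closing the induction.

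The main obstacle is pinning the multiplicative constant to exactly $(\beta+1)^{\ell}$ and preserving the strict inequality. Without $\bM_j\succeq\bI$ (equivalently $R\leq 1$), the column bound on $\|[\bA]_j\|$ would enter as $1+\beta R$ and propagate as a stray $(1+\beta R)^{\ell-1}$ factor incompatible with the target; it is precisely $R\leq 1$ that collapses this to $(\beta+1)^{\ell-1}$ at the combination step. Strictness is preserved by chaining the strict base $\|[\Delta_{\bA}]_i\|\leq \beta R<(\beta+1)R$ with a strict invocation of the inductive hypothesis in the bound on the second term.
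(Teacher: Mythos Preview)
Your proof is correct and takes a genuinely different route from the paper's.  The paper expands $\bA^{\ell}=(\bB_a+\Delta_{\bA})^{\ell}$ fully, groups the $2^{\ell}-1$ cross terms into sets $\mcH_{\ell,k}$ according to how many $\Delta_{\bA}$ factors appear, proves by induction on $\ell$ that every $\bH\in\mcH_{\ell,k}$ satisfies $\|[\bH]_i\|\leq d^{(\ell-1)/2}(\beta\lambda)^{k}$, and then sums via the binomial identity $\sum_{k=1}^{\ell}\binom{\ell}{k}(\beta\lambda)^{k}<\lambda(\beta+1)^{\ell}$ (using $\lambda\leq 1$).  You instead use the two-term telescope $\bA^{\ell}-\bB_a^{\ell}=\bA^{\ell-1}\Delta_{\bA}+\Delta_{\bA}^{(\ell-1)}\bB_a$, carry a strengthened column-uniform hypothesis, and handle the first term with an auxiliary bound on $\|[\bA^{m}]_k\|$.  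Both arguments exploit the same $\Pa(i)$-sparsity together with Cauchy--Schwarz to generate the $d^{(\ell-1)/2}$ factor, and both invoke $\bM_j\succeq\bI$ (i.e.\ $R\leq 1$) at the combination step.  Your telescoping is more economical---two terms per level instead of exponentially many---while the paper's finer bookkeeping yields the sharper per-term estimate $d^{(\ell-1)/2}(\beta\lambda)^{k}$, which could matter if one wanted to exploit $\lambda\ll 1$ more aggressively than the stated lemma requires.
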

Next, building on the estimation error bounds established in Lemma~\ref{lem:beta_Tindeviation} and the compounding error bounds established in Lemma~\ref{lm:bound_l_paths}, we derive a unified regret bound that applies to both measures of model deviation.  It is noteworthy that the analysis is distinctly different from that in the time-invariant setting since we are facing model fluctuations, for which we have designed novel weights for the W-OLS estimator.

\begin{theorem}[Regret upper bound]
\label{thm:measure2}
 Under a deviation budget $C$, by setting $\delta=\frac{1}{2NT}$ and $\beta_t(\delta)$ according to Lemma~\ref{lem:beta_Tindeviation}, the average cumulative regret of \algonameRWLinSEMUCB\  is upper bounded by
\begin{equation}
\label{equ:upperbound}
    \E \left[R(T)\right]\leq 2m+\tilde{\mathcal{O}}\left(d^{L-\frac{1}{2}}(\sqrt{NT} + NC)\right) \ .
\end{equation}
\end{theorem}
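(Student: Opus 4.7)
The plan is to run a standard optimism-based UCB regret decomposition, augmented to absorb the model deviations through the weighted-regression machinery that produces Lemma~\ref{lem:beta_Tindeviation}. First, I would establish a high-probability ``good event'' via that lemma and a union bound over $i \in [N]$ and $t \in [T]$ with $\delta = 1/(2NT)$: the good event, which holds with probability at least $1 - 1/T$, guarantees that every true column $[\bB]_i$ (resp.\ $[\bB^*]_i$) lies in $\mcC_i(t)$ (resp.\ $\mcC_i^*(t)$) for all $t$, so both $\bB_{a(t)}$ and $\bB_{a^*}$ are consistent with the ellipsoids $\{\mcC_{i,a}(t)\}$. The complementary event contributes at most $2m$ to $\E[R(T)]$ because the per-round regret is bounded by $2m$; this yields the additive $2m$ term in~\eqref{equ:upperbound}.

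Second, on the good event, the UCB selection rule together with Lemma~\ref{lm:expected_reward} produces a $\Theta^*(t)$ whose $i$-th column lies in $\mcC_{i,a(t)}(t)$ such that $\mu_{a^*} \leq \inner{f(\Theta^*(t))}{\bnu}$. Since the data at time~$t$ are generated under the deviated model $\bD_{a(t)}$, the actual conditional expectation satisfies $\E[X_N(t)] = \inner{f(\bD_{a(t)})}{\bnu}$, so the instantaneous regret decomposes as
\begin{equation}
r_t \;\leq\; \inner{f(\Theta^*(t)) - f(\bB_{a(t)})}{\bnu} \;+\; \inner{f(\bB_{a(t)}) - f(\bD_{a(t)})}{\bnu} \ .
\end{equation}
I would then apply Lemma~\ref{lm:bound_l_paths} separately to each term: for the estimation part, with $\bA = \Theta^*(t)$, $\bM_i = \bV_{i,a(t)}(t-1)[\widetilde{\bV}_{i,a(t)}(t-1)]^{-1}\bV_{i,a(t)}(t-1)$, and $\beta = \beta_t(\delta)$; and for the deviation part, with $\bM_i = \bI$ and $\beta_i = \|[\Delta_{a(t)}]_i\|$. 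A key bookkeeping step is to rewrite the resulting $\max_i \lambda_{\min}(\bM_{i,a(t)}(t-1))^{-1/2}$ factor in terms of a weighted exploration bonus of the form $\|\bX_{\Pa(i)}(t)\|_{[\widetilde{\bV}_{i,a(t)}(t-1)]^{-1}}$ so that a potential-function argument can be invoked next.

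Third, summing over~$t$ handles the two contributions separately. For the estimation error, a weighted elliptical-potential argument in the spirit of \cite{abbasi2011improved,russac2019weighted}, applied to the log-determinants of $\widetilde{\bV}_{i,a(t)}$, gives $\sum_t \|\bX_{\Pa(i)}(t)\|_{[\widetilde{\bV}_{i,a(t)}(t)]^{-1}} = \tilde{\mcO}(\sqrt{T})$ per node; Cauchy-Schwarz across the $N$ per-node contributions delivers the $\sqrt{NT}$ scaling, and multiplication by the compounding factor $d^{(L-1)/2}\beta_t^L$ (with $\beta_t^L$ absorbed into $\tilde{\mcO}$ because $\beta_t$ is only polylogarithmic in $T$) produces the $\tilde{\mcO}(d^{L-1/2}\sqrt{NT})$ contribution. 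For the deviation term, summing over $t$ invokes the budget $\max_i \sum_t \max_{a(t)} \|[\Delta_{a(t)}]_i\| \leq C$ and aggregates across the $N$ nodes in the compounding step of Lemma~\ref{lm:bound_l_paths}, yielding the $d^{L-1/2} N C$ contribution.

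The principal obstacle lies in the interplay between the nonstandard confidence-ellipsoid norm $\|\cdot\|_{\bV\widetilde{\bV}^{-1}\bV}$, the truncated data-adaptive weights $w_i(t) = \min\{1/C,\, 1/(C\|\bX_{\Pa(i)}(t)\|_{[\widetilde{\bV}_{i,a(t)}(t)]^{-1}})\}$, and the $L$-fold compounding of errors along causal paths. The weights are specifically calibrated so that the fluctuation error $I_2$ in the proof of Lemma~\ref{lem:beta_Tindeviation} telescopes to the constant~$m$ and so that $\beta_t(\delta)$ only grows polylogarithmically in $T/C^2$; the chief new work in the proof is to modify the Abbasi-Yadkori potential argument to accommodate the mismatched weight powers appearing in $\bV$ (linear in $w_i$) versus $\widetilde{\bV}$ (quadratic in $w_i$), and to verify that the modified potential still collapses to the $\tilde{\mcO}(\sqrt{NT})$ dependence after the $L$-level compounding from Lemma~\ref{lm:bound_l_paths}, without inflating the $d$-exponent beyond $L-\tfrac{1}{2}$.
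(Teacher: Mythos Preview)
Your high-level skeleton---good event via Lemma~\ref{lem:beta_Tindeviation} with $\delta=1/(2NT)$, the $2m$ contribution from the bad event, optimism to replace $\bB_{a^*}$ by the maximizer $\Theta^*(t)$, and then Lemma~\ref{lm:bound_l_paths} for the $L$-fold compounding---matches the paper. The substantive gap is the step you flag yourself as ``bookkeeping'': rewriting $\max_i \lambda_{\min}(\bM_i)^{-1/2}$ as a weighted exploration bonus $\|\bX_{\Pa(i)}(t)\|_{[\widetilde{\bV}_{i,a(t)}(t-1)]^{-1}}$ so as to invoke an elliptical-potential lemma. This conversion is not valid. Lemma~\ref{lm:bound_l_paths} outputs the worst-direction uncertainty $\lambda_{\min}(\bV\widetilde{\bV}^{-1}\bV)^{-1/2}\leq \sqrt{\lambda_{\max}(\widetilde{\bV})}/\lambda_{\min}(\bV)$, which in general dominates any single-round bonus $\|x\|_{\widetilde{\bV}^{-1}}$; indeed, if the observed $\bX_{\Pa(i)}$ vectors concentrate in a proper subspace then $\lambda_{\min}(\bV)=1$ forever and $\sum_t \lambda_{\min}^{-1/2}=T$, so no potential-function argument alone can deliver $\tilde\mcO(\sqrt{T})$. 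The paper circumvents this by invoking the well-conditioning of the second-moment matrix ($\kappa_{\min}>0$) together with matrix-Bernstein concentration for the weighted data matrices $\bU_{i,a}(t)$ and $\widetilde{\bU}_{i,a}(t)$: conditional on a second good event $\mcE_\cup^{\mathsf c}$ one has $\lambda(t)\triangleq\sqrt{\lambda_{\max}(\widetilde{\bV}_{i,a(t)}(t))}/\lambda_{\min}(\bV_{i,a(t)}(t))\leq h(N_{i,a(t)}(t))$ for an explicit $h$, and the cumulative regret reduces to $\sum_t \max_i h(N_{i,a(t)}(t))$, which is then controlled by a pigeonhole over the $2N$ (node, intervened/not) pairs and an integral bound $G_\tau(T/2N)$.

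Relatedly, your attribution of the $NC$ term is off. The paper does not split the instantaneous regret into an estimation piece and a deviation piece $\inner{f(\bB_{a(t)})-f(\bD_{a(t)})}{\bnu}$; it works directly with $\mu_{a^*}-\mu_{a(t)}$, so the entire regret flows through the estimation-error channel. The $NC$ contribution arises inside the integral bound for $\sum_t \lambda(t)$: because the weights satisfy $w_i(t)\in[1/(Cm),1/C]$, one has $\kappa'_{\min}\asymp 1/C$ and $\tilde\kappa_{\max}\asymp 1/C^2$, so after rescaling $h(x)=\sqrt{x\kappa_{\max}+\gamma_n+C^2}/(x\kappa_{\min}/m-\gamma_n+C)$ the identity-regularizer $\bI_N$ manifests as an additive $C$ in the denominator; splitting off the resulting $C/(x\kappa_{\min}/m-\alpha m^2\sqrt{x})$ term in $g(x)$ and integrating yields the $\tilde\mcO(NC)$ contribution after the $2N$-fold pigeonhole. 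Your proposed route---applying Lemma~\ref{lm:bound_l_paths} with $\bM_i=\bI$ and summing $\|[\Delta_{a(t)}]_i\|$---would give at best $d^{(L-1)/2}\cdot\mathrm{poly}(m_c)\cdot C$ without the factor $N$, and in any case does not account for where the paper's $NC$ actually comes from.
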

\noindent\emph{Proof sketch.} Characterizing the regret bound involves decomposing the regret into two parts, depending on whether the concentration inequality in Lemma~\ref{lem:beta_Tindeviation} holds. If the concentration inequality does not hold, the regret is upper bound by a constant term. Otherwise, the estimation errors are upper bounded in Lemma~\ref{lem:beta_Tindeviation} with their compounding effects on the reward node $N$ upper bounded by Lemma~\ref{lm:bound_l_paths}.  Next, the behavior of the eigenvalues of weighted Gram matrices  $\bV_{i,a(t)}(t)$ and $\widetilde{\bV}_{i,a(t)}$ is investigated to reach the final result. See Appendix~\ref{proof:thm:measure2} for the detailed proof.

The regret bound derived in Theorem~\ref{thm:measure2} can be decomposed into two parts. The first term 
recovers the order of the optimal rate achieved in the time-invariant setting. The second term captures the impact of model deviation on the regret bound, that is, the cost of handling unknown model fluctuations. Next, we present a lower bound that confirms the tightness and optimality of our upper bound.  

\begin{remark}
    Robust-LCB works when replacing the deviation budget $C$ with an upper bound $\bar C$. All the  analyses and performance guarantees remains valid when $C$ is substituted with its upper bound $\bar C$.
\end{remark}

\subsection{Regret Lower Bound}
\label{sec:lowerbound}
For our analysis of the lower bound, we first show the tightness of the second term of the upper bound that captures the model deviation level $C$. Building on this insight, we then combine this with the existing lower bound in the time-invariant setting to show the tightness of our regret bound.
\begin{theorem}[Regret lower bound]
    \label{thm:lowerbound}
    For any degree $d$ and graph length $L$, for any algorithm with knowledge of $C$, there exists a bandit instance such that the expected regret is at least
    \begin{equation}
         \E\left[R(T)\right]\geq \Omega(d^{\frac{L}{2}}C)\ .
    \end{equation}
\end{theorem}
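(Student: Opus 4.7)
The plan is to prove the lower bound via an explicit two-step construction that combines the compounding amplification of linear SEMs with an adversarial deviation schedule that exhausts the budget $C$. First, I would fix a layered DAG $\mcG$ of width $d$ and depth $L$ with reward node $N$, and construct a nominal SEM of the form $\widetilde{\bB}_a = \bar{\bB}_a + m_c\,\mathbf{U}_a$, where (a) the baseline $(\bar\bB,\bar\bB^*)$ is chosen so that $\bar\bB_a$ does not depend on $a$ (concretely, $\bar\bB = \bar\bB^*$), making every intervention yield the same reward $\mu_{\bar{\bB}}(a) = \mu_0$; and (b) the perturbation pattern $\mathbf{U}$, supported on the interventional columns only, is engineered so that there is a target intervention $a^\star$ with $\mu_{\widetilde{\bB}}(a^\star) - \mu_0 \geq c\,d^{L/2}\,m_c$. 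By Lemma~\ref{lm:bound_l_paths}, a per-column perturbation of order $m_c$ can propagate into $X_N$ with a multiplicative factor of up to $d^{(L-1)/2}$, so choosing baseline entries along the longest path to be of order $d^{-1/2}$ (keeping products of $L$ layer matrices $O(1)$) and aligning $\mathbf{U}$ with the longest path saturates this amplification while respecting $\|[\widetilde{\bB}_a]_i\|\le m_B = 1$.

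Next, I would exhibit an adversary that on a window of $T' = \lfloor C/m_c\rfloor = \Theta(C)$ rounds chooses the deviation schedule $\Delta_{a(t)} = -m_c\,\mathbf{U}_{a(t)}$, so that the actual matrix $\bD_{a(t)} = \widetilde{\bB}_{a(t)} + \Delta_{a(t)} = \bar{\bB}_{a(t)}$ is the baseline regardless of the algorithm's action, and hence the conditional expectation of the reward equals $\mu_{\bar\bB}(a(t)) = \mu_0$. Each such round has per-column deviation exactly $m_c$ (consistent with \eqref{equ:maxdeviation}) and the total aggregate consumption is $m_c\,T' = \Theta(C)$, exhausting the budget.

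The lower bound on the regret now follows directly: under $\widetilde{\bB}$, the nominal optimum is $a^\star$ with $\mu_{a^\star} = \mu_0 + c\,d^{L/2}\,m_c$, and on every round in the masking window and for every choice of $a(t)$ by the algorithm, $\E[X_N(t)] = \mu_0$. Hence the per-round regret contribution is $\mu_{a^\star} - \E[X_N(t)] = c\,d^{L/2}\,m_c$, independent of the learner's action. Summing over the $T'$ masking rounds (the remaining rounds contribute nonnegatively to the regret) yields $\E[R(T)] \geq c\,d^{L/2}\,m_c\cdot T' = \Omega(d^{L/2}\,C)$, as claimed. Since $\E[X_N(t)]$ is independent of $a(t)$ on the masking window, the ``for any algorithm'' clause is satisfied by construction: no use of the known budget $C$ can help the algorithm avoid a penalty inflicted regardless of its decisions.

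The principal obstacle is the explicit specification of the instance $(\bar{\bB}, \mathbf{U})$ meeting all desiderata simultaneously: (i) $\bar\bB = \bar\bB^*$ so that the baseline reward is arm-independent, (ii) $\mathbf{U}$ producing a nominal optimum $a^\star$ whose gap is of order $d^{L/2}\,m_c$, and (iii) every column of $\widetilde{\bB}_a$ staying inside the unit ball while $\|\bX\|\leq m$ holds with high probability. The natural choice is to take all baseline entries along the longest path to be of order $d^{-1/2}$ (so products of $L$ layer matrices have $O(1)$ operator norm) and to choose $\mathbf{U}^*$ as a rank-one pattern aligned with the gradient of the path-sum $f(\bB_a)$ of Lemma~\ref{lem:expectedvalue} in the direction of the longest causal path, which by direct calculation has magnitude $\Theta(d^{L/2})$ in the worst-case direction; verifying that all these constraints are simultaneously compatible is the technical heart of the argument.
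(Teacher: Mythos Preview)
Your approach is correct and takes a genuinely different---and in fact more streamlined---route than the paper. The paper constructs \emph{two} bandit instances $\mcI_1,\mcI_2$ on the same hierarchical graph that differ only in the sign of the interventional weights into the reward node; it then has the adversary zero out those weights for the first $C$ rounds so the learner sees pure noise and cannot distinguish the two instances, and concludes via a pigeonhole/averaging step that the learner must make the ``wrong'' choice on the reward node at least $C/2$ times in one of the instances, yielding the $\Omega(d^{L/2}C)$ bound after a Markov inequality.

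Your argument dispenses with the two-instance averaging entirely. By designing a single nominal model $\widetilde\bB$ whose optimal arm sits $\Theta(d^{L/2}m_c)$ above a baseline $\mu_0$, and having the adversary cancel the perturbation so that the \emph{realized} model equals the baseline on the masking window, you make $\E[X_N(t)]=\mu_0$ identically in the learner's action. Since the regret benchmark in \eqref{equ:regret} is the fixed quantity $T\mu_{a^*}$ (computed under the nominal model), every masked round contributes exactly $\mu_{a^*}-\mu_0$ regardless of what the algorithm does; the indistinguishability step becomes unnecessary. This buys a cleaner argument and even a tighter constant (no factor of $1/4$ from averaging plus Markov). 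The paper's route, by contrast, is closer to a standard information-theoretic template and would transfer more readily if the benchmark were, say, the best action under the \emph{actual} time-varying model.

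Two small points to tighten in your write-up: (i) your appeal to Lemma~\ref{lm:bound_l_paths} is cosmetic---that lemma is an \emph{upper} bound on error propagation, and what you actually need is the explicit reward computation on the layered graph (weights $1/\sqrt{d}$, parents at depth $L-1$ having mean $d^{(L-1)/2}$, hence a column perturbation of norm $m_c$ on node $N$ moves the reward by $m_c\,d^{L/2}$); just do that calculation directly. (ii) To keep $\|[\widetilde\bB_a]_i\|\le 1$, it is cleanest to follow the paper's trick of setting the \emph{observational} weight into $N$ to zero and putting the entire perturbation on $[\bB^*]_N$, rather than perturbing a column already on the unit sphere.
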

\begin{proof}
   We construct two instances of causal bandits and demonstrate that under specific deviations, no algorithm can distinguish between them and the initial stage. We consider two linear SEM causal bandit instances sharing the same hierarchical graph $\mcG$ as shown in Figure~\ref{fig:block-hierarchical}. 
    \begin{figure}[ht]
    \centering
    \includegraphics[height=5 cm]{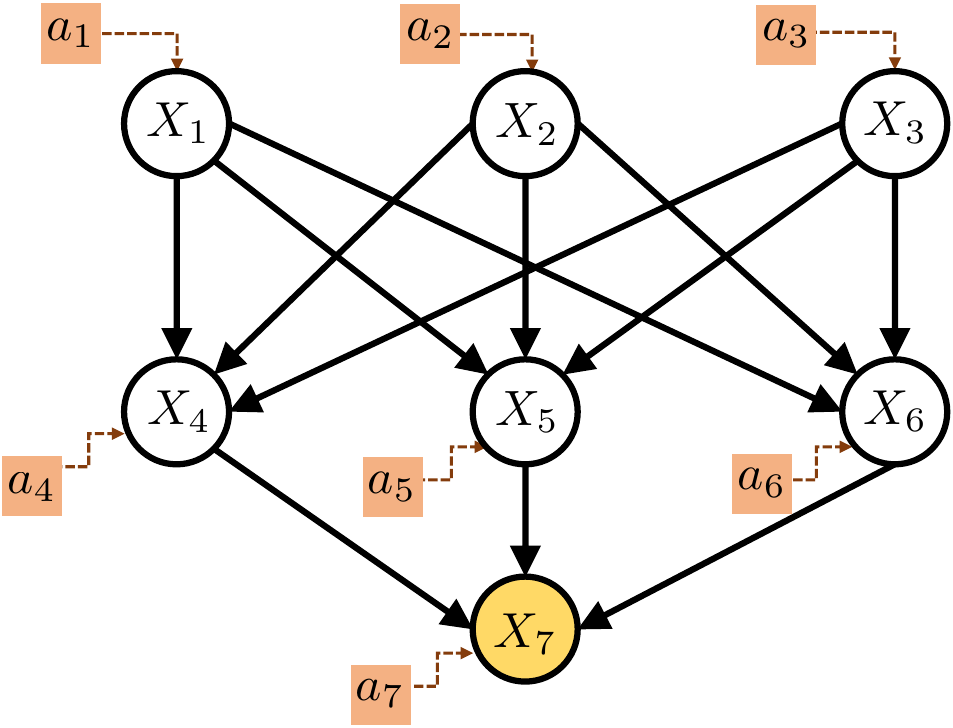}
    \caption{Example of the hierarchical graph with graph degree $d=3$ and length $L=2$.}
    \label{fig:block-hierarchical} 
    \end{figure}
Let us examine the parameterization of the two bandit instances, referred to as $\mcI_1=\{\bB, \bB^*, \epsilon\}$ and $\mcI_2=\{\bar{\bB}, \bar{\bB}^*, \epsilon\}$. For the existing edges in graph $(i,j)\in\mcE$ for $i<j$ and $i,j\in[N]$, we define
\begin{itemize}
    \item If $j<N$:
    \begin{align}
        [\bB]_{i,j} & =[\bar{\bB}]_{i,j} = [\bB^*]_{i,j} = [\bar{\bB}^*]_{i,j}=\sqrt{\frac{1}{d}} \ .
    \end{align}
    \item If $j=N$:
     \begin{align}
        [\bB^*]_{i,j} & =-[\bar{\bB}^*]_{i,j}=\sqrt{\frac{1}{d}} \ ,\quad \mbox{and}  \quad          [\bB]_{i,j}  =[\bar{\bB}]_{i,j}=0 \ .
    \end{align}
\end{itemize}
For the noises, we define
    \begin{align} 
        \epsilon_i \sim \left\{
        \begin{array}{cc}
          \mcN(1,1)   &  \text{for} \quad i\in\{1,\cdots, d\}\\
           \mcN(0,1)  &  \text{for} \quad i\notin\{1,\cdots, d\}\\
        \end{array}
        \right.  \ .
    \end{align}    
Thus, the only difference between the two bandit instances lies in the intervention weights assigned to the reward node. In the first bandit instance, the optimal action is when the reward node is \emph{intervened}. In contrast, in the second bandit instance, the best action is associated with the reward node being \emph{not intervened}. The regret incurred from choosing the sub-optimal action on the reward node is $d^{L/2}$. Next, consider the scenario where deviations only occur on the reward node during the initial $C$ rounds. Furthermore, consider the scenario in which, at each time $t\in\{1,\cdots,C\}$, the weights are set to $0$, and the learner only observes a random noise as the reward. In this case, the learner has no information about the $[\bB]_N$ and can only make random guesses. After $C $ rounds, either $\E N_{i}(C )$ or $\E N^*_{i}(C )$ is no less then $\frac{C }{2}$. Consequently, there must exist a bandit instance at which the algorithm plays the sub-optimal arm at least $\frac{C }{2}$ times. We conclude that it must incur $\frac{1}{2}d^{\frac{L}{2}}C$ regret with probability at least $1/2$. Ignoring the constant, by using Markov's inequality, we have
\begin{equation}
    \E\left[ R(T)\right]\geq \frac{1}{2}d^{\frac{L}{2}}C \times \P\left(R(T)\geq \frac{1}{2}d^{\frac{L}{2}}C\right) = \Omega(d^{\frac{L}{2}}C) \ .
\end{equation}
\end{proof}

\begin{theorem}\cite[Theorem~6]{varici2022causal}   \label{thm:lowerbound_invariant}
    For any degree $d$ and graph length $L$ and any algorithm, there exists a causal bandit instance such that the expected regret is at least
    \begin{equation}
        \E \left[R(T)\right]\geq \Omega(d^{\frac{L}{2}-2} \sqrt{T}) \ .
    \end{equation}
\end{theorem}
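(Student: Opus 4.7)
The plan is to adapt the classical two-instance argument from stochastic bandits to the linear SEM setting, leveraging the structural amplification provided by the causal graph. I would re-use the hierarchical construction of Figure~\ref{fig:block-hierarchical}, which already reveals the key amplification mechanism: under weights of magnitude $\sqrt{1/d}$ on each edge, a directed path from a source to the reward node has weight $d^{-L/2}$, while there are on the order of $d^{L}$ such paths, so a perturbation of size $\varepsilon$ on a single coordinate of $[\bB^{*}]_N$ changes the mean reward by $\Theta(\varepsilon\, d^{L/2})$ via Lemma~\ref{lm:expected_reward}. This is the same amplification that drove Theorem~\ref{thm:lowerbound}; what changes is the source of the uncertainty: stochastic noise rather than adversarial deviation.

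Next, I would construct two instances $\mcI_{1}$ and $\mcI_{2}$ that share all layer-$\ell$ weights for $\ell<L$ and differ only in the interventional incoming edges at node $N$ by $\pm\varepsilon\, \bm{v}$ for a carefully chosen unit vector $\bm{v}$, with $\varepsilon$ to be tuned in $T$. By the construction, on $\mcI_{1}$ intervening on $N$ is optimal and not intervening is sub-optimal with gap $\Delta=\Theta(\varepsilon\, d^{L/2})$, while on $\mcI_{2}$ the roles are swapped. Because the noise is $1$-sub-Gaussian and $\|\bX\|\leq m$, the per-round KL divergence between the observation law under any fixed action on $\mcI_{1}$ versus $\mcI_{2}$ is $O(\varepsilon^{2})$ (it only differs through the contribution of the perturbed reward-node column). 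By the chain rule for KL and the Bretagnolle--Huber inequality applied to the probability that the algorithm plays the arm that is sub-optimal on the respective instance, one obtains
\begin{equation}
    \max_{j\in\{1,2\}} \E_{\mcI_{j}}[R(T)]\; \gtrsim\; \frac{T \Delta}{4}\,\exp\!\big(-\,T\cdot O(\varepsilon^{2})\big) \ .
\end{equation}
Optimizing over $\varepsilon$ by choosing $\varepsilon\sim 1/\sqrt{T}$ equates the distinguishability cost with the per-round gap and yields $\max_{j}\E_{\mcI_{j}}[R(T)]=\Omega(d^{L/2}\sqrt{T})$.

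The main obstacle, and the reason the stated bound carries the slightly weaker factor $d^{L/2-2}$ rather than $d^{L/2}$, is that with $|\mcA|=2^{N}$ arms the learner can share information across interventions through the shared observational weights, so the naive two-instance argument must be tightened: one cannot perturb all $d$ reward-node parents independently and hope that each yields an additive regret contribution. I would handle this by confining the perturbation to a sub-block of parents whose contribution to $f(\bB_{a})$ can be unambiguously attributed to samples from a single action family, which loses a multiplicative factor in $d$ (concretely, $d^{2}$) in the amplification exponent, giving $d^{L/2-2}$. The remainder of the argument is standard and coincides with the proof of \cite[Theorem~6]{varici2022causal}, which I would cite rather than redo, since the theorem is imported verbatim as the time-invariant benchmark against which Theorem~\ref{thm:lowerbound} is combined to produce the final $\Omega(d^{L/2-2}\max\{\sqrt{T},d^{2}C\})$ lower bound advertised in the abstract.
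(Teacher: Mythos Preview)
The paper does not prove this statement at all: Theorem~\ref{thm:lowerbound_invariant} is imported verbatim as a citation of \cite[Theorem~6]{varici2022causal} and used as a black box to combine with Theorem~\ref{thm:lowerbound}. Your final sentence --- cite rather than redo --- is therefore exactly what the paper does, and is the only thing it does.

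Your intermediate sketch goes beyond the paper. The ingredients you identify (the hierarchical construction of Figure~\ref{fig:block-hierarchical}, the $d^{L/2}$ amplification via Lemma~\ref{lm:expected_reward}, a two-instance Bretagnolle--Huber argument with $\varepsilon\sim 1/\sqrt{T}$) are the standard ones for a $\sqrt{T}$ lower bound in this model and are plausible. Your explanation for the $d^{-2}$ slack, however, is speculative: restricting the perturbation to a ``sub-block of parents'' to avoid information sharing across the $2^{N}$ arms is a heuristic, not an argument, and you have not shown that this particular loss is unavoidable or that it is the mechanism in \cite{varici2022causal}. Since the paper itself never re-derives the bound, this does not constitute a gap relative to the paper, but if you intend to present the sketch as a proof you would need to make that step precise or simply drop the sketch and cite as the paper does.
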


Combining the results in Theorem~\ref{thm:lowerbound} and Theorem~\ref{thm:lowerbound_invariant}, we can conclude a minimax lower bound in the following corollary.
\begin{corollary}
    For any degree $d$ and graph length $L$ and any algorithm, there exists a causal bandit instance such that the expected regret is at least
\begin{align}
    \E\left[R(T)\right]\geq \Omega(d^{\frac{L}{2}-2}\max\{\sqrt{T}\; , \; d^2C\})\ .
\end{align}
\end{corollary}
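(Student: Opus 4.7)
The proof is essentially a unification argument: both Theorem~\ref{thm:lowerbound} and Theorem~\ref{thm:lowerbound_invariant} are minimax statements of the form ``for any algorithm, there exists an instance such that $\E[R(T)] \geq \Omega(\cdot)$,'' so one is free to take the pointwise maximum of the two bounds to obtain a single valid lower bound. The plan is therefore to (i) algebraically normalize both bounds into a common $d^{L/2-2}$ prefactor and (ii) invoke the minimax argument to combine them.

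First, I would restate Theorem~\ref{thm:lowerbound_invariant}, which already has the form $\Omega(d^{L/2-2}\sqrt{T})$. Next, I would rewrite the conclusion of Theorem~\ref{thm:lowerbound} by factoring out $d^{L/2-2}$, namely $\Omega(d^{L/2}C) = \Omega(d^{L/2-2}\cdot d^{2}C)$, so that both bounds share the same leading factor. This is just a notational rearrangement needed to express the final bound compactly.

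For the combination step, I would argue as follows. Fix any CB algorithm $\pi$. By Theorem~\ref{thm:lowerbound_invariant}, there exists an instance $\mcI_\pi^{(1)}$ (possibly depending on $\pi$) on which the regret under $\pi$ is at least $c_1 d^{L/2-2}\sqrt{T}$ for some absolute constant $c_1>0$. By Theorem~\ref{thm:lowerbound}, there exists another instance $\mcI_\pi^{(2)}$ on which the regret is at least $c_2 d^{L/2-2}(d^2 C)$ for some absolute constant $c_2>0$. Taking the worse of these two instances and absorbing constants yields
\begin{equation}
\sup_{\mcI}\E_\mcI[R(T)] \;\geq\; \max\Bigl\{c_1 d^{L/2-2}\sqrt{T},\; c_2 d^{L/2-2+2}C\Bigr\} \;=\; \Omega\bigl(d^{L/2-2}\max\{\sqrt{T},\,d^{2}C\}\bigr),
\end{equation}
which is exactly the desired statement.

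There is essentially no obstacle here beyond ensuring that the two instances used to establish the two theorems are compatible in the sense that the deviation budget $C$ is the same parameter in both (for Theorem~\ref{thm:lowerbound_invariant}, the instance is stationary so any $C$ is admissible, while for Theorem~\ref{thm:lowerbound}, the construction in Figure~\ref{fig:block-hierarchical} is explicit about the deviation budget). Hence the only mild bookkeeping is to verify that one may pick the two instances independently for each algorithm, which is immediate since the minimax statements quantify the instance \emph{after} the algorithm is fixed. The final bound then follows with no additional computation.
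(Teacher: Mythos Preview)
Your proposal is correct and matches the paper's approach: the paper simply states that the corollary follows by ``combining the results in Theorem~\ref{thm:lowerbound} and Theorem~\ref{thm:lowerbound_invariant},'' and your argument spells out exactly that combination (factoring out $d^{L/2-2}$ and taking the maximum over the two instance-dependent lower bounds).
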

This corollary shows both the lower bounds depends on $\sqrt{T}$, and that the regret bound of our proposed algorithm is tight in terms of $T$ and $C$. Besides these, however, the achievable and lower bounds have a gap due to mismatching dependence on the number of nodes $N$ and the exact order of the exponential scaling of dimension $d$ with graph length $L$.  The dependence on $N$ arises in the techniques used in Theorem~\ref{thm:measure2} and we conjecture that the dependency of the upper bound on $N$ can be diminishing as $T$ grows. We provide some insights into tightening dependence on $N$. Let $L_i$ denote the length of the longest causal path that ends at node $i \in [N]$. If we can first bound the cumulative estimation error for the expected value of node $i$ with $L_i=1$, then we can use induction to bound that for increasing $L_i$ and derive an regret upper bound independent of $N$. The mismatch in the exact order $d$ with $L$ exists in all the relevant literature, even in simpler settings. For instance, consider linear bandits with time-invariant models with dimension $d$, which can be considered a special case of our linear causal bandit model by setting $L=1$ and no model variations. For the widely used optimism in the face of uncertainty linear bandit algorithm (OFUL) in \cite{abbasi2011improved}, the lower and upper bounds behave according to $\tilde \mcO(\sqrt{dT})$ and $\tilde \mcO (d\sqrt{T})$, respectively. This gap in terms of $\sqrt{d}$ and $d$ matches exactly our gap.

%%%%%%%%%%%%%%%  Comparison
\subsection{Comparison with the Time-invariant Setting}
\label{sec:comparison}
To highlight the robustness, we compare the analytical results with those of the time-invariant setting~\cite{varici2022causal}. While our results in terms of the deviation budgets are general, for illustration purposes, we consider deviation budgets that scale sub-linearly with respect to $T$ by setting $C=T^\alpha$ for $\alpha\in(0,1)$.

\vspace{.1 in}
\noindent\textbf{Confidence ellipsoids.} In the time-invariant setting, the confidence ellipsoid radius is set to 
\begin{equation}
\label{equ:beta_Lin}
    \beta'_T\;\triangleq\; 1+ \sqrt{2\log\left(2NT\right)+d\log\left(1+mT^2/d\right)}\ .
\end{equation}
Compared with this, our choice of confidence radius is similar to the time-invariant setting. The main contrast is the inclusion of an additional term $m$, which accounts for the cost of robustness. The difference in logarithmic terms arises from using different weights and norms in our algorithm.

\vspace{.1 in}
\noindent\textbf{Regret bounds.} In the time-invariant setting, the regret bound scales as~\cite{varici2022causal}
\begin{equation}
     \E[R(T)]\leq 2m+\tilde{\mcO}(\beta_T^{\prime L} d^{\frac{L}{2}}\sqrt{NT}) \ .
\end{equation}
If we directly apply the algorithm designed for the time-invariant setting to the model fluctuation setting with proper adjustments, we observed it would exhibit model deviation robustness but only for substantially small deviation levels. Specifically, to make the time-invariant algorithm robust, we need to adjust the term $\beta'_T$ so that it scales linearly with $C$ (see Appendix~\ref{proof:lem:naive} for details). This, in turn, induces a term  $C^L$ in the regret bound. Consequently, to preserve a sub-linear regret growth $T$, $\alpha$ need to fall in the interval $(0,\frac{1}{2L})$, which is a highly restrictive regime of model deviations, especially for graphs with long directed paths. For instance, under the DF measure, for $T=10^5$ and a graph with $L=3$, a node cannot be compromised more than $6$ samples to maintain sub-linear regret. {\bf This deviation level can be well below the noise level of the model, establishing that the algorithms designed for the time-invariant setting lack robustness.}

On the other hand, our \algonameRWLinSEMUCB~algorithm achieves a regret scaling of $T^{\max\{\frac{1}{2},\alpha\}}$, which preserves the optimal rate $\tilde\mcO(\sqrt{T})$ under the regime $\alpha\leq \frac{1}{2}$, which is independent of other parameters. Furthermore, a non-linear growth of regret in $T$ is achieved for deviation with $\alpha \in (\frac{1}{2},1)$, which is a significant improvement over the regime $(0,\frac{1}{2L})$. In the context of the earlier example with $T=10^5$, this indicates that the algorithm remains $\tilde\mcO(\sqrt{T})$ regret when the model faces $C\leq 316$ outlier samples and achieves sub-linear regret later on. The cost incurred for integrating robustness is reflected in the additional low-order terms we have in our regret bounds. We also remark that by setting $C=1$, our regret bound recovers the order of the time-invariant setting.

\section{Empirical Evaluations}
\label{sec:experiment}
In this section, we assess the robustness of the Robust-LCB algorithm and its scaling behavior with the graph parameters. To the best of our knowledge, there is no baseline CB algorithm that can be used as a natural baseline for performance comparisons. Furthermore, soft interventions on continuous variables of a CB model are implemented by only \algonameLinSEMUCB~of \cite{varici2022causal}. Therefore, to assess the robustness, we compare our \algonameRWLinSEMUCB~algorithm with \algonameLinSEMUCB~and the standard non-causal UCB algorithm.

\begin{figure}
    \centering
    \begin{minipage}{.3\textwidth}
        \centering
        \includegraphics[height=3.3 cm]{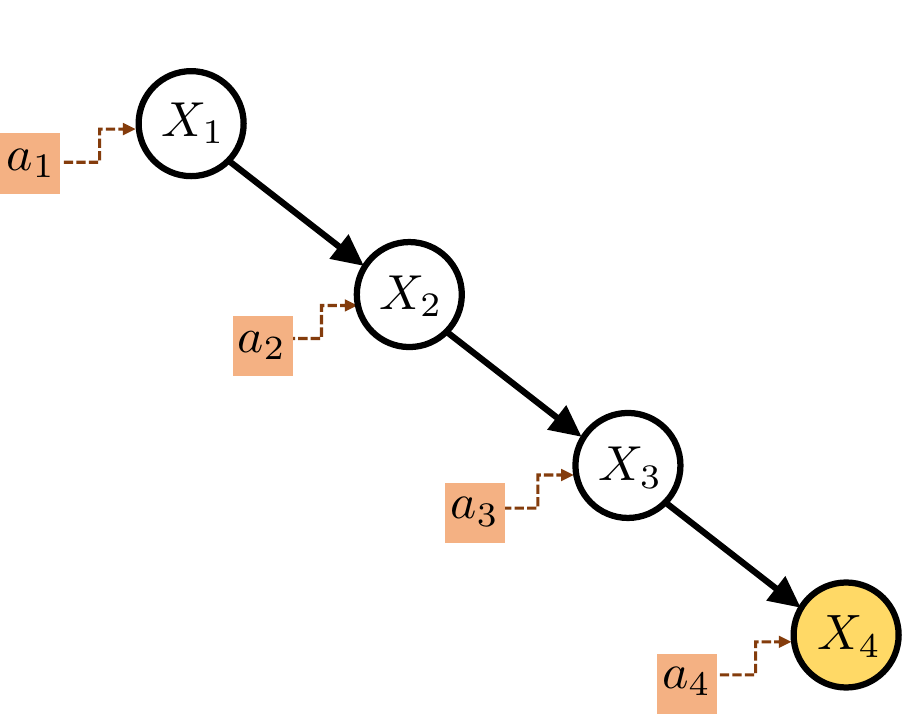}
        \caption{Chain graph with\\ $N=4$.}
        \label{fig:chain_example_add}
    \end{minipage}\hfill
    \begin{minipage}{0.3\textwidth}
        \centering
    \includegraphics[height=3.3 cm]{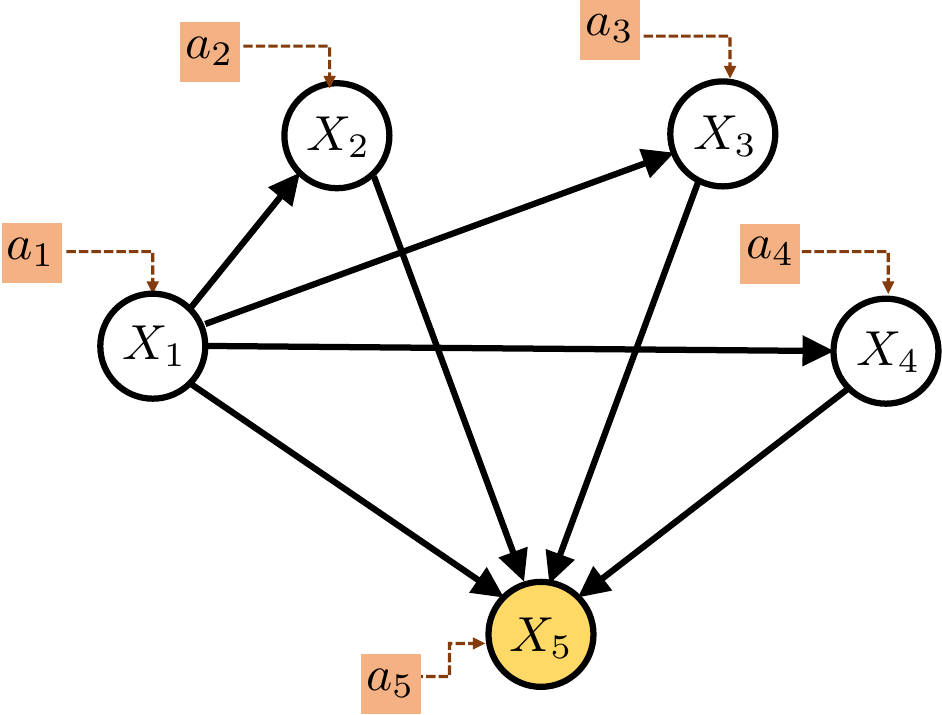}
        \caption{Confounded parallel graph  $N=5$.}
        \label{fig:confonded_example_add}
    \end{minipage}\hfill
    \begin{minipage}{.3\textwidth}
        \centering
        \includegraphics[height=3.3 cm]{figures/he_example.pdf}
        \caption{Hierarchical graph with $d=3$ and $L=2$.}
        \label{fig:he_example}
    \end{minipage}
\end{figure}

\vspace{.1 in}
\noindent\textbf{Parameter setting:} We consider three types of graph: chain graph (Figure~\ref{fig:chain_example_add}), confounded parallel graph (Figure~\ref{fig:confonded_example_add}) and hierarchical graph (Figure~\ref{fig:he_example}). The noise terms are uniformly sampled from $[0,2]$.
We consider the general setting that all nodes that have parents can suffer from model deviations. The norm of both observational weights $\norm{[\bB]_{i}}$ and the interventional weights $\norm{[\bB^*]_{i}}$ are set to $0.5$ and $1$ respectively. 
\begin{itemize}[leftmargin=0.15 in]
    \item \textbf{Chain graph.} The chain graph in Figure~\ref{fig:chain_example_add} is  a fundamental element of causal graphs. The observational weights $[\bB]_{i-1,i}$ and interventional weights $[\bB^*]_{i-1,i}$ for node $i\in[N]$ are set to $0.5$ and $1$, respectively.

    \item \textbf{Hierarchical graph.} For the experiments reported in Figure~\ref{fig:diff_d}, we use the graph structure with $L=2$ layers, and each layer has the same number of nodes $d \in \{1,2,3,4,5\}$. For the experiments illustrated in Figures~\ref{fig:he_regret} and \ref{fig:he_C}, we set the number of nodes in Layer 2 to $3$ and Layer 1 to $9$. We set the observational and interventional weights for node $i\in[N]$ to $0.5/\sqrt{|\Pa(i)|}$ and $1/\sqrt{|\Pa(i)|}$, respectively.
    
    \item \textbf{Confounded parallel graph.} The confounded parallel graph, as shown in Figure~\ref{fig:confonded_example_add}, is a mixture of the parallel graph and confounded graph in~\cite{lattimore2016causal}, where node $1$ is the parent of all other nodes, and the reward node $N$ is the child of all other nodes. We set the observational and the interventional weights for nodes $i \in \{2,\dots,N-1\}$ to $0.5$ and $1$, respectively. For the reward node $N$, its parents' observational and interventional weights are set to $0.5/\sqrt{N-1}$ and $1/\sqrt{N-1}$, respectively.
\end{itemize}
We let the deviations on the model occur at earlier rounds to simulate the worst-case scenario for a given deviation level $C$. 
When a deviation occurs on node $i\in[N]$, the weights are deliberately altered to change the optimal action, thereby challenging the algorithm's performance. The simulations are repeated $100$ times, and the average cumulative regret is reported.

\begin{figure}
    \centering
    \includegraphics[height=5 cm]{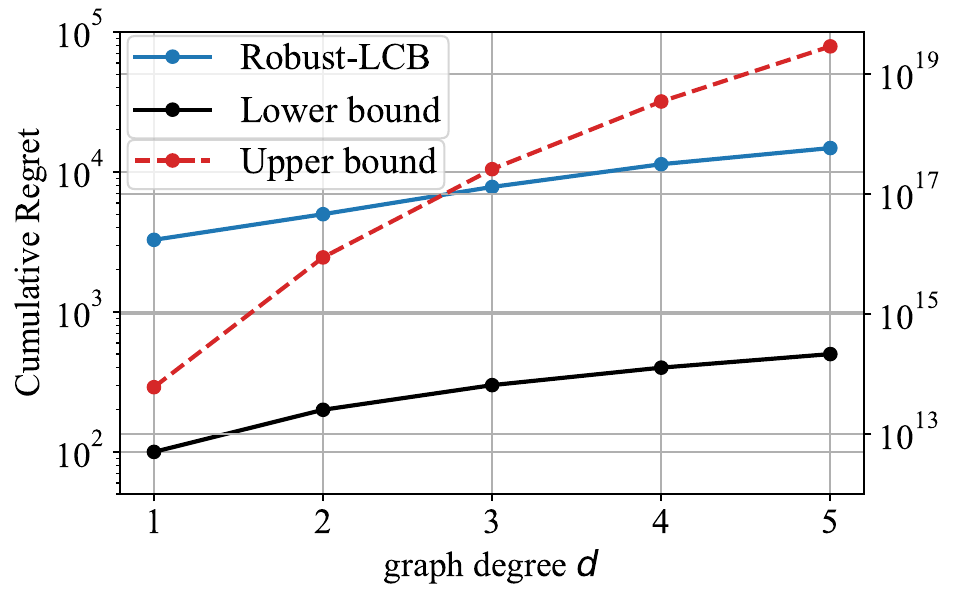}
    \caption{Cumulative regret of \algonameRWLinSEMUCB\ with different graph degree $d$ and $L=2$, $N=Ld+1$.}
    \label{fig:diff_d}
\end{figure}
\vspace{.1 in}
\noindent\textbf{Scaling behavior with degree $d$.} Figure~\ref{fig:diff_d} illustrates the variations of the cumulative regret $\E \left[R(T)\right]$ versus the graph degree $d$ when $T=40000$. We compare the regret of the Robust-LCB algorithm (blue curve with its scale on the left axis), the lower bound characterized in Section~\ref{sec:lowerbound} (black curve with its scale on the left axis), and the upper bound specified in~\eqref{equ:upperbound} (red curve with its scale on the right axis). All three curves suggest a polynomial behavior in $d$, which conforms to our theoretical results. 

\begin{figure}[ht]
    \centering
        \begin{subfigure}{0.32\textwidth}
        \centering
        \includegraphics[height=3.3 cm]{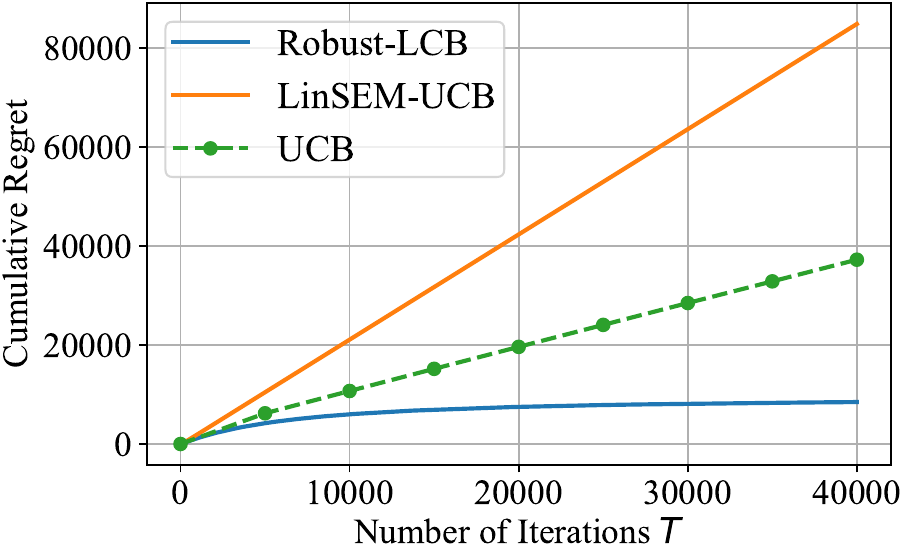}
        \caption{Chain graph.}
    \label{fig:chain_regret_add}
    \end{subfigure}
    \begin{subfigure}{0.32\textwidth}
        \centering
        \includegraphics[height=3.3 cm]{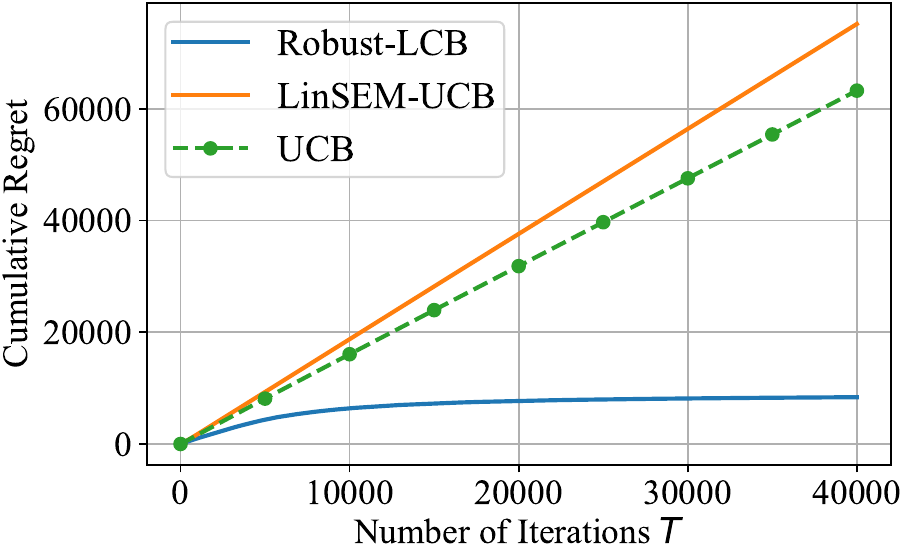}
        \caption{Confounded parallel graph.}
    \label{fig:confounded_regret_add}
    \end{subfigure}
    \begin{subfigure}{0.32\textwidth}
        \centering
        \includegraphics[height = 3.3 cm]{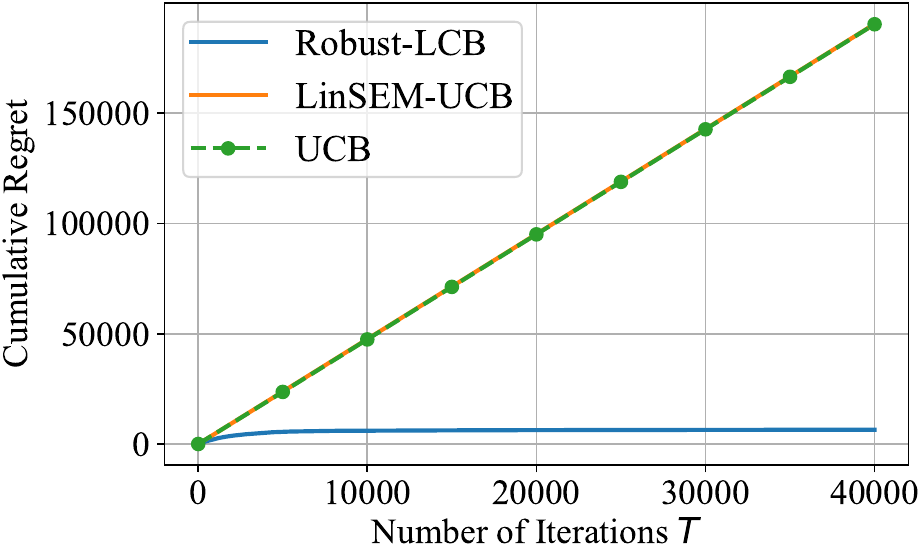}
        \caption{Hierarchical graph}
        \label{fig:he_regret}
    \end{subfigure}
    \caption{Cumulative regret when $C=\sqrt{T}$
        for different algorithms.}
    \label{fig:regret}
\end{figure}

\vspace{.1 in}
\noindent\textbf{Comparison of the bounds.} Figure~\ref{fig:regret} compares the cumulative regret of \algonameRWLinSEMUCB\ with that of \algonameLinSEMUCB\ and UCB under a model deviation level of $C =T^{0.5}$. It demonstrates that only \algonameRWLinSEMUCB\  achieves sub-linear regret, whereas the other two algorithms incur linear regret with respect to the horizon $T$. Furthermore, it is noteworthy that the LinSEM-UCB always exhibits nearly the worst possible regret as the design of the deviation also showcases the worst case for LinSEM-UCB. In contrast, the regret of UCB tends to be the worst possible outcome as the graph's complexity increases even when the deviation is not designed for it. Nevertheless, these findings imply that the estimators of these algorithms become ineffective when faced with such deviations, resulting in the selection of a sub-optimal (possibly the worst) arm.

\begin{figure}
       \begin{subfigure}{0.33\textwidth}
        \centering
        \includegraphics[height=3.2 cm]{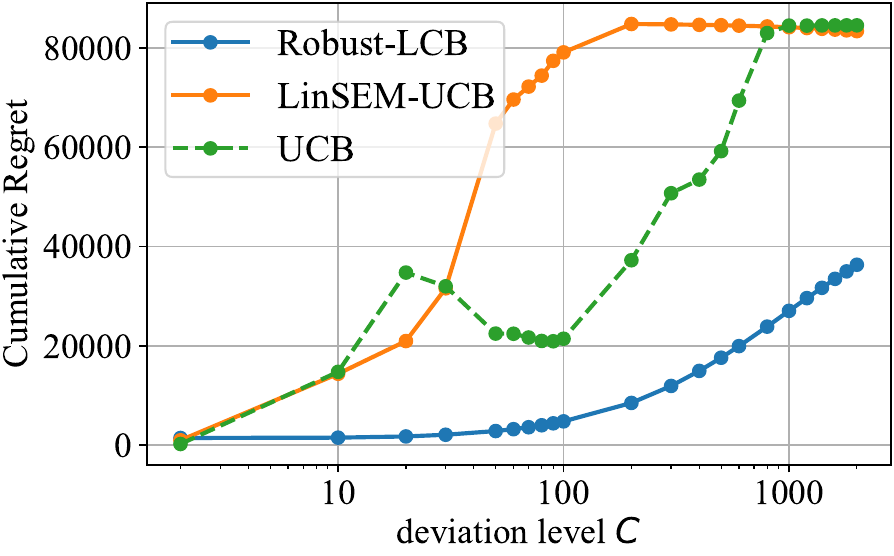}
        \caption{Chain graph.}
        \label{fig:chain_C_add}
    \end{subfigure}
    \begin{subfigure}{0.33\textwidth}
        \centering
\includegraphics[height=3.2 cm]{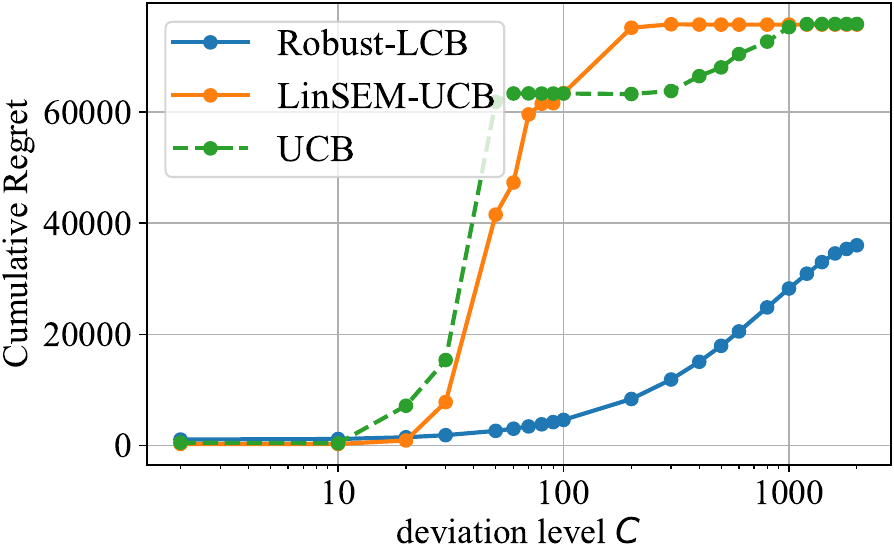}
        \caption{Confounded parallel graph.}
\label{fig:confounded_C_add}
    \end{subfigure}
    \begin{subfigure}{0.32\textwidth}
        \centering
        \includegraphics[height=3.3 cm]{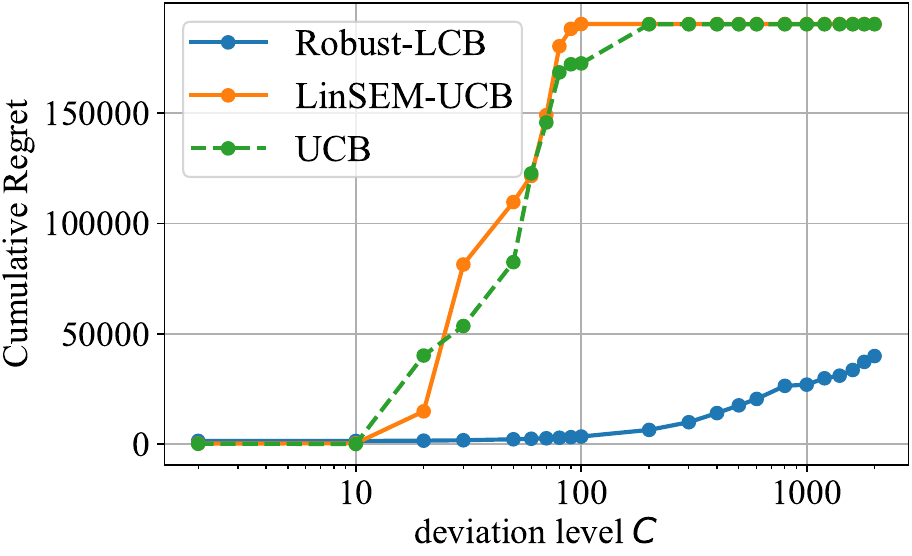}
        \caption{Hierarchical graph.}
        \label{fig:he_C}
\end{subfigure}
    \caption{Cumulative regret at $T = 40000$ for different deviation levels $C$.}
    \label{fig:diff_c}
\end{figure}

\vspace{.1 in}
\noindent\textbf{Robustness against $C$.} Figure~\ref{fig:diff_c} plots the cumulative regret at $T=40000$ when the model deviation budget $C$ changes from $2$ to $2000$. 
We observe that \algonameRWLinSEMUCB\  and UCB perform slightly better than our algorithm when there is almost no deviation. This can be viewed as the compromise needed to guarantee the robustness of our algorithm. However, both \algonameRWLinSEMUCB\ and UCB begin to fail with even a minor model deviation, as small as $C= 15$ for all three scenarios. Furthermore, they tend to reach nearly the worst possible regret when the deviation level rises high enough (e.g., 200 for the hierarchical graph). In the chain graph scenario, the UCB algorithm exhibits moderate regret when $C\leq 300$. However, as demonstrated in Figure~\ref{fig:chain_regret_add}, it mistakenly treats a non-optimal arm as the optimal one, resulting in linear regret, which happens when $C\geq 10$. In comparison, \algonameRWLinSEMUCB\ outperforms when the deviation is more than negligible, as its regret scales sub-linear with the deviation level $C$.

%%%%%%%%%%%%%%%%%%%%%%%%%%%%
%%%%%%%%%%%%%%Conclusion
%%%%%%%%%%%%%%%%%%%%%%%%%%%%

\section{Concluding Remarks}
In this paper, we have studied the sequential design of interventions over graphical causal models where both the observational and interventional models are unknown and undergo temporal variations. We have considered the general soft intervention model and have designed an algorithm for identifying the intervention mechanism that optimizes a utility function over the graph while exhibiting robustness against model variations. This objective has been naturally cast and analyzed as a causal bandit problem. We have focused on causal structures described by linear structural equation models. We have analyzed the proposed algorithm from a cumulative regret perspective, where we have characterized the dependence of the regret on the graph parameters. The main observations are the following. (i) The proposed algorithm maintains sub-linear or nearly optimal regret under a wide range of relevant model deviation measures. This contrasts sharply with the existing algorithms designed for fixed models, which lose their sub-linear regret rate with a minimum level of model variations. (ii) While the cardinality of the intervention space grows exponentially with the graph size, our regret maintains a linear growth in the graph size. (iii) The regret bound depends on the graph structure only through its parameters (maximum degree and the length of the longest causal path). Finally, we have established an information-theoretic lower bound to demonstrate the tightness of our upper bound.

We conclude by providing some potential future directions. The first direction is tightening the gap between the graph-dependent parameters in the upper and lower bounds. Secondly, an alternative model to consider is the setting is the case of non-stationary bandits, in which the temporal variations can be potentially permanent. In such settings, the relevant metric to analyze is the dynamic regret specifically designed for non-stationary bandits. This metric becomes relevant when assuming the absence of a nominal model and the permanent effects of deviations. Lastly, we expect that our insights in this paper, including the effectiveness of weighted exploration bonus, can be extended into more general causal bandit settings, as well as other structured bandit settings involving model deviation. Examples include extending it to causal bandits with general link functions and combinatorial bandits.

\appendix

\section{Additional Notations}
First, we provide notations that are useful in our analyses. Since we are dealing with matrices, we denote the singular values of a matrix $\bA\in\R^{M \times N}$, where $M\geq N$, by
\begin{align}
    \sigma_{1}(\bA) &\geq \sigma_{2}(\bA) \geq \dots \geq \sigma_{N}(\bA) \ .
\end{align}
In the proof, we often work with zero-padded vectors and corresponding matrices. As a result, the matrices that contain these vectors have non-trivial \emph{null space} leading to zero singular values. In such cases, we use the \emph{effective} smallest singular value that is non-zero. We denote the \emph{effective} largest and smallest eigenvalues that correspond to effective dimensions of a positive semidefinite matrix $\bA$ with rank $k$ by
\begin{equation}
    \smax{\bA} \triangleq \sigma_{1}(\bA) \ , \quad \mbox{and} \quad \smin{\bA} \triangleq \sigma_{k}(\bA) \ .
\end{equation}
For a square matrix $\bU = \bA \bA^{\top} \in \mathbb{R}^{N \times N}$, we denote the \emph{effective} largest and smallest eigenvalues by\footnote{For matrix $\bV=\bU+\bI$, we denote the \emph{effective} smallest eigenvalues by $\lmin{\bV} \triangleq \sigma_{\min}^2(\bA) +1$.}
\begin{equation}
    \lmax{\bU} \triangleq \lmax{\bA \bA^{\top}} 
    = \sigma_{\max}^2(\bA) \ , \quad 
    \mbox{and} \quad \lmin{\bU} \triangleq \lmin{\bA \bA^{\top}} 
    = \sigma_{\min}^2(\bA) \ . 
\end{equation}
Then we construct data matrices that are highly related to Gram matrices. At time~$t\in\N$ and for any node $i\in[N]$, the data matrices $\bU_{i}(t)\in\R^{t \times N}$ and $\bU^{*}_{i}(t)\in\R^{t \times N}$ consist of the weighted observational and interventional data, respectively. Specifically, for any $s\in[t]$ and $i\in[N]$, we define
\begin{equation}\label{eq:D_it_obs}
    \big[\bU_{i}^{\top}(t)\big]_{s} \triangleq \mathbbm{1}{\{i \notin a(s)\}} \sqrt{w_i(s)} \bX_{\Pa(i)}^{\top}(s) \ ,  \quad 
\mbox{and} \quad     \big[{\bU^{*}_{i}}^{\top}(t)\big]_{s} \triangleq \mathbbm{1}{\{i \in a(s)\}} \sqrt{w_i(s)}\bX_{\Pa(i)}^{\top}(s) \ . 
\end{equation}
Similarly to \eqref{eq:Ba_construct}, we denote the relevant data matrices for node $i\in[N]$ under intervention $a \in \mcA$ by
\begin{align}
    \bU_{i,a}(t) &\; \triangleq \; \mathbbm{1}{\{i \in a\}} \bU^{*}_{i}(t) +  \mathbbm{1}{\{i \notin a\}} \bU_{i}(t) \ ,  \label{eq:D_ita}\\
    \bV_{i,a}(t) &\;\triangleq\; \mathbbm{1}{\{i \in a\}} \bV^{*}_{i}(t) +  \mathbbm{1}{\{i \notin a\}} \bV_{i}(t) \ . \label{eq:V_ita}
\end{align}
Combining~\eqref{eq:D_it_obs} and~\eqref{eq:V_ita}, we have
\begin{align}
    \bV_{i,a}(t) &= \bU_{i,a}^{\top}(t) \bU_{i,a}(t) + \bI_{N} \ . \label{eq:V_ita_D_ita}
\end{align}
Similarly we define the data matrices that are related to $\widetilde{\bV}_{i,a}(t)$ as
\begin{equation}\label{eq:D_it_obs_tilde}
    \big[\widetilde{\bU}_{i}^{\top}(t)\big]_{s} \;\triangleq\; \mathbbm{1}{\{i \notin a(s)\}} w_i(s) \bX_{\Pa(i)}^{\top}(s) \ ,  \quad 
\mbox{and} \quad     \big[{{\widetilde{\bU}}^{*}_{i}}{}^{\top}(t)\big]_{s} \; \triangleq \; \mathbbm{1}{\{i \in a(s)\}} w_i(s)\bX_{\Pa(i)}^{\top}(s) \ .  
\end{equation}
The relevant data matrices for node $i\in[N]$ under intervention $a \in \mcA$ are
\begin{equation}
    \label{eq:D_ita_tilde}
    \widetilde{\bU}_{i,a}(t) \;\triangleq\; \mathbbm{1}{\{i \in A\}} \bU^{*}_{i}(t) +  \mathbbm{1}{\{i \notin A\}} \bU_{i}(t) \ ,  \quad \mbox{and} \quad 
    \widetilde{\bV}_{i,a}(t) = \widetilde{\bU}_{i,a}^{\top}(t) \widetilde{\bU}_{i,a}(t) + \bI_{N} \ .
\end{equation}
Define $N^{*}_{i}(t)$ as the number of times that node $i\in[N]$ is intervened, and $N_i(t)$ as its complement, i.e.,
\begin{align} \label{eq:def_N_it}
    N^{*}_{i}(t) \;\triangleq\; \sum_{s=1}^t \mathbbm{1}{\{i \in a(s)\}}\ , \quad \mbox{and} \quad 
    N_{i}(t) \;\triangleq\; t -  N^{*}_{i}(t)\ . 
\end{align}
Accordingly, for any $i\in[N]$ and $t\in\N$, define
\begin{align} \label{eq:def_N_iat}
    N_{i,a}(t) &\triangleq \mathbbm{1}\{i \in a\} N^{*}_{i}(t) + \mathbbm{1}\{i \notin a\} N_{i}(t) \ , 
\end{align}
Furthermore, we define the error and its $l$-th power of the estimator of $\bB$ matrices as
\begin{align}
    \Delta_{a}(t) & \triangleq \bB_{a}(t) - \bB_{a}, \ \forall i \in [N]\ , \quad \mbox{and} \quad \Delta_{a}^{(\ell)}(t)  \triangleq \bB_{a}^{\ell}(t) - \bB_{a}^{\ell} \label{eq:def_delta_alit} \ .
\end{align}

\section{Proof of Lemma~\ref{lm:bound_l_paths}}
\label{proof:lm:bound_l_paths}

\begin{proof} When it is clear from context, we use the shorthand terms $\mcB$ for $\mcB_a$ in this proof.
According to the given definitions 
$\Delta_{\bA}^{(\ell)} = [\Delta_{\bA} + \bB]^{\ell} - \bB^{\ell}$, each term in the binomial expansion of $\Delta_{\bA}^{(\ell)}$ can be represented as a product involving factors of  $\Delta_{\bA}$ and $\bB$. For any $\ell \in [L]$ and $k \in [\ell] \cup \{0\}$, there exist $\binom{\ell}{k}$ terms that consist of the $\Delta_{\bA}$ factor appearing $k$ times and the $\bB$ factor appearing$(\ell-k)$ times. We denote the set of these product terms by
\begin{align}
    \mcH_{\ell,k} \triangleq \{ \bH : \text{$\bH$ has $\Delta_{\bA}$ factor $k$ times and $\bB$ factor $\ell-k$ times} \} \ .
\end{align}
Therefore, sets $\mcH_{\ell,1},\dots,\mcH_{\ell,\ell}$ contain all valid products composed of $\bB$ and $\Delta_{\bA}$. Hence, we can write the expansion of $\Delta_{\bA}^{(\ell)}$ as
\begin{align}
    \Delta_{\bA}^{(\ell)} = \sum_{k=1}^{\ell} \sum_{\bH \in \mcH_{\ell,k}} \bH \label{eq:delta_expansion_sets} \ .
\end{align}
To bound the norm of $\Delta_{\bA}^{(\ell)}$, we first bound the norm of each element in the summation. We show by induction that for any $\ell\in [L]$, $k \in [\ell] \cup \{0\}$, and $\bH\in\mcH_{\ell,k}$,
\begin{align}
    \norm{[\bH]_{i}} \leq  d^{\frac{\ell-1}{2}} \beta^k \lambda^k\ , \quad  \forall i \in [N]\ , \quad \mbox{where} \quad  \lambda \triangleq \max_{i \in [N]} \lminn{\bM_{i}}{-1/2}  \ .\quad \label{eq:bound_lk}
\end{align}
Consider $\ell=1$. For $k=0$, we have $\mcH_{1,0}=\{\bB\}$ and $\norm{[\bB]_{i}}\leq 1$. For $k=1$, $\mcH_{1,1} = \{\Delta_{\bA}\}$, and 
\begin{align}
\norm{[\Delta_{\bA}]_{i}}&\leq \norm{[\Delta_{\bA}]_{i} \odot \mathbf{1}(\Pa(i))}_{\bM_{i}} \lminn{\bM_{i}}{-1/2} = \norm{[\Delta_{\bA}]_{i}}_{\bM_i} \lminn{\bM_{i}}{-1/2} 
\leq \beta \lambda\ .
\end{align}
Therefore, \eqref{eq:bound_lk} holds for $\ell=1$. Now assume that~\eqref{eq:bound_lk} holds for every $1,\dots,\ell-1$, for some $\ell \geq 2$. Consider a product term $H \in \mcH_{\ell,k}$, for some $ k \in [\ell] \cup \{0\}$. The first factor of $\bH$ is either $\bB_{a}$ or $\Delta_{\bA}$, and we analyze the induction step for each of these possibilities separately. 

\noindent {\bf Case 1.}~ If $\bH$ starts with $\bB$, represent it by $\bH = \bB {\bar \bH}$, where $\bar \bH \in \mcH_{\ell-1,k}$ and $k \in [\ell-1] \cup \{0\}$. Using the induction assumption for the elements of set $\mcH_{\ell-1,k}$ we obtain
    \begin{align}
        \norm{[\bH]_i}^2 &= \norm{(\bB \bar \bH)_i}^2  =  \sum_{u,v \in \Pa(i)}[\bB]_{u,i}[\bB]_{v,i} {\bar \bH}_{v}^{\top} {\bar \bH}_{u}  \\
        &\leq d \sum_{u \in \Pa(i)} ([\bB]_{u,i})^2 \norm{{\bar \bH}_{u}}^2 \label{equ:bound_m1} \leq d^{\ell-1}  \beta^{2k} \lambda^{2}  \sum_{u \in \Pa(i)} ([\bB]_{u,i})^2    = d^{\ell-1}  \beta^{2k} \lambda^{2k} \underset{\leq 1}{\underbrace{\norm{[\bB]_{i}}^2}}   \\
        &\leq  d^{\ell-1} \beta^{2k} \lambda^{2k} \ , \label{eq:bound_lk_induction_case1}
    \end{align}
where in \eqref{equ:bound_m1}  we use Cauchy–Schwarz inequality and the inductive hypothesis~\eqref{eq:bound_lk}.

\noindent {\bf Case 2.}.~ If $\bH$ starts with $\Delta_{\bA}$ represent it by $\bH = \Delta_{\bA} {\bar \bH}$, where $\bar \bH \in \mcH_{\ell-1,k-1}$ and $k \in [\ell]$. Similarly to the first case, we have
    \begin{align}
        \norm{[\bH]_i}^2 &= \norm{[\Delta_{\bA} \bar \bH]_{i}}^2 =  \sum_{u,v \in \Pa(i)} [\Delta_{\bA}]_{u,i} [\Delta_{\bA}]_{v,i} [\bar \bH]_{v}^{\top} [\bar \bH]_{u}  \\
        &\leq d \sum_{u \in \Pa(i)} ([\Delta_{\bA}]_{u,i})^2 \norm{[\bar \bH]_{u}}^2\label{lem:midd}  \leq d^{\ell-1}   \beta^{2k-2}  \lambda^{2k-2} \sum_{u \in \Pa(i)} ([\Delta_{\bA}]_{u,i})^2  \\
        &= d^{\ell-1} \beta^{2k-2} \lambda^{2k-2}  \norm{[\Delta_{\bA}]_{i}}^2     \leq d^{\ell-1} \beta^{2k-2} \lambda^{2k-2}  \beta^2 \underset{\leq \lambda^{2}}{\underbrace{\lminn{\bM_{i}}{-1}}} \label{eq:bound_lk_induction_case2_mid}   \\
        &\leq d^{\ell-1} \beta^{2k} \lambda^{2k} \ . \label{eq:bound_lk_induction_case2} 
    \end{align}
where \eqref{lem:midd} we use Cauchy-Schwarz inequality, and in \eqref{eq:bound_lk_induction_case2_mid} we use the fact that  $[\Delta_{\bA}]_{i} = [\Delta_{\bA}]_{i} \odot \mathbf{1}(\Pa(i))$ and $\norm{[\Delta_{\bA}]_{i}} \leq \norm{[\Delta_{\bA}]_{i}}_{\bM_{i}} \lminn{\bM_{i}}{-1/2} $.
Taking the square-roots of both sides in \eqref{eq:bound_lk_induction_case1} and \eqref{eq:bound_lk_induction_case2} yields
\begin{align}
     \norm{[\bH]_i}&\leq  d^{\frac{\ell-1}{2}}  \beta^{k} \lambda^k \ , \label{eq:bound_lk_final}
\end{align}
which is the desired inequality for all $k \in [\ell] \cup \{0\}$. This completes the proof of~\eqref{eq:bound_lk} by induction. The final result follows by applying  \eqref{eq:bound_lk_final} to each term in the sum \eqref{eq:delta_expansion_sets}.
\begin{align}
    \norm{\big[\Delta_{\bA}^{(\ell)}\big]_{i}} &= \Bigg\|\sum_{k=1}^{\ell} \sum_{\bH \in \mcH_{\ell,k}} [\bH]_i\Bigg\| \\
    &\leq \sum_{k=1}^{\ell} \sum_{\bH \in \mcH_{\ell,k}} \norm{[\bH]_i}  \leq d^{\frac{\ell-1}{2}}  \sum_{k=1}^{\ell} |\mcH_{\ell,k}|  \beta^{k} \lambda^{k}  = d^{\frac{\ell-1}{2}} \sum_{k=1}^{\ell} \binom{\ell}{k}  \beta^{k}  \lambda^{k}  \\
    &< d^{\frac{\ell-1}{2}} (\beta+1)^{\ell} \lambda \label{equ:m3}  = d^{\frac{\ell-1}{2}} (\beta+1)^{\ell} \max_{i \in [N]} \lminn{\bM_{i}}{-1/2} \ ,
\end{align}
where \eqref{equ:m3} is due to the binomial expansion of $(\beta+1)^{\ell}$ and $\bM_i \succeq \bI, \ \forall i\in[N]$ such that $\lambda\leq 1$.
\end{proof}

\section{Proof of Theorem~\ref{thm:measure2}} 
\label{proof:thm:measure2}
The proof mechanism follows the same line of arguments as~\cite[Theorem 5]{varici2022causal} but with major distinctions. Firstly, we provide a new lemma on the bound on the power of estimation error $\Delta_{a}^{(\ell)}(t) \triangleq \bB_{a}^{\ell}(t-1) - \bB_{a}^{\ell}$,  Furthermore, the effect of the weighted ridge regression, distinct confidence ellipsoids, and the definition of error events, namely $\widetilde{\mcE}_{i,n}(t)$ and $\widetilde{\mcE}^*_{i,n}(t)$ are investigated. Finally, we bound a new function of eigenvalues of weighted Gram matrices  $\bV_{i,a(t)}(t)$ and $\widetilde{\bV}_{i,a(t)}$. To begin with, we first state the lemma which upper bounds the estimation error.

Based on the above estimation error, we start the proof by decomposing the regret in \eqref{equ:regret} as
\begin{align}
    \E[R(T)]  &= \E\left[ \sum_{t=1}^{T} \big(\mu_{a^*} - \mu_{a(t)}\big) \right]\\
    &= \E\left[\sum_{t=1}^T \inner{f(\bB_{a^*}) -f(\bB_{a(t)})}{\bepsilon(t)} \right]\\
    &= \E\left[\sum_{t=1}^T \inner{f(\bB_{a^*}) - f(\bB_{a(t)})}{\bnu}\right]   \ ,
\end{align}
where the second equation is a result of~\cite[Lemma~1]{varici2022causal}, which is stated as Lemma~\ref{lem:expectedvalue}. The last equation is due to the inner product being a linear function and $\E [\bepsilon]=\bnu $.

Now define the error events $\mcE_{i}$ and $ \mcE^{*}_{i}$ for $i\in[N]$ for each estimator 
\begin{align}
    \mcE_{i} &\triangleq \biggl\{ \forall t \in [T] : \norm{[\bB(t-1)]_{i}-[\bB]_{i}}_{\bV_i(t) [\widetilde{\bV}_i(t)]^{-1} \bV_i(t)} \leq \beta_t \biggr\}  \ ,  \\
   \mbox{and} \quad  \mcE^{*}_{i} &\triangleq \biggl\{ \forall t \in [T] : \norm{[\bB^{*}(t-1)]_{i}-[\bB^{*}]_{i}}_{\bV^{*}_i(t) [\widetilde{\bV}^{*}_i(t)]^{-1} \bV^{*}_i(t)} \leq \beta_t \biggr\}  \ ,
\end{align}
where the $\beta_t$ is chosen as
\begin{equation}
\beta_t\;\triangleq\;\sqrt{2\log\left(2NT\right)+d\log\left(1+m^2t/dC^2\right)} + 1 + m\ ,
\end{equation}
Let $\mcE_{\cap}$ denote the event that all of the events $\{\mcE_{i}, \mcE^{*}_{i}: i \in [N]\}$ occur simultaneously, i.e., 
\begin{align}
    \mcE_{\cap} \triangleq \Big(\bigcap_{i=1}^N  \mcE_{i} \Big) \bigcap \Big(\bigcap_{i=1}^N  \mcE^{*}_{i} \Big)  \ .  \label{eq:ucb_conf_interval_event_union_RW}
\end{align}
Then by Lemma~\ref{lem:beta_Tindeviation}, we have
 \begin{align}
    \P \left( \mcE_{\cap}^{\C} \right) &\leq  \sum_{i=1}^{N} \P(\mcE_{i}^{\C}) + \sum_{i=1}^{N} \P({\mcE^{*}_{i}}^{\C})  
     \leq \sum_{i=1}^N \left(\frac{1}{2NT} + \frac{1}{2NT} \right)  = \frac{1}{T} \ . \label{eq:bound_Ec_RW}
\end{align}
Then we can bound the regret as follows.
\begin{align} 
R(T) &\leq  \E\left[\mathbbm{1}\{\mcE_{\cap}^{\C}\} \sum_{t=1}^{T} \inner{f(\bB_{a^*}) - f(\bB_{a(t)})}{\bnu} \right] + \E\left[\mathbbm{1}\{\mcE_{\cap}\} \sum_{t=1}^{T} r(t) \right] \\
    &\leq 2m T \P(\mcE_{\cap}^{\C}) +\E\left[\mathbbm{1}\{\mcE_{\cap}\} \sum_{t=1}^{T} r(t) \right] \\
    &\leq  2m + \E\left[\mathbbm{1}\{\mcE_{\cap}\} \sum_{t=1}^T \inner{f(\bB_{a^*})-f(\bB_{a(t)})}{\bnu}  \right] \ . \label{eq:ucb_proof_mid1_RW}
\end{align}
According to the arm selection rules, we have ${\rm UCB}_{a^*}(t) \leq {\rm UCB}_{a(t)}(t)$. Let $\widetilde \bB_{a}$ be the matrix that attains ${\rm UCB}_{a}(t)$, i.e., 
\begin{equation}
    \widetilde \bB_{a}=\argmax_{[\Theta]_i\in \mcC_{i,a}(t)}\inner{f(\Theta)}{\bnu }\ .
\end{equation}
Then under the event $\mcE_{\cap}$, we have
\begin{align}
\inner{f(\bB_{a^*})-f(\bB_{a(t)})}{\bnu } &\leq{\rm UCB}_{a^*}(t) - \inner{f(\bB_{a(t)})}{\bnu } \\
&\leq{\rm UCB}_{a(t)}(t) - \inner{f(\bB_{a(t)})}{\bnu } \\
&= \langle f(\widetilde \bB_{a(t)})-f(\bB_{a(t)}), \bnu \rangle  \ . \label{eq:ucb_proof_mid2_RW}
\end{align}
Subsequently, we use \eqref{eq:ucb_proof_mid2_RW} to derive an upper bound for the expectation term in  \eqref{eq:ucb_proof_mid1_RW}, thereby eliminating the dependence on $a^*$.
\begin{align}
\hspace{-0.12 in}\E\left[\mathbbm{1}\{\mcE_{\cap}\}  \sum_{t=1}^T \inner{f(\bB_{a^*})  -f(\bB_{a(t)})}{\!\bnu}  \right] &\leq \E\left[\mathbbm{1}\{\mcE_{\cap}\} \sum_{t=1}^T \inner{f(\widetilde \bB_{a(t)})-f(\bB_{a(t)})}{\!\bnu}  \right] \label{eq:ucb_refer_in_ts} \\
    &\leq \norm{\bnu} \E\left[\mathbbm{1}\{\mcE_{\cap}\} \sum_{t=1}^T \norm{f(\widetilde \bB_{a(t)})-f(\bB_{a(t)})} \right]  \label{eq:ucb_proof_mid4_RW}\\
    &\leq \norm{\bnu} \E\left[\mathbbm{1}\{\mcE_{\cap}\} \sum_{t=1}^T \norm{f(\widetilde \bB_{a(t)})-f(\bB_{a(t)}(t))} \right]\nonumber\\
    &\quad + \norm{\bnu} \E\left[\mathbbm{1}\{\mcE_{\cap}\} \sum_{t=1}^T \norm{f(\widetilde \bB_{a(t)}(t))-f(\bB_{a(t)})} \right]\ .\label{eq:ucb_proof_mid5_RW}
\end{align}
Note that \eqref{eq:ucb_proof_mid4_RW} follows from the Cauchy-Schwarz inequality while \eqref{eq:ucb_proof_mid5_RW} is due to the triangle inequality. Next, we examine the norm term within the expectation expression in \eqref{eq:ucb_proof_mid5_RW}. By applying the definition of $f$, we obtain
\begin{align}
    \norm{f(\widetilde \bB_{a(t)})-f(\bB_{a(t)}(t))} 
    &\leq \sum_{\ell=1}^L \norm{\big[\widetilde \bB_{a(t)}^{\ell}\big]_{N} - \big[\bB_{a(t)}^{\ell}(t)\big]_{N}} \ ,\\
    \norm{f(\widetilde \bB_{a(t)}(t))-f(\bB_{a(t)})} &\leq \sum_{\ell=1}^L\norm{\big[\bB_{a(t)}^{\ell}(t)\big]_{N} - \big[\bB_{a(t)}^{\ell}\big]_{N}}  \ . \label{eq:ucb_proof_mid5_2}
\end{align}
By the definition of Gram matrices, we have
\begin{align}
    \widetilde{\bV}_{i,a(t)}(t) \preceq\ \bV_{i,a(t)}(t) \ , \quad \bV_{i,a(t)}(t) [\widetilde{\bV}_{i,a(t)}(t)]^{-1} \bV_{i,a(t)}(t) \succeq \bV_{i,a(t)}(t) \succeq \bI_N\ .
\end{align}
We consider the following tuple of matrices
\begin{align}
    &\left(\widetilde{\bB}_{a(t)}, \bB_{a(t)}(t), \bV_{i,a(t)}(t) [\widetilde{\bV}_{i,a(t)}(t)]^{-1} \bV_{i,a(t)}(t), \  i\in[N]\right)\ , \\
    \mbox{and} \quad &\left(\bB_{a(t)}(t), \bB_{a(t)}, \bV_{i,a(t)}(t) [\widetilde{\bV}_{i,a(t)}(t)]^{-1} \bV_{i,a(t)}(t), \  i\in[N] \right) \ .
\end{align}
These tuple of matrices satisfy the condition of Lemma~\ref{lm:bound_l_paths}. We further define the maximum confidence radius $\beta_T\;\triangleq\;\sqrt{2\log\left(2NT\right)+d\log\left(1+m^2T/dC^2\right)} + 1 + m$ and
\begin{align}
    \lambda(t) \triangleq\max_{i \in [N]}  \frac{\sqrt{\lmax{\widetilde{\bV}_{i,a(t)}(t)}}}{\lmin{\bV_{i,a(t)}(t)}} \ , \qquad
    \lambda_T \triangleq \E \left[\sum_{t=1}^{T}\lambda(t )\right] \ .
\end{align}
By using Lemma~\ref{lm:bound_l_paths} to upper bound the tow terms in \eqref{eq:ucb_proof_mid5_2}, we have
\begin{align}
    &\E\left[\mathbbm{1}\{\mcE_{\cap}\}  \sum_{t=1}^T \inner{f(\bB_{a^*})  -f(\bB_{a(t)})}{\bnu}  \right] \\
     &< 2\sum_{\ell=1}^L d^{\frac{\ell-1}{2}} \big(\beta_t +1\big)^\ell \E\left[\mathbbm{1}\{\mcE_{\cap}\} \sum_{t=1}^T \max_{i \in [N]}  \lminn{\bV_{i,a(t)}(t) [\widetilde{\bV}_{i,a(t)}(t)]^{-1} \bV_{i,a(t)}(t)}{-1/2}  \right]  \label{eq:ucb_proof_mid9} \\
     &\leq 2\sum_{\ell=1}^L d^{\frac{\ell-1}{2}} \big(\beta_T +1\big)^\ell \E\left[\mathbbm{1}\{\mcE_{\cap}\} \sum_{t=1}^T \lambda(t) \right]  \\
     & \leq 2 \lambda_T \sum_{\ell=1}^L d^{\frac{\ell-1}{2}} \big(\beta_T +1\big)^\ell \\
     &\leq 4 \lambda_T \big(\beta_T +1\big)^L d^{\frac{L-1}{2}}\ ,  \label{eq:ucb_proof_mid11_RW}
\end{align}
where in \eqref{eq:ucb_proof_mid11_RW}, we use the fact that $\sum_{i=1}^{L}q^L\leq 2q^L$ for $q\geq2$ and $\sqrt{d}(\beta_T+1)>2$.

\paragraph{Bounding $\E\left[\sum_{i=1}^T \lambda(t)\right]$}
What remains is to bound the term $\E\left[\sum_{i=1}^T \lambda(t)\right]$, where $\lambda(t)$ is a function involving the eigenvalues of both Gram matrices $\bV_{i,a(t)}(t)$ and $\widetilde{\bV}_{i,a(t)}(t)$. To proceed, we define the second-moment matrices and its \emph{effective} largest and smallest eigenvalues as
\begin{align}
\label{equ:kappa}
    \Sigma_{i,a}(t) &\triangleq \E_{\bX \sim \P_{a}}\left[ \bX_{\Pa(i)}(t)\bX_{\Pa(i)}^\top(t)\right]\ , \\
    \kappa_{\min} &\triangleq \min_{i\in[N], a\in\mcA,t\in[T]}\smin{\Sigma_{i,a}(t)} \ , \\
    \kappa_{\max} &\triangleq \max_{i\in[N], a\in\mcA,t\in[T]}\smax{\Sigma_{i,a}(t)} \ ,
\end{align}
where $\kappa_{\min}>0$ is guaranteed since there is no deterministic relation between nodes and their patients. These variables are inherent to the system and remain unknown to the learner. Given our focus on the weighted OLS estimator, we also introduce singular values related to auxiliary variables $\bX'_{\Pa(i)}(t)\triangleq\sqrt{w_i(t)}\bX_{\Pa(i)}(t)$ and $\widetilde{\bX}_{\Pa(i)}(t)\triangleq w_i(t)\bX_{\Pa(i)}(t)$. Accordingly, we define the second weighted moment matrices as follows.
\begin{align}
    \Sigma'_{i,a,w_{i}}(t)&\triangleq \E_{\bX \sim \P_{a}}\left[w_i(t) \bX_{\Pa(i)}(t) \bX_{\Pa(i)}^\top(t)\right]\ ,\\ 
    \widetilde{\Sigma}_{i,a,w_{i}}(t)&\triangleq \E_{\bX \sim \P_{a}}\left[w_i^2(t) \bX_{\Pa(i)}(t) \bX_{\Pa(i)}^\top(t)\right]\ . \label{eq:second_moment_definition_V}
\end{align} 
To bound the singular value of the weighted second moment, we first need uniform bounds for the weights. We find a bound for the norm of $\|\bX_{\Pa(i)}(t)\|_{[\widetilde{\bV}_{i,a}(t)]^{-1}}$ across all $a\in \mcA$. This yields the following result.
\begin{equation}
\|\bX_{\Pa(i)}(t)\|_{[\widetilde{\bV}_{i,a(t)}(t)]^{-1}}\leq \frac{1}{\lambda^{1/2}_{\min}(\widetilde{\bV}_{i,a(t)}(t))}\|\bX_{\Pa(i)}(t)\| \leq m \ .
\end{equation}
Then, the weights can be bounded by
\begin{equation}
     \frac{1}{C m} \leq w_i(t) = \min\left\{\frac{1}{C },\frac{1}{C \|\bX_{\Pa(i)}(t)\|_{[\widetilde{\bV}_{i,a(t)}(t)]^{-1}}}\right\} \leq \frac{1}{C } \ .
\end{equation}
Subsequently, we can bound the minimum and maximum singular values of matrices $\Sigma'_{i, a,w_{i}}(t)$ and $\widetilde{\Sigma}_{i, a,w_{i}}(t)$.
\begin{align}
    \kappa'_{\min} = \frac{1}{C m}\kappa_{\min}\leq \smin{\Sigma'_{i,a,w_{i}}(t)}\leq \smax{\Sigma'_{i,a,w_{i}}(t)}\leq \frac{1}{C }\kappa_{\max} =\kappa'_{\max}\ , \\
    \tilde{\kappa}_{\min}= \frac{1}{C^2 m^2}\kappa_{\min}\leq \smin{\widetilde \Sigma_{i,a,w_{i}}(t)}\leq \smax{\widetilde \Sigma_{i,a,w_{i}}(t)}\leq \frac{1}{C^2}\kappa_{\max} =\tilde{\kappa}_{\max}\ .
\end{align}
Moreover, we have $\norm{\bX'_{\Pa(i)}(t)} \leq m' = \frac{1}{\sqrt{C }} m$ and  $\norm{\widetilde{\bX}_{\Pa(i)}(t)} \leq \tilde{m} = \frac{1}{C} m$. In order to proceed, we need upper and lower bounds for the maximum and minimum singular values of $\bU_{i, a(t)}(t)$. However, these bounds depend on the number of non-zero rows of $\bU_{i, a(t)}(t)$ matrices, which equals to values of the random variable $N_{i, a(t)}(t)$. 
Firstly, we define the weighted constants
\begin{align}
    \gamma_n &\triangleq  \max\left\{\alpha m^2 \sqrt{n},\alpha^2 m^2 \right\} \ ,  \label{eq:def_varepsilon_n_RW}\\
    \gamma'_n &\triangleq  \max\left\{\alpha m'^2 \sqrt{n},\alpha^2 m'^2 \right\} \ ,  \label{eq:def_varepsilon_n_prime_RW}\\
    \tilde{\gamma}_n &\triangleq  \max\left\{\alpha \tilde{m}^2 \sqrt{n},\alpha^2 \tilde{m}^2 \right\} \ , \quad \forall n \in [T] \ .  \label{eq:def_varepsilon_n_tilde_RW}
\end{align}
Then for every $i\in [N], t\in [T]$, and $n\in[t]$, we define the error events corresponding to the maximum and minimum singular values of $\bU_{i}(t)$ and $\widetilde{\bU}_{i}(t)$ as
\begin{align}
    \mcE_{i,n}(t) \triangleq& \Bigg\{ N_{i}(t) = n \quad \text{and} \notag\left\{\smin{\bU_{i}(t)}\leq \sqrt{\max \left \{0, n \kappa'_{\min}- \gamma'_n\right\}}\right. \\ 
    & \hspace{-0.in}  \  \text{or} \  \left. \smax{\bU_{i}(t)}\geq \sqrt{n \kappa'_{\max} + \gamma'_n}  \right\} \Bigg\} \ , \label{eq:def_error_int_obs_RW} \\
    \mcE^{*}_{i,n}(t) \triangleq& \Bigg\{ N^{*}_{i}(t) = n \quad \text{and} \notag  \left\{\smin{\bU^{*}_{i}(t)}\leq \sqrt{\max \left \{0, n \kappa'_{\min}- \gamma'_n\right\}}\right.\\ 
    & \hspace{-0.in} \  \text{or} \   \left.\smax{\bU^{*}_{i}(t)}\geq \sqrt{n \kappa'_{\max}+\gamma'_n} \right\} \Bigg\}  \ , \label{eq:def_error_int_int_RW} \\
    \widetilde{\mcE}_{i,n}(t) \triangleq& \Bigg\{ N_{i}(t) = n \quad \text{and} \notag\left\{\smin{\widetilde{\bU}_{i}(t)}\leq \sqrt{ \max\left\{0, n \tilde{\kappa}_{\min} - \tilde{\gamma}_n\right\}}\right. \\ 
    & \hspace{-0.in}  \  \text{or} \  \left. \smax{\widetilde{\bU}_{i}(t)}\geq \sqrt{n \tilde{\kappa}_{\max} + \tilde{\gamma}_n}  \right\} \Bigg\} \ , \label{eq:def_error_int_obs_tilde_RW} \\
    \widetilde{\mcE}^{*}_{i,n}(t) \triangleq& \Bigg\{ N^{*}_{i}(t) = n \quad \text{and} \notag  \left\{\smin{\widetilde{\bU}^{*}_{i}(t)}\leq \sqrt{\max\left\{0, n \tilde{\kappa}_{\min}- \tilde{\gamma}_n\right\}}\right.\\ 
    & \hspace{-0.in} \  \text{or} \   \left.\smax{\widetilde{\bU}^{*}_{i}(t)}\geq \sqrt{n \tilde{\kappa}_{\max}+\tilde{\gamma}_n} \right\} \Bigg\}  \ . \label{eq:def_error_int_int_tilde_RW}
\end{align}
In other words, the event $\mcE_{i,n}(t)$ indicates the situation in which either $\smin{\bU_{i}(t)}$ or $\smax{\bU_{i}(t)}$ violates the established lower and upper bounds. Likewise, $\mcE^{*}_{i,n}(t),\widetilde{\mcE}_{i,n}(t)$ and $\widetilde{\mcE}^{*}_{i,n}(t)$  are associated with the singular values of $\bU^{*}_{i}(t),\widetilde{\bU}_{i}(t)$ and $\widetilde{\bU}^{*}_{i}(t)$. The next result shows that these events occur with a low probability.

\begin{lemma} \label{lm:error_event_int_RW}
The probability of the error events $\mcE_{i,n}(t)$, $\mcE^{*}_{i,n}(t)$, $\widetilde{\mcE}_{i,n}(t)$ and $\widetilde{\mcE}^{*}_{i,n}(t)$ defined in \eqref{eq:def_error_int_obs_RW} to \eqref{eq:def_error_int_int_tilde_RW} are upper bounded as
\begin{align}
    \P(\mcE_{i,n}(t)) &\leq d \exp \left( -\frac{3\alpha^2}{16} \right) \ , \qquad
    \P(\mcE^{*}_{i,n}(t)) \leq d \exp \left( -\frac{3\alpha^2}{16} \right) \ , \\
    \mbox{and} \qquad\P(\widetilde{\mcE}_{i,n}(t)) &\leq d \exp \left( -\frac{3\alpha^2}{16} \right) \ , 
     \qquad \P(\widetilde{\mcE}^{*}_{i,n}(t)) \leq d \exp \left( -\frac{3\alpha^2}{16} \right) \ .
\end{align}
\end{lemma}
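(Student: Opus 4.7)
The plan is to prove all four tail bounds by the same template: recognise each $\bU$--type matrix as a sum of bounded, adapted, positive-semidefinite rank--one matrices on the $d$-dimensional effective parent subspace of node $i$, apply a matrix Bernstein/Azuma inequality, and then convert an eigenvalue deviation bound on the Gram matrix into a singular-value bound on $\bU$ itself.

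Concretely, I would fix $i\in[N]$, $t\in[T]$, and work on the event $\{N_i(t)=n\}$, so that $\bU_i(t)$ has exactly $n$ nonzero rows $\sqrt{w_i(s)}\,\bX_{\Pa(i)}(s)^{\top}$ indexed by $\mcS=\{s\in[t]:i\notin a(s)\}$. Restrict attention to the $d$-dimensional subspace supported on $\Pa(i)$; in this subspace, $\bU_i^{\top}(t)\bU_i(t)=\sum_{s\in\mcS}\bZ_s$, where $\bZ_s\triangleq w_i(s)\bX_{\Pa(i)}(s)\bX_{\Pa(i)}^{\top}(s)$ is PSD with $\|\bZ_s\|\leq (m')^{2}$ (using $\|\bX_{\Pa(i)}(s)\|\leq m$ and $w_i(s)\leq 1/C$, hence $\|\sqrt{w_i(s)}\,\bX_{\Pa(i)}(s)\|\leq m'=m/\sqrt{C}$). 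Conditioning on the filtration $\mcF_i(s-1)$ of Lemma~\ref{lem:beta_T_timeinvariant}, which makes $w_i(s)$ predictable, we have $\E_{s-1}[\bZ_s]=\Sigma'_{i,a(s),w_i}(s)$ with eigenvalues sandwiched in $[\kappa'_{\min},\kappa'_{\max}]$, so $\lambda_{\min}\bigl(\sum_s\E_{s-1}[\bZ_s]\bigr)\geq n\kappa'_{\min}$ and $\lambda_{\max}\bigl(\sum_s\E_{s-1}[\bZ_s]\bigr)\leq n\kappa'_{\max}$.

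Next, apply a matrix Freedman/Bernstein inequality to the PSD martingale sum $\bU_i^{\top}(t)\bU_i(t)-\sum_s\E_{s-1}[\bZ_s]$. The per-step bound is $(m')^{2}$, the predictable quadratic variation is at most $n(m')^{4}$, and the dimension of the effective subspace is $d$. The Bernstein inequality gives
\begin{equation}
\P\!\left(\bigl\|\bU_i^{\top}(t)\bU_i(t)-\sum_{s}\E_{s-1}[\bZ_s]\bigr\|\geq r\right)\;\leq\; d\exp\!\left(-\tfrac{r^{2}/2}{n(m')^{4}+(m')^{2}r/3}\right)\ ,
\end{equation}
and choosing $r=\gamma'_n=\max\{\alpha(m')^{2}\sqrt{n},\,\alpha^{2}(m')^{2}\}$ exactly balances the two regimes of the denominator, producing the tail probability $d\exp(-3\alpha^{2}/16)$. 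A triangle inequality then gives simultaneously $\lambda_{\max}(\bU_i^{\top}\bU_i)\leq n\kappa'_{\max}+\gamma'_n$ and $\lambda_{\min}(\bU_i^{\top}\bU_i)\geq \max\{0,\,n\kappa'_{\min}-\gamma'_n\}$, which, after taking square roots, is precisely the complement of $\mcE_{i,n}(t)$. The analyses of $\mcE^{*}_{i,n}(t)$, $\widetilde{\mcE}_{i,n}(t)$, and $\widetilde{\mcE}^{*}_{i,n}(t)$ are identical up to relabelling: for the $*$-versions, replace $i\notin a(s)$ by $i\in a(s)$; for the tilde versions, replace the row $\sqrt{w_i(s)}\,\bX_{\Pa(i)}(s)$ by $w_i(s)\,\bX_{\Pa(i)}(s)$, so that the per-row norm becomes $\tilde m=m/C$, the conditional covariance becomes $\widetilde\Sigma_{i,a(s),w_i}(s)$ with eigenvalues in $[\tilde\kappa_{\min},\tilde\kappa_{\max}]$, and the threshold becomes $\tilde\gamma_n$.

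The main obstacle is that the weights $w_i(s)$ are \emph{not} independent of the rows $\bX_{\Pa(i)}(s)$: by \eqref{equ:weightsAC} they depend on $\bX_{\Pa(i)}(s)$ itself and on the past via $\widetilde{\bV}_{i,a(s)}(s)$. This rules out a vanilla matrix Chernoff for i.i.d.\ summands and is the reason for adopting the enlarged filtration $\mcF_i(s-1)=\sigma(\bX_{\Pa(i)}(1),\dots,\bX_{\Pa(i)}(s))$ already introduced in the proof of Lemma~\ref{lem:beta_T_timeinvariant}, under which $w_i(s)$ is predictable and hence $\bZ_s-\E_{s-1}[\bZ_s]$ is a genuine bounded matrix martingale difference. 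Once this filtration choice is made, the matrix Bernstein/Azuma inequality applies in its adapted form and the rest of the argument is routine; the factor $d$ in the final bound is exactly the dimension prefactor in Bernstein, coming from the fact that the $\bZ_s$ live in a $d$-dimensional subspace rather than in $\R^{N}$.
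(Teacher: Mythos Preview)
Your plan has a genuine gap in the filtration step. You propose the enlarged filtration $\mcF_i(s-1)=\sigma(\bX_{\Pa(i)}(1),\dots,\bX_{\Pa(i)}(s))$ so that $w_i(s)$ becomes predictable. But under this filtration $\bX_{\Pa(i)}(s)$ is itself $\mcF_i(s-1)$-measurable, hence the entire rank-one summand $w_i(s)\,\bX_{\Pa(i)}(s)\bX_{\Pa(i)}^{\top}(s)$ is predictable: its conditional expectation given $\mcF_i(s-1)$ is itself, not $\Sigma'_{i,a(s),w_i}(s)$ as you claim, and the matrix martingale differences vanish identically, so Bernstein/Freedman yields nothing. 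The enlarged filtration is legitimate in Lemma~\ref{lem:beta_T_timeinvariant} precisely because there the randomness being controlled is the scalar noise $\epsilon_i(s)$, and the weighted feature $w_i(s)\bX_{\Pa(i)}(s)$ is \emph{supposed} to be predictable. Here the randomness you need is in $\bX_{\Pa(i)}(s)$; since $w_i(s)$ is a deterministic function of $\bX_{\Pa(i)}(s)$ and the past, no filtration can make $w_i(s)$ predictable while leaving $\bX_{\Pa(i)}(s)$ nondegenerate.

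The paper's proof avoids this by appealing directly to \cite[Lemma~3]{varici2022causal} with the substituted constants $(\kappa'_{\min},\kappa'_{\max},m',\gamma'_n)$ and $(\tilde\kappa_{\min},\tilde\kappa_{\max},\tilde m,\tilde\gamma_n)$, together with Lemma~\ref{lm:smin_simple_bound} to pass from a Gram-matrix deviation to singular-value bounds. If you want a self-contained fix, exploit the deterministic sandwich $1/(Cm)\le w_i(s)\le 1/C$: apply matrix Bernstein to the \emph{unweighted} sum $\sum_{s}\bX_{\Pa(i)}(s)\bX_{\Pa(i)}^{\top}(s)$ under the natural filtration in which $\bX_{\Pa(i)}(s)$ is genuinely new at step $s$, obtaining the deviation bound $\gamma_n$ with failure probability at most $d\exp(-3\alpha^2/16)$; then sandwich $\bU_i^{\top}(t)\bU_i(t)$ between $(Cm)^{-1}$ and $C^{-1}$ times the unweighted Gram matrix. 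This gives $\lambda_{\max}(\bU_i^{\top}\bU_i)\le n\kappa'_{\max}+\gamma_n/C=n\kappa'_{\max}+\gamma'_n$ and $\lambda_{\min}(\bU_i^{\top}\bU_i)\ge n\kappa'_{\min}-\gamma_n/(Cm)\ge n\kappa'_{\min}-\gamma'_n$ (the last step uses $m\ge 1$), which is exactly the complement of $\mcE_{i,n}(t)$; the remaining three events follow in the same way with $w_i^2(s)$ and the starred index set.
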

\begin{proof}
    This is a direct result from \cite[Lemma~3]{varici2022causal}, where we use $\kappa'_{\min}, \kappa'_{\max}$ and $\gamma'$ for events $\mcE_{i,n}(t)$ and $\mcE^{*}_{i,n}(t)$ and using $\tilde{\kappa}_{\min}, \tilde{\kappa}_{\max}$ and $\tilde{\gamma}$ for events $\widetilde{\mcE}_{i,n}(t)$ and $\widetilde{\mcE}^{*}_{i,n}(t)$. The only difference is that one need to use the following lemma, which is an immediate result of \cite[Lemma~6]{varici2022causal}.
    \begin{lemma}\label{lm:smin_simple_bound}
    Consider matrices $\bU$ and $\bA$ that satisfy 
    \begin{align}\label{eq:smin_simple_bound_condition}
        \norm{\bU^\top \bU - \bA} \leq \gamma \ . 
    \end{align}
    Then we have,
    \begin{align}
        \smax{\bU} &\leq \sqrt{\smax{\bA}+\gamma} \ , \quad 
        \mbox{and} \quad \smin{\bU} \geq \sqrt{\max\{0, \smin{\bA}- \gamma\}}     \ .
    \end{align}
    \end{lemma}

Now that we have bounds on the probability of error events, we define the union error event $\mcE_{\cup}$ as 
\begin{equation} \label{eq:def_union_error_singular_RW}
    \mcE_{\cup} \triangleq \{ \exists\ (i,t,n) : i \in [N], t \in [T], n \in [t], \ \mcE_{i,n}(t) \ \text{or} \  \mcE^{*}_{i,n}(t) \ \text{or} \ \widetilde{\mcE}_{i,n}(t) \ \text{or} \  \widetilde{\mcE}^{*}_{i,n}(t)  \} \ .
\end{equation}
By taking a union bound and using Lemma \ref{lm:error_event_int_RW} we have
\begin{align}
    \P(\mcE_{\cup}) &\leq \sum_{i=1}^{N} \sum_{t=1}^{T} \sum_{n=1}^{t} \left(\P(\mcE_{i,n}(t)) + \P(\mcE^{*}_{i,n}(t)) + \P(\widetilde{\mcE}_{i,n}(t)) + \P(\widetilde{\mcE}^{*}_{i,n}(t))\right) \\
    &\leq 2 N T (T+1) d \exp \left( -\frac{3\alpha^2}{16} \right) \ . \label{eq:def_union_error_singular_prob_RW}
\end{align}

\paragraph{Bounding $\E \left[\mathbbm{1}\{\mcE_{\cup}\} \sum_{t=1}^T \lambda(t) \right]$} 
Since $\lmin{\bV_{i,a(t)}(t)}\geq 1$, we have the following unconditional upper bound.
\begin{align}
    \lambda(t) &= \max_{i \in [N]}  \frac{\sqrt{\lmax{\widetilde{\bV}_{i,a(t)}(t)}}}{\lmin{\bV_{i,a(t)}(t)  }}  \leq \max_{i \in [N]} \sqrt{\lmax{\widetilde{\bV}_{i,a(t)}(t)}}  \label{eq:big_bounding_error_1_RW}\\
    &= \max_{i \in [N]} \sqrt{ \lmax{ \sum_{s=1}^{t} \mathbbm{1}\{i \in a(s)\} w^2_i(s) \bX_{\Pa(i)}(s) \bX_{\Pa(i)}^{\top}(s)+\bI_{N}}} \\
    &\leq \max_{i \in [N]} \sqrt{ 1 + \frac{1}{C ^2}\sum_{s=1}^{t} \mathbbm{1}\{i \in a(s)\} \lmax{\bX_{\Pa(i)}(s) \bX_{\Pa(i)}^{\top}(s)} }\\
    &\leq \max_{i \in [N]} \sqrt{ 1 + \frac{1}{C ^2} \sum_{s=1}^{t} \lmax{\bX_{\Pa(i)}(s) \bX_{\Pa(i)}^{\top}(s)} } \\
    &= \max_{i \in [N]}  \sqrt{ 1 + \frac{1}{C ^2} \sum_{s=1}^{t} \norm{\bX_{\Pa(i)}(s)}^2 }\\
    &\leq\sqrt{ \frac{m^2}{C^2}t + 1 }\ ,  \label{eq:big_bounding_error_2_RW}
\end{align}
where~\eqref{eq:big_bounding_error_2_RW} follows from the fact that $\norm{\bX}\leq m$. Hence, we have
\begin{align}
    \E \left[\mathbbm{1}\{\mcE_{\cup}\} \sum_{t=1}^T \lambda(t) \right] &\overset{\eqref{eq:big_bounding_error_1_RW}}{\leq} \E \left[\mathbbm{1}\{\mcE_{\cup}\} \sum_{t=1}^T \sqrt{\lmax{\widetilde{\bV}_{N,a(t)}(t)}}  \right] \\
    &\overset{\eqref{eq:big_bounding_error_2_RW}}{\leq} \E \left[\mathbbm{1}\{\mcE_{\cup}\} \sum_{t=1}^T \sqrt{ \frac{m^2}{C ^2}t + 1}  \right] \\
    &= \E[\mathbbm{1}\{\mcE_{\cup}\}] \sum_{t=1}^T \sqrt{ \frac{m^2}{C ^2}t + 1}   \\
    &= \P(\mcE_{\cup}) \underbrace{\sum_{t=1}^T \sqrt{ \frac{m^2}{C ^2}t + 1}}_{A_1} \ . \label{eq:big_bounding_error_3_RW}
\end{align}
Furthermore, $A_1$ is bounded as
\begin{align}
    \sum_{t=1}^T \sqrt{ \frac{m^2}{C ^2}t + 1} &\leq \frac{m}{C }\sqrt{T}+1 +  \sum_{t=1}^{T-1} (\frac{m}{C }\sqrt{t}+1) \\
    &\leq \frac{m}{C }\sqrt{T} + T + \int_{t=1}^{T} \frac{m}{C }\sqrt{t} dt \\
    &= \frac{m}{C }\sqrt{T} + T + \frac{2m}{3C }(T^{3/2}-1) \ . \label{eq:big_bounding_error_4_RW}
\end{align}
By setting $\alpha = \sqrt{\frac{16}{3}\log(2d N T^{5/2}(T+1))}$, we obtain
\begin{align}
     \E \left[\mathbbm{1}\{\mcE_{\cup}\} \sum_{t=1}^T \sqrt{m^2 t + 1}  \right] &\overset{\eqref{eq:big_bounding_error_3_RW}}{\leq} \P(\mcE_{\cup}) \sum_{t=1}^{T}\sqrt{m^2t+1} \\
     &\overset{\eqref{eq:def_union_error_singular_prob_RW}}{\leq} \underset{= T^{-3/2}}{\underbrace{\frac{N T (T+1) d}{\exp(\log(d N T^{5/2} (T+1)))}}}  \sum_{t=1}^{T}\sqrt{m^2t+1}  \\
     &\overset{\eqref{eq:big_bounding_error_4_RW}}{\leq} T^{-3/2} \left(\frac{m}{C }\sqrt{T} + T+ \frac{2m}{3C }(T^{3/2}-1)\right) \\
     & < \frac{m}{C T} + \frac{2m}{3C } +   1 \ . \label{eq:error_event_bound_RW}
\end{align}

\paragraph{Bounding $\E \left[\mathbbm{1}\{\mcE_{\cup}^\C\} \sum_{t=1}^T \lambda(t)\right]$} 
Considering the event $\mcE_{\cup}^\C$, which encompasses all the events $\{\mcE_{i,n}^{\C}(t),$ ${\mcE^{*}_{i,n}}^{\C}(t), \widetilde{\mcE}_{i,n}^{\C}(t), \widetilde\mcE^{*}_{i,n}{}^{\C}(t) : i \in [N], t \in [T], n \in [t]\}$ being hold. Therefore, we can use the following bounds on the singular values
\begin{align}
    \smin{\bU_{i,a(t)}(t)} &\geq  \sqrt{\max\left\{0, N_{i,a(t)}(t)\kappa'_{\min}- \gamma'_n\right\}}  \label{eq:smin_D_ita_RW} \ ,  \\
    \smax{\widetilde{\bU}_{i,a(t)}(t)} &\leq \sqrt{N_{i,a(t)}(t) \tilde{\kappa}_{\max}+ \tilde{\gamma}_n}\label{eq:smax_D_ita_RW}  \ .
\end{align}
Thus, the targeted sum can be upper-bounded by
\begin{align}
    \E \left[\mathbbm{1}\{\mcE_{\cup}^\C\} \sum_{t=1}^T\lambda(t)\right] 
    &= \E \left[\mathbbm{1}\{\mcE_{\cup}^\C\} \sum_{t=1}^T \max_{i \in [N]}  \frac{\sqrt{\lmax{\widetilde{\bV}_{i,a(t)}(t)}}}{\lmin{\bV_{i,a(t)}(t)  }}\right]  \\
    &= \E \left[\mathbbm{1}\{\mcE_{\cup}^\C\} \sum_{t=1}^T \max_{i \in [N]}  \frac{\sqrt{\smaxx{\widetilde{\bU}_{i,a(t)}(t)}{2}+1}}{\sminn{\bU_{i,a(t)}(t)}{2}+1}\right] \\
    &\leq \E  \sum_{t=1}^T \max_{i \in [N]} \left[ \frac{\sqrt{N_{i,a(t)}(t) \tilde{\kappa}_{\max}+ \tilde{\gamma}_n+1} }{\max\left\{0, N_{i,a(t)}(t)\kappa'_{\min}- \gamma'_n\right\}+1} \right] \ .
\end{align}
It is worth noting that the term in the summation has a critical point, and we bound the two regions separately. To initiate this process, we define the function $h(x)$ as
\begin{equation}
    h(x) \triangleq \frac{\sqrt{x \tilde{\kappa}_{\max}+ \tilde{\gamma}_n+1} }{\max\left\{0, x\kappa'_{\min}- \gamma'_n\right\}+1}\ , \quad x>0  \label{eq:h_function_RW} \ .
\end{equation}
In order to analyze the behavior of the function $h$, we introduce $\tau\triangleq\frac{\alpha^2m^6}{\kappa_{\min}^2}$ as the critical point. Note that when $x\leq \tau$, we have $x\kappa'_{\min} < \gamma_n$. In this case, $h(x)$ is equal to
\begin{equation}
    h(x) = \sqrt{x \tilde{\kappa}_{\max}+ \tilde{\gamma}_n+1}  \label{eq:h_function_small} \ ,
\end{equation}
which is an increasing function over the region. To upper bound the $h$ function when $x>\tau$, we define the  $g$ function when $x>\tau$ as follows.
\begin{equation}
    g(x) \triangleq  \frac{\sqrt{x \kappa_{\max}+ \alpha m^2\sqrt{x}}}{ x\kappa_{\min}/m- \alpha m^2\sqrt{x}} + \frac{C }{ x\kappa_{\min}/m- \alpha m^2\sqrt{x}}\ , \quad x\geq \tau \ .
\end{equation}
Then when $x > \tau$, we establish the following relation.
\begin{align}
    h(x)= \frac{\sqrt{x \tilde{\kappa}_{\max}+ \tilde{\gamma}_n+1}}{ x\kappa'_{\min}- \gamma'_n+1} =\frac{\sqrt{x \kappa_{\max}+ \gamma_n+C ^2}}{ x\kappa_{\min}/m- \gamma_n+C }\label{equ:boundh:1}  < \frac{\sqrt{x \kappa_{\max}+ \gamma_n}}{ x\kappa_{\min}/m- \gamma_n} \ = g(x) \ ,
\end{align}
where the equality holds due to the definitions in \eqref{eq:def_varepsilon_n_RW}-\eqref{eq:def_varepsilon_n_tilde_RW} and the inequality holds due to the fact that $\frac{\sqrt{x+a^2}}{y+a} < \frac{\sqrt{x}}{y} + \frac{a}{y}$ when $a>0$. Moreover, when $x>\tau$, we have $\gamma_n = \alpha m^2\sqrt{n}$. Based on this, we can use the $g$ function to upper bound $h$ function when $x> \tau$. However, since $g$ tends to infinity at $\tau$, we begin bounding from a larger constant, specifically $4\tau$. However, for the summation when bounding $x\leq 4\tau$, we need to consider the following monotonicity.

\begin{lemma}
\label{lem:gh}
$h(x)$ and $g(x)$ are both decreasing functions when $x>\tau$, where $\tau$ is defined as $\frac{\alpha^2m^6}{\kappa_{\min}^2}$.
\end{lemma}
\begin{proof}
See Appendix~\ref{proof:lem:gh}.
\end{proof}
Now we are ready to bound the last term
\begin{equation}
     \E \left[\mathbbm{1}\{\mcE_{\cup}^\C\} \sum_{t=1}^T\lambda(t)\right]\leq \E \sum_{t=1}^{T} \max_{i\in[N]} h(N_{i,a(t)}(t))\ .
\end{equation}
We define the set of time indices at which the chosen actions are under-explored as
\begin{equation}
    \mcK_h \triangleq \left\{ t\in[T]~|~ \exists~i\in[N]: N_{i,a(t)}(t)\leq 4\tau  \right\}\ .
\end{equation}
It can be readily verified that $|\mcK_h|\leq 8N\tau$. Furthermore, when $x\in\mcK_h$, we have
\begin{align}
    h(x) \leq h(\tau) \leq \frac{1}{C }\sqrt{\kappa_{\max}\tau+\alpha m^2 \sqrt{\tau}} +1 \ , \ x \leq \tau \ .
\end{align}
Then we can upper bound the summation when $\mcK_h$ occurs as follows.
\begin{equation}
    \E \bigg[ \sum_{t=1}^T \mathbbm{1}\{t\in \mcK_h\} \max_{i \in [N]}  h(N_{i,a(t)}(t)) \bigg] \leq 2\sum_{i=1}^{N} \sum_{n=1}^{4\tau} h(n) \leq 8N \tau \left(\frac{1}{C }\sqrt{\kappa_{\max}\tau+\alpha m^2 \sqrt{\tau}} +1\right) \ .
\end{equation}
Now we only need to bound the remaining part when $t \not\in \mcK_h$ 
\begin{equation}
    \E \sum_{t=1}^T \mathbbm{1}\{t\in \mcK_h^{\C}\} \max_{i \in [N]}  h(N_{i,a(t)}(t)) \ .
\end{equation}
Note that when $t\in \mcK_h^{\C}$, we have $N_{i,a(t)}(t)>\tau$ for all $i\in[N]$, and we have 
\begin{equation}
    \max_{i\in [N]} h(N_{i,a(t)}(t))\leq   \max_{i\in [N]} g(N_{i,a(t)}(t)) \ .
\end{equation}
Furthermore, note that there might be multiple nodes that achieve $\max_{i \in [N]} g(N_{i,a(t)}(t))$. Without loss of generality, we select the solution with minimum index as $\argmax$ (or $\argmin$).
We denote the node that achieves the maximum value of the function $g$ as $i_t$.
\begin{align}
    i_t &\triangleq \argmax_{i \in [N]} g(N_{i,a(t)}(t)) = \argmin_{i \in [N]} N_{i,a(t)}(t) \ ,
\end{align}
where the last equality is a result of the fact that $g$ is a decreasing function when $x\geq \tau$.
Note that $i_t$ does not capture whether $i$ belongs to $a(t)$ or not. To address this challenge, we define the sets of time indices where $i_t=i$ for each of these two cases as follows.
\begin{align}
    \mcS_{i} &\triangleq \{t \in [T] : t\not\in\mcK_h, \ i_t = i, \ i \notin a(t) \} \ , \quad \forall i \in [N] \ , \\
    \mbox{and} \qquad \mcS^{*}_i &\triangleq \{t \in [T] : t\not\in\mcK_h, \  i_t = i, \ i \in a(t) \} \ , \quad \forall i \in [N] \ . 
\end{align}
Denote the elements of $\mcS_{i}$ by $S_{i,1},\dots, S_{i,|\mcS_{i}|}$. Until time $S_{i,n}$, the event $\{i_t=i, i \notin a(t)\}$ occurs exactly $n$ times outside $\mcK_h$ set. Similarly $\{i_t=i, i \in a(t)\}$ occurs $n$ times outside $\mcK_h$ set until time $S^{*}_{i,n}$. Then
\begin{align}
    N_{i}(S_{i,n}) = \sum_{t=1}^{S_{i,n}} \mathbbm{1}\{i \notin a(t)\} & \geq \sum_{t=1}^{S_{i,n}} \mathbbm{1}\{i_t=i, i \notin a(t)\}=n+4\tau  \ , \label{eq:h_side_2} \\
    \mbox{and} \qquad N^{*}_{i}(S^{*}_{i,n}) = \sum_{t=1}^{S^{*}_{i,n}} \mathbbm{1}\{i \in a(t)\}  &\geq \sum_{t=1}^{S^{*}_{i,n}} \mathbbm{1}\{i_t=i, i \in a(t)\}=n+4\tau  \ . \label{eq:h_side_3}
\end{align}
Using the above results and noting that $g$ is a decreasing function, we obtain
\begin{align}
    \sum_{t=1}^{T} \mathbbm{1}\{t\in \mcK_h^{\C}\}  \max_{i \in [N]}g(N_{i_t,a(t)}(t)) &= \sum_{t=1}^{T} \mathbbm{1}\{t\in \mcK_h^{\C}\} g(N_{i_t,a(t)}(t)) \\
    &=\sum_{i=1}^{N} \sum_{t: t \in \mcS_{i}} g(N_{i}(t)) + \sum_{i=1}^{N} \sum_{t: t \in \mcS^{*}_i} g(N_{i}^{*}(t)) \\ 
    &= \sum_{i=1}^{N} \sum_{n=1}^{|\mcS_{i}|} \underset{\overset{\eqref{eq:h_side_2}}{\leq} g(n+4\tau)}{\underbrace{g(N_{i}(S_{i,n}))}} + \sum_{i=1}^{N} \sum_{n=1}^{|\mcS^{*}_i|} \underset{\overset{\eqref{eq:h_side_3}}{\leq} g(n+4\tau)}{\underbrace{g(N^{*}_{i}(S^{*}_{i,n}))}} \\
    &\leq \sum_{i=1}^{N} \sum_{n=4\tau+1}^{|\mcS_{i}|+4\tau} g(n) + \sum_{i=1}^{N} \sum_{n=4\tau+1}^{|\mcS^{*}_i|+4\tau} g(n)\label{equ:G_bound_e:final}  \ .
\end{align}
We bound the discrete sums through integrals and define
\begin{equation}
    G_{\tau}(y) = \int_{x=4\tau}^{y} g(x)dx \ , \quad y\geq 4\tau \ . \label{eq:Hx_definition}
\end{equation}
Since $g(x)$ is a positive, non-increasing function, for any $k \in \mathbb{N}, k\geq 4\tau+1$ we have
\begin{align}
    \sum_{n=4\tau+1}^{k} g(n) \leq \int_{x=4\tau}^{k} g(x)dx = G_{\tau}(k) \label{eq:h_side_4} \ .
\end{align}
Then, the summation is upper bounded by
\begin{align}
    \sum_{t=1}^{T}\mathbbm{1}\{t\in \mcK_h^{\C}\}  \max_{i \in [N]}g(N_{i_t,a(t)}(t)) &\leq \sum_{i=1}^{N} \sum_{n=4\tau+1}^{|\mcS_{i}|+4\tau} g(n) + \sum_{i=1}^{N} \sum_{n=4\tau+1}^{|\mcS^{*}_i|+4\tau} g(n) \\ &\overset{\eqref{eq:h_side_4}}{\leq}  \sum_{i=1}^{N} G_{\tau}(|\mcS_{i}|+4\tau) + \sum_{i=1}^{N} G_{\tau}(| \mcS^{*}_{i}|+4\tau) \ . \label{eq:h_side_5}
\end{align}
Since $g(x)$ is positive and decreasing, and $G(y)$ is defined as an integral of the $g$ function with a positive first derivative and negative second derivative, it can be deduced that $G$ is a concave function.
\begin{align}
    \sum_{i=1}^{N} G_{\tau}(|\mcS_{i}|+4\tau) + \sum_{i=1}^{N} G_{\tau}(|\mcS^{*}_{i}|+4\tau) &\leq 2N \times G_{\tau} \left(\frac{1}{2N} \sum_{i=1}^{N} |\mcS_{i}| +  \frac{1}{2N} \sum_{i=1}^{N} |\mcS^{*}_{i}|+4\tau \right) \\
    &\leq 2N \times G_{\tau}\left(\frac{T}{2N}+4\tau \right) \ . \label{eq:h_side_6}
\end{align}
Next, we proceed to establish an upper bound for the function $G$.
\begin{lemma} \label{lem:G}
    The $G$ function can be upper bounded as
    \begin{align}
    &G_{\tau}\left(\frac{T}{2N}+4\tau\right) \nonumber\\
    &\leq 2\frac{\sqrt{m \kappa_{\max}}}{\kappa_{\min}}\left(\sqrt{\frac{T}{2N}}+\sqrt{\tau} \log\left(\sqrt{\frac{T}{2N}} + \sqrt{\tau}\right)\right)\nonumber\\
    &\quad+ \frac{4}{\kappa_{\min}}\sqrt[4]{\frac{T}{2N}} + 2\sqrt{\frac{\alpha m^5}{\kappa_{\min}^3}} \log \left(\frac{\sqrt{\frac{1}{\tau}}\sqrt[4]{\frac{T}{2N}}+\sqrt[4]{4}+1}{\sqrt{\frac{1}{\tau}}\sqrt[4]{\frac{T}{2N}}+\sqrt[4]{4}-1}\right)\nonumber\\
    &\quad + \frac{2m\log\left(\frac{\kappa_{\min}}{m}\sqrt{\frac{T}{2N}}+\alpha m^2\right)}{\kappa_{\min}} C  \ . \label{equ:G_bound_ec:final_lemma}
    \end{align}
\end{lemma}

\begin{proof}
    See Appendix~\ref{proof:lem:G}
\end{proof}

Combining the results in \eqref{eq:error_event_bound_RW}, \eqref{equ:G_bound_e:final}, \eqref{eq:h_side_6} and \eqref{equ:G_bound_ec:final_lemma}, let $E_1$ denote the accumulation of terms that exhibit at most logarithmic growth rates with respect to $T$ and $C $.
\begin{align}
    E_1&= 4N\frac{\sqrt{m \kappa_{\max}}}{\kappa_{\min}} \sqrt{\tau} \log\left(\sqrt{\frac{T}{2N}} + \sqrt{\tau}\right) + 4N\sqrt{\frac{\alpha m^5}{\kappa_{\min}^3}} \log \left(\frac{\sqrt{\frac{1}{\tau}}\sqrt[4]{\frac{T}{2N}}+\sqrt[4]{4}+1}{\sqrt{\frac{1}{\tau}}\sqrt[4]{\frac{T}{2N}}+\sqrt[4]{4}-1}\right)\\
    &\quad + 8N \tau \left(\frac{1}{C }\sqrt{\kappa_{\max}\tau+\alpha m^2 \sqrt{\tau}} +1\right) + \frac{m}{C T} + \frac{2m}{3C } +   1 \ .
\end{align}
Therefore, the final result for the bound is
\begin{align}
    \E\left[\sum_{t=1}^T  \lambda(t) 
      \right] &\leq \frac{4\sqrt{m \kappa_{\max}}}{\kappa_{\min}}\sqrt{NT} + \frac{8}{\kappa_{\min}}\sqrt[4]{\frac{N^3T}{2}} \\
      &\quad + \frac{4Nm}{\kappa_{\min}}\log\left(\frac{\kappa_{\min}}{m}\sqrt{\frac{T}{2N}}+\alpha m^2\right) C  +E_1 \ .
      \label{eq:accumulation_E1}
\end{align}
Plugging~\eqref{eq:accumulation_E1} into~\eqref{eq:ucb_proof_mid11_RW}, we have
\begin{equation}
    \E\left[R(T)\right]\leq 4 (\beta_T+1)^{L} d^{\frac{L-1}{2}}\E\left[\sum_{t=1}^T  \lambda(t) 
      \right]  = \tilde{\mcO}(d^{L-\frac{1}{2}}\sqrt{NT}+d^{L-\frac{1}{2}} N C )\ .
\end{equation}
\end{proof}
\section{Discussion on \algonameLinSEMUCB}
\begin{lemma}
\label{lem:naive}
By setting the robust confidence radius 
\begin{equation}
\label{equ:betaprime}
    \beta'_t(\delta)=1+C m^2+ \sqrt{2\log(\delta)+d\log(1+m^2t/d)} \ .
\end{equation}
Then with probability at least $1-2\delta$, for any node $i\in[N]$ and $t\in\N$, the OLS estimator satisfies
\begin{equation}
    [\bB]_i\in\mcC_{i}(t) \ , \quad \text{and} \quad [\bB^*]_i\in\mcC^*_{i}(t) \ .
\end{equation}
\end{lemma}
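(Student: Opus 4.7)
My plan is to mirror the decomposition used in the proof of Lemma~\ref{lem:beta_Tindeviation}, but specialized to the uniform-weight case $w_i(s)=1$ that corresponds to \algonameLinSEMUCB. With unit weights, the two Gram matrices coincide, $\bV_i(t)=\widetilde{\bV}_i(t)$, so the confidence ellipsoid norm $\|\cdot\|_{\bV_i(t)[\widetilde{\bV}_i(t)]^{-1}\bV_i(t)}$ collapses to the standard ridge norm $\|\cdot\|_{\bV_i(t)}$. I will prove the observational statement $[\bB]_i\in\mcC_i(t)$, since the interventional statement $[\bB^*]_i\in\mcC^*_i(t)$ follows by an identical argument and a final union bound yields the probability $1-2\delta$.

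Starting from the closed-form expression for $[\bB(t)]_i$ in \eqref{eq:estimate_obs_weighted} and substituting the deviated SEM $X_i(s)=\bX_{\Pa(i)}^\top(s)[\bD(s)]_i+\epsilon_i(s)$, I would separate the error $[\bB(t)]_i-[\bB]_i$ (measured in the $\bV_i(t)$-norm) into the same two pieces as in \eqref{equ:errorDecomposition_RW}:
\begin{equation}
\|[\bB(t)]_i-[\bB]_i\|_{\bV_i(t)} \;\leq\; \underbrace{\|[\widehat{\bB}(t)]_i-[\bB]_i\|_{\bV_i(t)}}_{I_1} \;+\; \underbrace{\Big\|\sum_{s\in[t],\,i\notin a(s)} \bX_{\Pa(i)}(s)\bX_{\Pa(i)}^\top(s)[\Delta(s)]_i\Big\|_{\bV_i(t)^{-1}}}_{I_2}\ ,
\end{equation}
where $\widehat{\bB}(t)$ is the auxiliary ridge estimator obtained by removing the deviation contribution. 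For $I_1$, I apply the self-normalized concentration bound exactly as in Lemma~\ref{lem:beta_T_timeinvariant}, but now with $w_i(s)=1$, which yields $I_1 \leq 1+\sqrt{2\log(1/\delta)+d\log(1+m^2t/d)}$ with probability at least $1-\delta$.

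The only real work is bounding $I_2$. Unlike the weighted analysis in \eqref{UCB:boundI2}, I cannot telescope using $\|\bX_{\Pa(i)}(s)\|_{\widetilde{\bV}_i(s)^{-1}}$ paired against the inverse weights, so I fall back on a cruder chain: by the triangle inequality, then Cauchy–Schwarz, then $\|\bX_{\Pa(i)}(s)\|\leq m$ and $\bV_i(t)\succeq \bI$ (which gives $\|\bX_{\Pa(i)}(s)\|_{\bV_i(t)^{-1}}\leq m$),
\begin{align}
I_2 \;&\leq\; \sum_{s\in[t],\,i\notin a(s)} \big|\bX_{\Pa(i)}^\top(s)[\Delta(s)]_i\big|\cdot \|\bX_{\Pa(i)}(s)\|_{\bV_i(t)^{-1}} \\
&\leq\; m^2 \sum_{s\in[t],\,i\notin a(s)} \|[\Delta(s)]_i\| \;\leq\; C\, m^2\ ,
\end{align}
where the last step uses the aggregate-deviation budget (and, for the DF measure, absorbs $m_c$ into $C$). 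Adding the two bounds recovers $\beta'_t(\delta)$ in \eqref{equ:betaprime}. Repeating the argument with the interventional sums and observations and taking a union bound over the two events completes the proof. The main obstacle, and the key conceptual point, is precisely this step: without data-adaptive weights the fluctuation error must be paid in full as $Cm^2$, which is exactly why $\beta'_t$ scales linearly in $C$ and why \algonameLinSEMUCB\ ultimately loses sub-linear regret once $C$ grows.
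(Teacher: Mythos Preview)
Your proposal is correct and follows essentially the same approach as the paper: decompose the estimation error into a stochastic/regularization part handled by the standard self-normalized ridge concentration bound and a fluctuation part bounded crudely via triangle inequality, Cauchy--Schwarz, $\norm{\bX_{\Pa(i)}(s)}\leq m$, and $\bV_i(t)\succeq \bI$ to obtain $Cm^2$. The only cosmetic difference is that the paper splits the first piece into separate stochastic and regularization terms rather than combining them through the auxiliary estimator $[\widehat{\bB}(t)]_i$, but the resulting bounds are identical.
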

\begin{proof}
\label{proof:lem:naive}
We will provide the proof corresponding to the observational weights $|[\bB(t)]_i$, while the proof for the interventional weights $[\bB^*(t)]_i$ follows similarly. For any node $i\in[N]$, let us decompose the error in estimation $\|[\bB(t)]_i-\bB_i\|_{\bV_{i}(t)}$ as follows.
\begin{align}
     &\|[\bB(t)]_i-\bB_i\|_{\bV_{i}(t)}\\
     &=\biggl\|[\bV_{i}(t)]^{-1}  \!\!\!\!\!\!\! \sum_{s\in[t-1],i \notin a(t)}  \!\!\!\!\!\!\! \bX_{\Pa(i)}(s) \big[\bX^{\top}_{\Pa(i)}(s)\left([\bB]_i+[\Delta(s)]_i\right)+\epsilon_{i}(s)-\nu_i\big]-[\bB]_i\biggl\|_{\bV_{i}(t)}\\
    &\leq\underbrace{\biggl\| \sum_{s\in[t-1],i\notin a(t)}\!\!  \bX_{\Pa(i)}(s)(\epsilon_{i}(s)-\nu_i)\biggl\|_{[\bV_{i}(t)]^{-1}}}_{I_1\text{: Stochastic error}} \\
    &\quad+\underbrace{\biggl\|\sum_{s\in[t-1],i\notin a(t)}\!\!  \bX_{\Pa(i)}(s) \bX^{\top}_{\Pa(i)}(s)[\Delta(s)]_i\biggl\|_{[\bV_{i}(t)]^{-1}}}_{I_2\text{: Fluctuation error}} +\underbrace{\| [\bB]_i\|_{[\bV_{i}(t)]^{-1}}}_{I_3\text{: Regularization error}}\label{equ:errorDecomposition}~.
\end{align}
Similarly to~\cite[Theorem 20.5]{lattimore2020bandit}, with probability at least $1-\delta$ the sum of regularization error and the stochastic error  $I_1+I_3$ can be bounded by $1+\sqrt{2\log(\delta)+d\log(1+m^2T/d)}$. However, the fluctuation error $I_2$ arises due to model deviation, and it needs to be upper-bounded to obtain the confidence sequence for the OLS estimator. Similar to the existing literature on robust linear bandits~\cite{zhao2021linear,ding2022robust}, one can bound this term by triangle inequality and Cauchy-Schwarz inequality, as follows.
\begin{align}
    I_2 &\leq  \sum_{s\in[t-1],i\notin a(t)}\norm{ \bV_{i}(t)^{-1/2} \bX_{\Pa(i)}(s) \bX^{\top}_{\Pa(i)}(s)[\Delta_\bB(s)]_i}\nonumber\\
    &\leq m \sum_{s\in[t-1],i\notin a(t)} \| \bX_{\Pa(i)}(s)\|_{[\bV_{i}(t)]^{-1}} \norm{[\Delta_\bB(s)]_i}\nonumber\\
    &\leq C m^2\ .
\end{align}
Thus we conclude that with probability $1-\delta$
\begin{equation}
    \|[\bB(t)]_i-\bB_i\|_{\bV_{i}(t)}\leq \beta'_t(\delta) \ .
\end{equation}
\end{proof}
Then we have the following theorem for the regret of \algonameLinSEMUCB\ under model deviation.
\begin{theorem}
Under a deviation budget $C$, by setting $\delta=\frac{1}{2NT}$ and $\beta'_t(\delta)$ according to \eqref{equ:betaprime}, the average cumulative regret of \algonameLinSEMUCB is upper bounded by
\begin{equation}
     \E[R(T)]\leq 2m+\tilde{\mcO}(\beta_T^{\prime L} d^{\frac{L}{2}}\sqrt{NT})\ ,
\end{equation}
where $\beta_T^{\prime} = 1+C m^2+ \sqrt{2\log(2NT)+d\log(1+m^2t/d)}$
\end{theorem}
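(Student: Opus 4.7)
The plan is to mirror the proof of Theorem~5 in~\cite{varici2022causal}, substituting the robust confidence ellipsoid of Lemma~\ref{lem:naive} for its time-invariant counterpart. Because \algonameLinSEMUCB\ uses unweighted OLS, all of the weighted Gram-matrix machinery in the proof of Theorem~\ref{thm:measure2} collapses: the relevant matrices are simply $\bV_{i}(t)$ and $\bV_{i}^{*}(t)$, and the analysis reduces to the time-invariant case up to replacing the time-invariant radius $\beta_T$ by $\beta'_T$.

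First I would set $\delta=1/(2NT)$ and union-bound over the $2N$ concentration events from Lemma~\ref{lem:naive}, so that $[\bB]_{i}\in\mcC_{i}(t)$ and $[\bB^{*}]_{i}\in\mcC^{*}_{i}(t)$ for every $i\in[N]$ and $t\in[T]$ with probability at least $1-1/T$. On the complement event the per-round regret is trivially bounded by $2m$, contributing the additive constant $2m$ in the stated bound.

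On the good event I would use the UCB selection rule to replace $\mu_{a^{*}}$ by the UCB value at $a(t)$ and apply Lemma~\ref{lem:expectedvalue}, exactly as in \eqref{eq:ucb_proof_mid2_RW}--\eqref{eq:ucb_proof_mid5_2}, reducing the instantaneous regret to a sum over $\ell\in[L]$ of norms $\|[\widetilde\bB_{a(t)}^{\ell}-\bB_{a(t)}^{\ell}]_{N}\|$. Invoking Lemma~\ref{lm:bound_l_paths} with $\bM_{i}=\bV_{i,a(t)}(t)$ and $\beta=\beta'_T$ then bounds the compounding estimation error on the reward node by $d^{(\ell-1)/2}(\beta'_T+1)^{\ell}\max_{i}\lambda_{\min}^{-1/2}(\bV_{i,a(t)}(t))$. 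Summing over $\ell\in[L]$ yields a multiplicative factor of order $(\beta'_T+1)^{L}d^{(L-1)/2}$.

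Finally, to bound $\E\sum_{t=1}^{T}\max_{i}\lambda_{\min}^{-1/2}(\bV_{i,a(t)}(t))$ I would replicate the singular-value error events $\mcE_{i,n}(t),\mcE^{*}_{i,n}(t)$ from Appendix~\ref{proof:thm:measure2} with $w_{i}(t)\equiv 1$, so that $\kappa'_{\min}=\kappa_{\min}$ and $\gamma_{n}$ reverts to its un-weighted version. A direct union bound shows these singular-value events fail only with probability $\mcO(T^{-3/2})$; on their complement $\lambda_{\min}(\bV_{i,a(t)}(t))\gtrsim N_{i,a(t)}(t)\kappa_{\min}$ once $N_{i,a(t)}(t)$ exceeds the (constant) threshold $\tau$, and the same integral-comparison argument via the $h,g,G$ functions in Appendix~\ref{proof:thm:measure2} gives $\E\sum_{t=1}^{T}\max_{i}\lambda_{\min}^{-1/2}(\bV_{i,a(t)}(t))=\mcO(\sqrt{NT/\kappa_{\min}})$. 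Combining the pieces delivers $\E[R(T)]\leq 2m+\tilde{\mcO}(\beta_T^{\prime L}d^{L/2}\sqrt{NT})$.

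The main obstacle is conceptual rather than technical: because $\beta'_T$ scales \emph{linearly} in the deviation budget $C$ through the $Cm^{2}$ term in \eqref{equ:betaprime}, raising it to the $L$-th power in the final bound injects a $C^{L}$ factor. Consequently, sub-linear regret requires $C^{L}=o(\sqrt{T})$, i.e.\ $C=o(T^{1/(2L)})$, which is exactly the fragile regime highlighted in Section~\ref{sec:comparison} and that the weighted design of \algonameRWLinSEMUCB\ is engineered to avoid by keeping $\beta_T$ essentially constant in $C$.
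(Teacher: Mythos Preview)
Your proposal is correct and follows essentially the same approach as the paper, which simply states that the result ``directly follows \cite[Theorem 2]{varici2022causal} by changing the confidence radius.'' You have spelled out in detail the steps that the paper leaves implicit---union-bounding the concentration events of Lemma~\ref{lem:naive}, invoking Lemma~\ref{lm:bound_l_paths} with the unweighted Gram matrices, and reusing the singular-value bookkeeping with $w_i(t)\equiv 1$---and your closing remark about the $C^{L}$ blow-up is exactly the point made in Section~\ref{sec:comparison}.
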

\begin{proof}
    This theorem directly follows \cite[Theorem 2]{varici2022causal}
    by changing the confidence radius.
\end{proof}

\section{Proof of Lemma~\ref{lem:gh}}
\label{proof:lem:gh}
We first prove that $h(x)$ is a decreasing function. We start with the derivation of $h(x)$ and show that it is negative in the domain of $x\geq \tau$. To simplify the notations, let $c_1=\kappa_{\max}$, $c_2=\alpha m^2$, $c_3=\kappa_{\min}$. The first order derivative of $h(x)$ has the following closed-form.
\begin{equation}
     \frac{{\rm d}}{{\rm d}x} h(x)=\frac{(2c_2 - 4 c_3 \sqrt{x})C ^2+(c_2 +2c_1 \sqrt{x})C -2c_1 c_3 x^{3/2}+c_2^2\sqrt{x}-3c_2 c_3 x}{4\sqrt{x}\sqrt{c_1 x+c_2\sqrt{x}+C }(c_3x-c_2\sqrt{x}+C )^2} \ .
\end{equation}
Considering when $x>\tau$, the denominator of the derivative is positive, it is adequate to show the numerator is negative. We notice that the numerator is a quadratic function of $C$, and we have the following relation
\begin{equation}
    2c_2 - 4 c_3 \sqrt{x}\leq -2 c_2<0\ , \qquad c_2 +2c_1 \sqrt{x}>0\ .
\end{equation}
Thus, it is sufficient to show
\begin{align}
   &~ -\frac{(c_2+2c_1\sqrt{x})^2}{4(2c_2-3\sqrt{c_3})}-2c_1 c_3 x^{3/2} +c_2^2\sqrt{x}- 3c_2 c_3x<0 \label{equ:h_decrease:1}\\
    \Leftarrow ~&~ (c_2+2c_1\sqrt{x})^2 + 4 (-2c_1 c_3 x^{3/2} +c_2^2\sqrt{x}- 3c_2 c_3x) (3\sqrt{c_3}-2c_2)<0 \label{equ:h_decrease:2}\\
    \Leftarrow ~&~ (c_2+2c_1\sqrt{x})^2 + 8 (-2c_1 c_3 x^{3/2} -2c_2^2\sqrt{x}) c_2<0 \label{equ:h_decrease:3}\\
    \Leftarrow ~&~ (c_2^2+4 c_1c_2\sqrt{x} -16 c_2^3\sqrt{x}) + (4c_1^2 x -16c_1 c_3 x^{3/2})<0 \label{equ:h_decrease:4}\\
    \Leftarrow ~&~(2\alpha m^4-15\alpha^3m^6)\sqrt{x} + \kappa_{\max} (4m^2-16\alpha m^3)x<0\label{equ:h_decrease:5} \ ,
\end{align}
where \eqref{equ:h_decrease:3} is due to $2c_2 - 4 c_3 \sqrt{x}\geq -2 c_2>0$ and $c_2^2\sqrt{x}- 3c_2 c_3x \leq -2 c_2^2 \sqrt{x}<0$. The \eqref{equ:h_decrease:5} is true since $\alpha>2$ and the assumption $m \geq 1$.

From the above analysis, we can conclude that $h(x)$ is a decreasing function for $x>\tau$. Then we prove the $g(x)$ function is decreasing when $x>\tau$. The derivative of $g(x)$ can be calculated as
\begin{equation}
    \frac{{\rm d}}{{\rm d}x} g(x) = \frac{-2\kappa_{\max} \kappa_{\min}x/m +\alpha^2m^4 -3\alpha^2m^2 \kappa_{\min}\sqrt{x}/m}{4x\sqrt{\kappa_{\max}+\alpha m^2\sqrt{x}}(\kappa_{\min}x/m-\alpha m^2\sqrt{x})^2} - \frac{\kappa_{\min}/m - \frac{\alpha m^2}{2\sqrt{x}}}{(\kappa_{\min}x/m-\alpha m^2\sqrt{x})^2}<0 \ .
\end{equation}
We conclude that $h(x)$ and $g(x)$ are both decreasing when $x> \tau$.

\section{Proof of Lemma~\ref{lem:G}}
\label{proof:lem:G}
We start from the following results for integrals.
\begin{lemma}[Useful integral] For $a,b>0$, the following integrals hold.
\label{lem:integral}
\begin{align}
    \int \frac{1}{\sqrt{ax}-ab} {\rm d} x &= 2\frac{\sqrt{x}}{a}+2\frac{b}{a}\log(|\sqrt{x}-b|) + c \ . \\
    \int \frac{\sqrt[4]{x}}{ax-b\sqrt{x}} {\rm d} x &= \frac{4\sqrt[4]{x}}{a} - \frac{2\sqrt{b}}{a^{3/2}}\log\left(\left|\frac{1+\sqrt{\frac{a}{b}}\sqrt[4]{x}}{1-\sqrt{\frac{a}{b}}\sqrt[4]{x}}\right|\right) +c \ ,\\
    \int \frac{1}{ax-b\sqrt{x}} {\rm d} x &= \frac{2\log(|a\sqrt{x}-b|)}{a}+c \ .
\end{align} 
\end{lemma}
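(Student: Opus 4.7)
The plan is to establish each of the three antiderivative identities by elementary substitutions, or equivalently by differentiating the claimed right-hand side and verifying that it reproduces the integrand. Since each formula includes an arbitrary constant $c$, it suffices to show equality up to an additive constant. I will treat the three identities in the order they are stated.

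For the first identity, I would apply the substitution $u=\sqrt{x}$ (so $x=u^2$ and $dx=2u\,du$), which converts the integral into $\int\frac{2u}{a(u-b)}\,du$. Splitting as $\frac{2}{a}\int\bigl(1+\frac{b}{u-b}\bigr)du$ gives $\frac{2u}{a}+\frac{2b}{a}\log|u-b|+c$, which is exactly the claimed antiderivative after restoring $u=\sqrt{x}$. The same computation can also be checked directly by differentiating the stated right-hand side and collapsing $\frac{1}{a\sqrt{x}}+\frac{b}{a\sqrt{x}(\sqrt{x}-b)}$ over a common denominator to recover the integrand.

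For the third identity, the same substitution $u=\sqrt{x}$ yields $\int\frac{2u}{au^2-bu}\,du=\frac{2}{a}\int\frac{du}{u-b/a}=\frac{2}{a}\log|u-b/a|+c$. Pulling $a^{-1}$ inside the logarithm rewrites this as $\frac{2}{a}\log|a\sqrt{x}-b|+c$, which is precisely the asserted form.

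The second identity requires a little more care, and it is the step where I expect the most bookkeeping. The natural substitution is $u=\sqrt[4]{x}$, giving $dx=4u^3\,du$ and transforming the integrand into $\frac{4u^2}{au^2-b}\,du$. Long division splits this as $\frac{4}{a}+\frac{4b/a}{au^2-b}$, and the second piece is a standard rational integral whose antiderivative is $\frac{1}{2\sqrt{b/a}}\log\bigl|\frac{u-\sqrt{b/a}}{u+\sqrt{b/a}}\bigr|$. Assembling the pieces gives $\frac{4u}{a}+\frac{2\sqrt{b}}{a^{3/2}}\log\bigl|\frac{u-\sqrt{b/a}}{u+\sqrt{b/a}}\bigr|+c$. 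The only remaining task is a cosmetic reconciliation with the form stated in the lemma: multiplying numerator and denominator inside the log by $\sqrt{a/b}$ turns $\frac{u-\sqrt{b/a}}{u+\sqrt{b/a}}$ into $\frac{\sqrt{a/b}\,u-1}{\sqrt{a/b}\,u+1}$, whose absolute value equals the reciprocal of $\bigl|\frac{1+\sqrt{a/b}\,u}{1-\sqrt{a/b}\,u}\bigr|$; the logarithm of a reciprocal flips sign, producing the minus coefficient $-\frac{2\sqrt{b}}{a^{3/2}}$ in the lemma's stated form. I expect no genuine obstacle in the proof; the primary care needed is this sign/reciprocal reconciliation, which I would write out explicitly rather than absorb into ``routine algebra.''
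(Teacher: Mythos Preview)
Your proposal is correct. The paper actually states this lemma without proof, treating the three formulas as standard antiderivatives to be quoted; your elementary verification via the substitutions $u=\sqrt{x}$ (first and third identities) and $u=\sqrt[4]{x}$ (second identity), together with the sign/reciprocal reconciliation for the logarithmic term, is exactly the kind of check one would supply and contains no gaps.
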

Then the $G$ function can be upper bounded as
\begin{align}
    G_{\tau}\left(\frac{T}{2N}+4\tau\right) &= \int_{x=4\tau}^{\frac{T}{2N}+4\tau} g(x) {\rm d}x \\
    &=\int_{x=4\tau}^{\frac{T}{2N}+4\tau} \frac{\sqrt{x \kappa_{\max}+ \alpha m^2\sqrt{x}}}{ x\kappa_{\min}/m- \alpha m^2\sqrt{x}} {\rm d}x + \int_{x=4\tau}^{\frac{T}{2N}+4\tau} \frac{C }{ x\kappa_{\min}/m- \alpha m^2\sqrt{x}} {\rm d}x \\
    &\leq \int_{x=4\tau}^{\frac{T}{2N}+4\tau} \sqrt{\frac{m\kappa_{\max}}{\kappa_{\min}}} \frac{1}{\sqrt{x\kappa_{\min}/m- \alpha m^2\sqrt{x}}}{\rm d}x\nonumber\\
    &\quad + \int_{x=4\tau}^{\frac{T}{2N}+4\tau}
    \sqrt{\alpha m^2(1+\frac{m\kappa_{\max}}{\kappa_{\min}})}\frac{x^{1/4}}{ x\kappa_{\min}/m- \alpha m^2\sqrt{x}} {\rm d}x \nonumber\\
    & \quad  + \int_{x=4\tau}^{\frac{T}{2N}+4\tau} \frac{C }{ x\kappa_{\min}/m- \alpha m^2\sqrt{x}} {\rm d}x \\ 
    &\leq \int_{x=4\tau}^{\frac{T}{2N}+4\tau} \sqrt{\frac{m^2\kappa_{\max}}{\kappa_{\min}}} \frac{1}{\sqrt{x\kappa_{\min}}-\sqrt{\tau\kappa_{\min}}}{\rm d}x \label{equ:G_bound:mid}\nonumber\\ 
    &\quad+ \int_{x=4\tau}^{\frac{T}{2N}+4\tau}
    \sqrt{\alpha m^2(1+\frac{m\kappa_{\max}}{\kappa_{\min}})}\frac{x^{1/4}}{ x\kappa_{\min}/m- \alpha m^2\sqrt{x}} {\rm d}x \nonumber\\
    &\quad + \int_{x=4\tau}^{\frac{T}{2N}+4\tau} \frac{C }{ x\kappa_{\min}/m- \alpha m^2\sqrt{x}} {\rm d}x \\ 
    &<2\frac{\sqrt{m \kappa_{\max}}}{\kappa_{\min}}\left(\sqrt{\frac{T}{2N}+4\tau}+\sqrt{\tau} \log\left(\sqrt{\frac{T}{2N}+4\tau} - \sqrt{\tau}\right)- 2\sqrt{\tau}\right)\nonumber\\
    &\quad + \frac{4\sqrt[4]{\frac{T}{2N}+4\tau}}{\kappa_{\min}} + 2\sqrt{\frac{\alpha m^5}{\kappa_{\min}^3}} \log \left(\frac{\sqrt{\frac{\kappa_{\min}}{\alpha m^3}}\sqrt[4]{\frac{T}{2N}+4\tau}+1}{\sqrt{\frac{\kappa_{\min}}{\alpha m^3}}\sqrt[4]{\frac{T}{2N}+4\tau}-1}\right)- \frac{4\sqrt[4]{4\tau}}{\kappa_{\min}}\nonumber\\
    &\quad + \frac{2m\log\left(\frac{\kappa_{\min}}{m}\sqrt{\frac{T}{2N}+4\tau}-\alpha m^2\right)}{\kappa_{\min}} C \label{equ:G_bound_ec:int} \\
    &\leq 2\frac{\sqrt{m \kappa_{\max}}}{\kappa_{\min}}\left(\sqrt{\frac{T}{2N}}+\sqrt{\tau} \log\left(\sqrt{\frac{T}{2N}} + \sqrt{\tau}\right)\right)\nonumber\\
    &\quad+ \frac{4}{\kappa_{\min}}\sqrt[4]{\frac{T}{2N}} + 2\sqrt{\frac{\alpha m^5}{\kappa_{\min}^3}} \log \left(\frac{\sqrt{\frac{1}{\tau}}\sqrt[4]{\frac{T}{2N}}+\sqrt[4]{4}+1}{\sqrt{\frac{1}{\tau}}\sqrt[4]{\frac{T}{2N}}+\sqrt[4]{4}-1}\right)\nonumber\\
    &\quad + \frac{2m\log\left(\frac{\kappa_{\min}}{m}\sqrt{\frac{T}{2N}}+\alpha m^2\right)}{\kappa_{\min}} C  \ . \label{equ:G_bound_ec:final} 
\end{align}
where \eqref{equ:G_bound:mid} is due to the inequality $\sqrt{x-y} \geq \sqrt{x}-\frac{y}{\sqrt{x}}$ when $x\geq y$, and we use Lemma~\ref{lem:integral} in  \eqref{equ:G_bound_ec:int}.

\bibliographystyle{IEEEbib}
\bibliography{main}

\end{document}